\begin{document}
\title{Cumulative Prospect Theory Meets Reinforcement Learning: Prediction and Control}

\author[1]{Prashanth L.A.\thanks{prashla@isr.umd.edu}}
\author[2]{Cheng Jie\thanks{cjie@math.umd.edu}}
\author[3]{Michael Fu\thanks{mfu@isr.umd.edu}}
\author[4]{Steve Marcus\thanks{marcus@umd.edu}}
\author[5]{Csaba Szepesv\'ari\thanks{szepesva@cs.ualberta.ca}}
\affil[1]{\small Institute for Systems Research, University of Maryland}
\affil[2]{\small Department of Mathematics, University of Maryland}
\affil[3]{\small Robert H. Smith School of Business \& Institute for Systems Research,
University of Maryland}
\affil[4]{\small Department of Electrical and Computer Engineering \& Institute for Systems Research,
University of Maryland}
\affil[5]{\small Department of Computing Science,
University of Alberta}

\renewcommand\Authands{ and }

\date{}

\maketitle

\begin{abstract}
Cumulative prospect theory (CPT) is known to model human decisions well, with substantial empirical evidence supporting this claim. 
CPT works by distorting probabilities and is more general than the classic expected utility and coherent risk measures. We bring this idea to a risk-sensitive reinforcement learning (RL) setting and design algorithms for both estimation and control.
The RL setting presents two particular challenges when CPT is applied: estimating the CPT objective requires estimations of the {\it entire distribution} of the value function and finding a {\it randomized} optimal policy.
The estimation scheme that we propose uses the empirical distribution to estimate the CPT-value of a random variable. We then use this scheme in the inner loop of a CPT-value optimization procedure that is based on the well-known simulation optimization idea of simultaneous perturbation stochastic approximation (SPSA).
We provide theoretical convergence guarantees for all the proposed algorithms and also 
illustrate the usefulness of CPT-based criteria in a traffic signal control application.
%empirically demonstrate the usefulness of our algorithms. 
\end{abstract}

% \keywords{
% Cumulative prospect theory, reinforcement learning, Service systems,  labor optimization, Adaptive labor staffing, Simultaneous
% perturbation stochastic approximation.
% }

\section{Introduction}
\label{sec:introduction}

%!TEX root =  cpt-rl-icml.tex
Since the beginning of its history, mankind has been deeply immersed in 
	designing and improving systems to serve humans needs.
Policy makers are busy with designing 
	systems that serve the education, transportation, economic, health and other 
	needs of the public,
while private sector enterprises or hard at creating 
	and optimizing systems to serve further 
	more specialized needs of their customers.
While it has been long recognized that 
	understanding human behavior is a prerequisite 
	to best serving human needs \citep[e.g.,]{Simon:1959kd},
%	and the private sector has adopted this strategy a while ago \citet{}
%	and the private sector quickly adopted this strategy,
	it is only recently that this approach is gaining a wider recognition.%
\footnote{
As evidence for this wider recognition in the public sector,
we can mention a recent executive order of the White House
calling for the use of behavioral science in public policy making, 
or the establishment of the ``Committee on Traveler Behavior and Values'' in the Transportation
Research Board in the US.}
%Using Behavioral Science Insights to Better Serve the American People

In this paper we consider \emph{human-centered reinforcement learning problems}
where the  reinforcement learning agent controls a system 
to produce long term outcomes (``return'') that are maximally aligned with the preferences of 
one or possibly multiple humans, an arrangement shown on Figure~\ref{fig:flow}.
As a running example, consider traffic optimization where the goal is to maximize
travelers' satisfaction, a challenging problem 
%many may agree is  still inadequately addressed today, at least 
in big cities.
In this example, the outcomes (``return'') are travel times, or delays. 
To capture human preferences, the outcomes are mapped to a single numerical quantity.
%We will make the assumption that human preferences can 
%To create a single numerical quantity that faithfully captures human preferences, 
%a \emph{risk metrics}, mapping the random returns to some scalar deterministic quantity, is used.
While preferences of rational agents facing uncertain situations can be modeled using expected utilities 
(i.e., the expectation of a nonlinear transformation, such as the exponential function, of the rewards or costs) 
\citep{NeuMo44,fishburn1970expectedutility}, 
it is well known that
\todoc{Some use the world uncertainty in relation to situations
which can not be modeled probabilistically, and use risk in relation to situations that can be modeled probabilistically.
So we are inconsistent with these people.}
	humans are subject to various emotional and cognitive biases,
	and, the psychology literature agrees that human preferences 
	are inconsistent with expected utilities regardless of what nonlinearities are used
	 \citep{allais53,ellsberg61,kahneman1979prospect}.
An approach that gained 
%Behavioral scientists use alternate approaches to model human preferences.
	strong support amongst psychologists, behavioral scientists and economists  \citep[e.g.,][]{starmer2000developments,quiggin2012generalized}
	is based on \cite{kahneman1979prospect}'s celebrated \emph{prospect theory} (PT).
Therefore, in this work, we will base our models of human preferences on this theory.
More precisely, we will use \emph{cumulative prospect theory} (CPT),
 	a later, refined variant of prospect theory due to \cite{tversky1992advances}, 
	which is even more empirically and theoretically supported than prospect theory \citep[e.g.,][]{Barberis:2012vs}.
CPT generalizes expected utility theory in that in addition to having a utility function transforming
	the outcomes, another function is introduced which distorts the probabilities in the cumulative distribution function.
As compared to prospect theory, CPT is monotone with respect to stochastic dominance, a property
	that is thought to be useful and (mostly) consistent with human preferences\footnote{See Appendix \ref{sec:appendix-cpt-intro} for an introduction to PT/CPT and a description of the Allais paradox.}.	
	
\if0	

Popular approaches that use such risk metrics include the exponential utility formulation 
(cf. \cite{borkar2010learning}) that implicitly controls the variance.
An alternative is a to consider constrained formulations 
with explicit constraints on the variance of the return (cf. \cite{tamar2012policy,Prashanth13AC}). 
Another constraint alternative is to bound a coherent risk measure such as Conditional Value-at-Risk (CVaR), 
while minimizing the usual cost objective (cf. \cite{borkar2010risk,prashanth2014policy}).  

The risk metrics underlying the above-mentioned works 
are based on the assumption that human decision makers are rational and/or consistent.
While this may hold in certain restricted settings, a large body of literature indicates that humans are neither rational,
\todoc{Add literature supporting this. At least three books:)}
nor consistent (which, in fact, is an unsurprising fact, at least in the experience of the authors of the paper).
In other words, traditional approaches are based on the belief that optimizing the expected utility (EU) is appealing for human subjects. However, there is substantial evidence that this is not case - see 
the survey article \cite{starmer2000developments} and Chapter 4 of the book \cite{quiggin2012generalized}. In particular, the aforementioned references describe the Allais and Ellsberg paradoxes popular among economists for arguing against EU. 
Thus, if the goal is to produce outcomes that are best aligned with human preferences,
an alternative approach is required.
A singularly popular and successful approach in behavioral science and economics
is based on \textit{prospect theory (PT)} \cite{kahneman1979prospect} 
and its later enhancement, the so-called \textit{cumulative prospect theory} (CPT) \cite{tversky1992advances}.
CPT is a rank dependent expected utility model \cite{quiggin2012generalized} that incorporates decision weights to distort probabilities. 
The suitability of this approach to model human decision making (and thus preferences) has been widely documented \cite{prelec1998probability}, \cite{wu1996curvature}, \cite{conlisk1989three}, \cite{camerer1989experimental}, \cite{camerer1992recent}, \cite{harless1992predictions}, \cite{sopher1993test}, \cite{camerer1994violations}, \cite{gonzalez1999shape}, \cite{abdellaoui2000parameter}.
PT/CPT has been applied in a variety of domains, for e.g., healthcare \cite{lenert1999associations},  seismic design \cite{goda2008application}, transportation \cite{gao2010adaptive},\cite{fujii2004drivers}, \cite{ramming2001network}, online auctions \cite{weinberg2005exploring}, insurance  \cite{machina1995non} and finance \cite{barberis1999prospect}, \cite{epstein1989substitution}, \cite{epstein1991substitution}.
\fi

\begin{figure}[h]
\centering
\tabl{c}{
\includegraphics[width=3.8in]{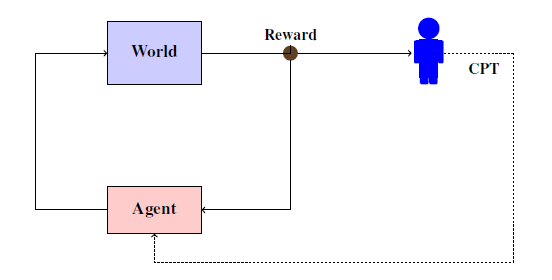}}
%\scalebox{0.75}{\begin{tikzpicture}
%\node [block,fill=blue!20, minimum height=4em,] (world) {\makecell{\large\bf World}}; 
%\node [block,fill=red!20, below=2.5cm of world] (agent) {\makecell{\large\bf Agent}}; 
%\node [circle,fill=brown!50!black,right=2cm of world,label=above:{\bf Reward}] (tmp) {};
%\coordinate [below left=2.5cm of world] (tmp2);
%\coordinate [below right=1cm of agent] (belowagent);
%
%\draw [->,thick] (world)  -| (tmp) |-   (agent);
%\draw [->,thick] (agent)  -| (tmp2) |-   (world);
%%\draw [->] (agent) --   (world);
%\draw pic (person)  [right =3cm of tmp] {person={blue}{50pt}{0}};
%\coordinate [right=2.8cm of tmp] (tmp3);
%\coordinate [right=3.3cm of tmp] (tmp31);
%\coordinate [right=1cm of tmp3] (belowhuman);
%\coordinate [right=2.5cm of tmp3] (tmp4);
%\draw [->,thick] (tmp) --   (tmp3);
%\draw [-,thick,dotted] (tmp31)  -- node[below right=0.1cm and 0.5cm] {\textbf{CPT}} (belowhuman) --  (tmp4);
%\draw [->,thick,dotted] (tmp4)  |-  (belowagent) -|  (agent.south);
%\end{tikzpicture}}
\caption{Operational flow of a human-based decision making system
%\todoc[inline]{Can we add next to the human a heart or something expressing that the human has feelings/preferences?
%Alternatively, make an arrow that comes out from the human to the right, and write CPT on it.
%}
}
\label{fig:flow}
\end{figure}
%\begin{tikzpicture}
  %[
    %every pin edge/.append style={latex-, shorten <=-2.5pt},
  %]

\if0
As illustrated in Figure \ref{fig:flow}, we consider a typical RL setting where the environment is unknown, but can be experimented with and propose a CPT based risk metric as the long-term performance objective.  \todoc{This para may need to be rewritten in light that I rewrote the previous one.}
\todoc{Risk measure is a technical term according to wikipedia. Risk metric does not seem to have this technical meaning so I propose using risk metric everywhere.}
CPT is a non-coherent and non-convex measure \todoc{I find it strange to emphasize only these aspects.
What's the goal of announcing these here?}
that is well known among psychologists and economists to be a good model for human decision-making systems, with strong empirical support.
To put it differently, CPT captures well the way humans evaluate outcomes and hence, we offer a CPT-variant of the RL notion of ``value function''. Unlike the regular value function which is the expectation of the return random variable, CPT-value employs a functional that distorts the underlying probabilities. The latter is achieved by fitting CPT model parameters to capture human preferences. The goal then for the learning system is to find a policy that maximizes the CPT-value of ``return''. 
\fi
%
%In the realm of sequential decision making under uncertainty, we propose a CPT-based risk measure.  In particular,  
%The current RL solutions cannot handle distortions and my current work is to develop both prediction and control schemes for probabilistically distorted MDPs.
%
%\todop[inline]{Add refs for distorted weights}
%In this paper, we consider a risk measure based on \textit{cumulative prospect theory} (CPT) \cite{tversky1992advances}, which is a non-coherent and non-convex measure that is well known among psychologists and economists to be a good model for human decision-making systems, with strong empirical support. In this paper, we incorporate CPT-based criteria into the classic objective \textit{value function} in a reinforcement learning framework. Intuitively this combination is appealing because it taps into the notion of how humans evaluate outcomes and also, a CPT objective leads to a randomized policy, which although harder to estimate often leads to more intuitively appealing behavior, as illustrated via an example below and also the numerical experiments later.

%%%%%%%%%%%%\
\paragraph{Our contributions:}

To our best knowledge, we are the first to investigate (and define) human-centered RL, and, in particular, 
this is the first work to combine CPT with RL. Although on the surface the combination may seem straightforward, in fact there are many research challenges that arise from trying to apply a CPT objective in the RL framework, as we will soon see. 
We outline these challenges as well as our solution approach below. 

The first challenge stems from the fact that the CPT-value assigned to a random variable is defined through a nonlinear transformation of certain cumulative distribution functions associated with the random variable (cf. \cref{sec:cpt-val} for the definition). 
Hence, even the problem of estimating the CPT-value given a random sample requires some effort.
%\textbf{\textit{Prediction:}} In the case of the classic value function, which is an expectation, a simple sample mean can be used for estimation, facilitating the use of temporal difference type algorithms. On the other hand, 
%CPT-value   involves a distribution that is distorted using non-linear weight functions and hence, 
%requires that the \textit{entire} distribution to be estimated.\\ 
%\textit{Solution:} 
In this paper, we consider a natural quantile-based estimator and analyze its behavior.
Under certain technical assumptions, we prove consistency and sample complexity bounds, the latter based on the
 Dvoretzky-Kiefer-Wolfowitz (DKW) theorem.
As an example, we show that the sample complexity for estimating the CPT-value 
for Lipschitz probability distortion (so-called ``weight'') functions is  $O\left(\frac1{\epsilon^2}\right)$, which coincides with the canonical rate for Monte Carlo-type schemes. Since weight-functions that fit well to human preferences are only  \holder continuous, we also consider this case and find that (unsurprisingly) the sample complexity  jumps to $O\left(\frac1{\epsilon^{2/\alpha}}\right)$ where $\alpha\in (0,1]$ is the weight function's \holder exponent.
%However, for weight functions which are only \holder continuous with exponent $\alpha \in (0,1)$,
%Thus, at least in this case, 
%At the same time, 
% while \holder continuous with constant $\alpha$ results in a rate  $O\left(\frac1{\epsilon^{2/\alpha}}\right)$, for achieving a $\epsilon$-close estimate.\\

The work on estimating CPT-values forms the basis of the algorithms that we propose to maximize CPT-values based on interacting either with a real environment, or a simulator. We set up this problem as an instance of policy search: We consider smoothly parameterized policies whose parameters are tuned via stochastic gradient ascent. For estimating gradients, we use two-point randomized gradient estimators, borrowed from simultaneous perturbation stochastic approximation (SPSA), a widely used algorithm in \textit{simulation optimization} \cite{fu2015handbook}.
%\textbf{\textit{Control:}} 
%Designing algorithms in order to find a \textit{CPT-optimal} policy is challenging because CPT-value is a non-coherent and non-convex risk measure that does not lend itself to dynamic programming approaches such as value/policy iteration due to the lack of a ``Bellman equation''. 
%Thus, it is necessary to design new simulation optimization scheme that use sample CPT-value estimates to optimize the policy, which is generally \textit{randomized}. While classic simulation optimization settings usually have a zero mean noise in function evaluations, our setting one has to tradeoff simulation cost with the bias in a manner such that the resulting policy optimization scheme cancels the bias effect and converges. \\
%\textit{Solution:} 
%Using the well-known idea of simultaneous perturbation stochastic approximation (SPSA) from the \textit{simulation optimization} literature \cite{fu2015handbook}, we propose and analyze a gradient method for CPT value optimization in sequential decision making problems
%under uncertainty.
Here a new challenge arises which is that we can only feed the two-point randomized gradient estimator with \emph{biased} estimates of the CPT-value. To guarantee convergence, we propose a particular way of controlling the arising bias-variance tradeoff.
%Since SPSA by default assumes that the unbiased point-estimates of the optimization objective are available, which
%is not the case in our problem, we extend the analysis of SPSA to handle the case when the optimization objective estimates are noisy, with controlled bias.
%
%We derive the condition that specifies the rate at which the number of samples for predicting the CPT-value should increase such that the bias of CPT-value estimates vanishes asymptotically (see (A3) later).
%
%, while the second is a Newton algorithm that also uses SPSA-based estimates of the gradient and also the Hessian. We remark again that, unlike traditional settings for SPSA, our estimates for CPT-value have a non-zero (albeit controlled) bias. We establish that our algorithms converge to a locally CPT-value optimal policy. 

To put things in context, risk-sensitive reinforcement learning problems are generally hard to solve. 
For a discounted MDP, \citet{Sobel82VD} showed that there exists a Bellman equation for the variance of the return, but the underlying Bellman operator is not necessarily monotone and this rules out policy iteration as a solution approach for variance-constrained MDPs.
Further, even if the transition dynamics are known, \citet{mannor2013algorithmic} show that finding a globally mean-variance optimal policy in a discounted MDP is NP-hard.
For average reward MDPs, \citet{filar1989variance} motivate a different notion of variance and then provide NP-hardness results for finding a globally variance-optimal policy.
CVaR as a risk measure is equally complicated as the measure here is a conditional expectation, where the conditioning is on a low probability event. Apart from the hardness of finding CVaR-optimal solutions, estimating CVaR for a fixed policy in a typical RL setting itself is a challenge considering CVaR relates to rare events and to the best of our knowledge, there is no algorithm with theoretical guarantees to estimate CVaR without wasting a lot of samples. There are proposals based on importance sampling (cf. \citealt{prashanth2014policy,tamar2014optimizing}), but they lack theoretical guarantees. \todoc{I do not know the simulation optimization literature very well, but I think a major topic is to estimate CVaR and other tail measures and there are many methods with guarantees. Books are written about this. For example, Kroese et al., or Gerardo Rubino, Bruno Tuffin:	Rare event simulation using Monte Carlo methods.}

We derive a \textit{provably} sample-efficient \todoc{lower bound? then mention it earlier} scheme for estimating the CPT-value (see next section for a precise definition) for a given policy and use this as the inner loop in a policy optimization scheme. Finally, we point out that the CPT-value that we define is a generalization of the above previous works in the sense that one can recover the regular value function and the risk measures such as VaR and CVaR by appropriate choices of a the distortions used in the definition of the CPT value.

The work closest to ours is by \cite{lin2013stochastic}, who proposes a CPT-measure for an abstract MDP setting.
 %(see \cite{bertsekas2013abstract}). 
 We differ from \cite{lin2013stochastic} in several ways:
\begin{inparaenum}[\it (i)]
\item We do not assume a nested structure for the CPT-value %\eqref{eq:cpt-mdp} 
and this implies the lack of a Bellman equation for our CPT measure;
\item we do not assume model information, i.e., we operate in a model-free RL setting. Moreover, we develop both estimation and control algorithms with convergence guarantees for the CPT-value function.
\end{inparaenum}

The rest of the paper is organized as follows: 
In Section~\ref{sec:cpt-val}, we introduce the notion of CPT-value of a random variable $X$.
In Section~\ref{sec:cpt-sampling}, we
describe a quantile-based scheme for estimating the CPT-value. In Section \ref{sec:1spsa}, we present a gradient-based algorithm for optimizing the CPT-value. We present the simulation results for a traffic signal control application in Section~\ref{sec:expts} and finally, provide the concluding remarks in Section~\ref{sec:conclusions}.
Appendix \ref{sec:appendix-cpt-intro} provides background material for CPT and Appendix \ref{sec:cpt-ssp} makes a special case of the CPT-value in a stochastic shortest path problem.  We provide the proofs of convergence for all the proposed algorithms in Appendices \ref{appendix:cpt-est}--\ref{appendix:1spsa}. Further, Appendix \ref{sec:2spsa} describes a second-order algorithm for CPT-value optimization.
%%%%%%%%%%%%%5
\section{CPT-value}
\label{sec:cpt-val}
%!TEX root =  cpt-rl-icml.tex

For a real-valued random variable $X$, we introduce a ``CPT-functional'' that replaces the traditional expectation operator. 
%Subsequently, we specialize $X$ to be the return of stochastic shortest path problem. 
\todoc{Can't we just remove that section and instead simply explain that whatever way you get returns given a policy, you can apply the CPT-functional to the returns? In particular, state observability, inherent in MDP formulations, is totally unnecessary. As is to assume finite state or action spaces.}
%\subsection{General definition}
%\label{sec:cptdef}
The CPT-value of the random variable $X$ is defined as
\begin{align}
\C_{u,w}(X) = &\intinfinity w^+(P(u^+(X)>z)) dz
& - \intinfinity w^-(P(u^-(X)>z)) dz, \label{eq:cpt-general}
\end{align}
where $u=(u^+,u^-)$, $w=(w^+,w^-)$, $u^+,u^-:\R\rightarrow \R_+$ and $w^+,w^-:[0,1] \rightarrow [0,1]$ are continuous (see assumptions (A1)-(A2) in Section \ref{sec:cpt-sampling} for precise requirements on $u$ and $w$). For notational convenience, since $u,w$ will be fixed, we drop the dependence on $u,w$ and use $\C(X)$ to denote the CPT-value.
\cref{fig:u} shows an example of the utility functions $u= (u^+,u^-)$ and how they relate to each other, while \cref{fig:w} shows an example of a typical weight function.

 \begin{figure}[t]
   \centering
\tabl{c}{
\includegraphics[width=3.8in]{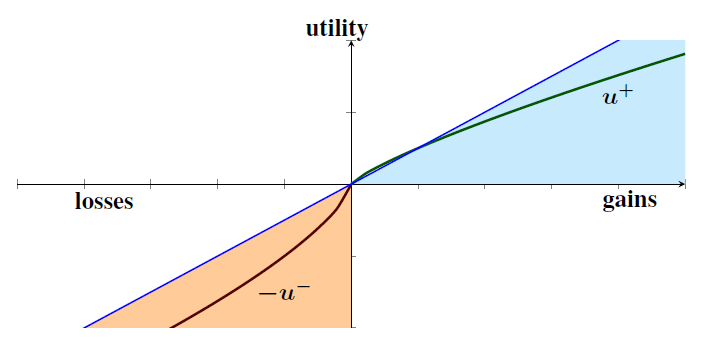}}
\caption{An example of a utility function.}
\label{fig:u}
\end{figure}

In the definition, $u^+, u^-$ are utility functions corresponding to gains ($X \ge 0$) and losses ($X \le 0$), respectively. For example, consider a scenario where one can either earn \$$500$ w.p $1$ or earn \$$1000$ w.p. $0.5$ (and nothing otherwise). The human tendency is to choose the former option of a certain gain. If we flip the situation, i.e., a certain loss of \$$500$ or a loss of \$$1000$ w.p. $0.5$, then humans choose the latter option.  Handling losses and gains separately is a salient feature of CPT, and this addresses the tendency of humans to play safe with gains and take risks with losses - see Fig \ref{fig:u}.  In contrast, the traditional value function makes no such distinction between gains and losses.  

The functions $w^+, w^-$, called the weight functions, capture the idea that humans deflate high-probabilities and inflate low-probabilities.
For example, humans usually choose a stock that gives a large reward, e.g., one million dollars w.p. $1/10^6$ over one that gives \$$1$ w.p. $1$ and the reverse when signs are flipped. 
Thus the value seen by the human subject is non-linear in the underlying probabilities -- an observation backed by strong empirical evidence \citep{tversky1992advances,Barberis:2012vs}.  
In contrast,the traditional value function is linear in the underlying probabilities. 
As illustrated with $w=w^+=w^-$ in Fig \ref{fig:w}, the weight functions are continuous, non-decreasing and  have the range $[0,1]$ with $w^+(0)=w^-(0)=0$ and $w^+(1)=w^-(1)=1$. 
\citet{tversky1992advances} recommend $w(p) = \frac{p^{\eta}}{{(p^{\eta}+ (1-p)^{\eta})}^{1/\eta}}$, while \citet{prelec1998probability} recommends $w(p) = \exp(-(-\ln p)^\eta)$, with $0 < \eta <1$. In both cases, the weight function has the inverted-s shape.
% which is seen to be a good fit from empirical tests on human subjects - see \cite{conlisk1989three}, \cite{camerer1989experimental}, \cite{camerer1992recent}, \cite{harless1992predictions}, \cite{sopher1993test}, \cite{camerer1994violations}, \cite{gonzalez1999shape}, \cite{abdellaoui2000parameter}.  
\if0
Weight functions can explain nonlinear probability distortions, as illustrated by the following example: \\
\textit{\textbf{[Stock 1]}} This investment results in a gain of \$$10$ with probability (w.p.) $0.1$ and a loss of \$$500$ w.p. $0.9$. The expected return is \$$-449$, but this does not necessarily imply that ``human'' investors' evaluation of the stock is \$$-449$. Instead, it is very likely that the humans evaluate it to a higher value, e.g. \$$-398$ ($=$ gain w.p. $0.2$ and loss w.p. $0.8$).\footnote{See Table 3 in \cite{tversky1992advances} to know why such a human evaluation is likely.}\\
\textit{\textbf{[Stock 2]}} loss of \$$10$ w.p. $0.9$, gain \$$500$ w.p. $0.1$. Expected return: \$$41$; Human evaluation: \$$92$ ($=$ loss w.p. $0.8$).\\
\textit{\textbf{[Stock 3]}} loss of \$$10$ w.p. 0.1, gain \$$500$ w.p. $0.9$. Expected return: \$$449$; Human evaluation: \$$398$ ($=$ loss w.p. $0.2$). 
\fi
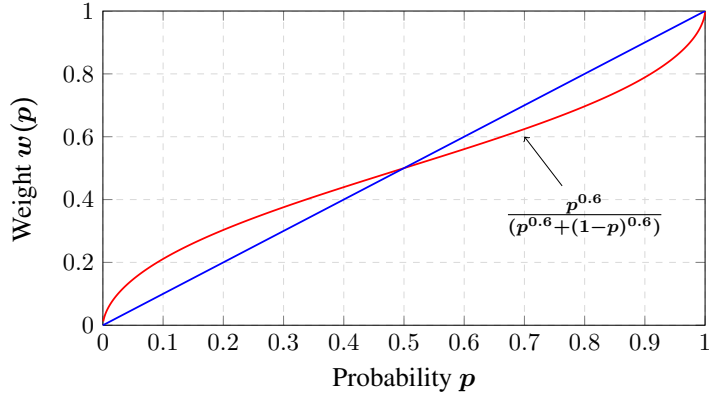
\begin{figure}[t]
\centering
\tabl{c}{
  \scalebox{0.85}{\begin{tikzpicture}
  \begin{axis}[width=11cm,height=6.5cm,legend pos=south east,
           grid = major,
           grid style={dashed, gray!30},
           xmin=0,     % start the diagram at this x-coordinate
           xmax=1,    % end   the diagram at this x-coordinate
           ymin=0,     % start the diagram at this y-coordinate
           ymax=1,   % end   the diagram at this y-coordinate
           axis background/.style={fill=white},
           ylabel={\large Weight $\bm{w(p)}$},
           xlabel={\large Probability $\bm{p}$}
           ]
          \addplot[domain=0:1, red, thick,smooth,samples=1500] 
             {pow(x,0.6)/(pow(x,0.6) + pow(1-x,0.6))}; 
             \node at (axis cs:  0.8,0.35) (a1) {\large $\bm{\frac{p^{0.6}}{(p^{0.6}+ (1-p)^{0.6})}}$};           
             \draw[->] (a1) -- (axis cs:  0.7,0.6);
                 \addplot[domain=0:1, blue, thick]           {x};                      
  \end{axis}
  \end{tikzpicture}}\\[1ex]
}
\caption{An example of a weight function.}
\label{fig:w}
\end{figure}

A few remarks are in order.
\begin{remark}\textit{(RL applications)}
The CPT-value, as defined in \eqref{eq:cpt-general}, has several applications in RL. In general, for any problem setting, one can define the return for a given policy and then apply CPT-functional on the return. For instance, with a fixed policy, the r.v. $X$ could be the total reward in a stochastic shortest path problem or the infinite horizon cumulative reward in a discounted MDP or the long-run average reward in an MDP - See Appendix \ref{sec:cpt-ssp} for one such application. 
\end{remark}

\begin{remark}\textit{(Generalization)}
It is easy to see that the CPT-value is a generalization of the traditional expectation, as a choice of identity map for the weight and utility functions in \eqref{eq:cpt-general} recovers the expectation of $X$.  It is also possible to get \eqref{eq:cpt-general} to coincide with risk measures (e.g. VaR and CVaR) by appropriate choice of weight functions.
\end{remark}

\begin{remark}\textit{(Sensitivity)}
Traditional EU-based approaches are sensitive to modeling errors as illustrated in the following example: 
Suppose stock $\cal{A}$ gains \$$10000$ w.p $0.001$ and loses nothing w.p. $0.999$, while stock $\cal B$ surely gains $11$. With the classic value function objective, it is optimal to invest in stock $\cal B$ as it returns $11$,  while $\cal A$ returns $10$ in expectation (assuming utility function to be the identity map). Now, if the gain probability for stock $\cal A$ was $0.002$, then it is no longer optimal to invest in stock $\cal B$ and investing in stock $A$ is optimal.
Notice that a very slight change in the underlying probabilities resulted in a big difference in the investment strategy and a similar observation carries over to a multi-stage scenario (see the house buying example in the numerical experiments section). 
 %A randomized policy that $50$\% in stock $\cal A$ and the rest in a risk-free asset is less sensitive to the error in under-estimating the loss probability. 

Using CPT makes sense because it inflates low probabilities and thus can account for modeling errors, especially considering that model information is unavailable in practice.
Note also that in MDPs with expected utility objective, there exists a deterministic policy that is optimal. However, with CPT-value objective, the optimal policy is \textit{not necessarily} deterministic - See also the organ transplant example on pp. 75-81 of \cite{lin2013stochastic}. 
\end{remark}

%%%%%%%%%%%%%%%%%%%%%%%%%%%%%%%%%%%%%%%%%%%%%%%%%%%%%%%%%%%%%%
%%%%%%%%%%%%%%%%%%%%%%%%%%%%%%%%%%%%%%%%%%%%%%%%%%%%%%%%%%%%%%
%%%%%%%%%%%%%%%%%%%%%%%%%%%%%%%%%%%%%%%%%%%%%%%%%%%%%%%%%%%%%%
%%%%%%%%%%%%%%%%%%%%%%%%%%%%%%%%%%%%%%%%%%%%%%%%%%%%%%%%%%%%%%
%%%%%%%%%%%%%%%%%%%%%%%%%%%%%%%%%%%%%%%%%%%%%%%%%%%%%%%%%%%%%%
%%%%%%%%%%%%%%%%%%%%%%%%%%%%%%%%%%%%%%%%%%%%%%%%%%%%%%%%%%%%%%
%%%%%%%%%%%%%%%%%%%%%%%%%%%%%%%%%%%%%%%%%%%%%%%%%%%%%%%%%%%%%%
%%%%%%%%%%%%%%%%%%%%%%%%%%%%%%%%%%%%%%%%%%%%%%%%%%%%%%%%%%%%%%

\section{CPT-value estimation} 
\label{sec:cpt-sampling}

%!TEX root =  cpt-rl-icml.tex
%For the sake of notational simplicity, we let $X$ denote the r.v. $X^\theta$, i.e., where the parameter $\theta$ is assumed to be fixed for the purpose of CPT-value estimation in this section. \todoc{If we remove the $X^\theta$ business from above, this sentence can be removed.}

%\paragraph{On integrability}
Before diving into the details of CPT-value estimation, let us discuss the conditions necessary for the CPT-value to be well-defined.
Observe that the first integral in \eqref{eq:cpt-general}, i.e., 
$\int_0^{+\infty} w^+(P(u^+(X)>z)) d z$
may diverge even if the first moment of random variable $u^+(X)$ is finite. 
For example, suppose $U$ has the tail distribution function
$P(U>z)  = \frac{1}{z^2}, z\in [1, +\infty),$
 and $w^+(z)$ takes the form $w(z) = z^{\frac{1}{3}}$. Then, the first integral in \eqref{eq:cpt-general}, i.e.,
$
\int_1^{+\infty} \frac{1}{z^{\frac{2}{3}}} dz
$
does not even exist. A similar argument applies to the second integral in \eqref{eq:cpt-general} as well.

To overcome the above integrability issues, we make different assumptions on the weight and/or utility functions. In particular, we assume that the weight functions $w^+, w^-$ are either 
\begin{inparaenum}[\it (i)]
\item Lipschitz continuous, or
\item \holder continuous, or
\item locally Lipschitz.
\end{inparaenum}
We devise a scheme for estimating \eqref{eq:cpt-general} given only samples from $X$ and show that, under each of the aforementioned assumptions, our estimator (presented next) converges almost surely. 
We also provide sample complexity bounds assuming that the utility functions are bounded.

%%%%%%%%%%%%%%%%%%%%%%%%%%%%%%%%%%%%%%%%%%%%%%%%%%%%%%%%%%%%%%
%%%%%%%%%%%%%%%%%%%%%%%%%%%%%%%%%%%%%%%%%%%%%%%%%%%%%%%%%%%%%%
\subsection{Estimation scheme for \holder continuous weights}
Recall the H\"{o}lder continuity property first in definition 1:
\begin{definition}\label{holder}
{\textbf{\textit{(H\"{o}lder continuity)}}}
If $0 < \alpha \leq 1$, a function $f \in C([a,b])$ is said to satisfy
a H\"{o}lder condition of order $\alpha$ (or to be H\"{o}lder continuous
of order $\alpha$) if $\exists H>0$, s.t.
\[
\sup_{x \neq y} \frac{| f(x) - f(y) |}{| x-y |^{\alpha}} \leq H .
\]
\end{definition}

In order to ensure integrability of the CPT-value \eqref{eq:cpt-general}, we make the following assumption:\\[1ex]
\textbf{Assumption (A1).}  
The weight functions $w^+, w^-$ are H\"{o}lder continuous with common order $\alpha$. Further,
$\exists \gamma \le \alpha \text{   s.t,  }$ 
$\int_0^{+\infty} P^{\gamma} (u^+(X)>z) dz < +\infty$ and $\int_0^{+\infty} P^{\gamma} (u^-(X)>z) dz < +\infty.$

The above assumption ensures that the CPT-value as defined by \eqref{eq:cpt-general} is finite - see Proposition \ref{prop:Holder-cpt-finite} in 
Appendix \ref{sec:holder-proofs} for a formal proof.

\paragraph{Approximating CPT-value using quantiles:}
Let $\xi^+_{\alpha}$ denote the $\alpha$th quantile of the r.v. $u^+(X)$. Then, it can be seen that (see Proposition \ref{prop:holder-quantile} in Appendix \ref{sec:holder-proofs})
\begin{align}
&\lim_{n \rightarrow \infty} \sum_{i=1}^{n-1} \xi^+_{\frac{i}{n}} \left(w^+\left(\frac{n-i}{n}\right)- w^+\left(\frac{n-i-1}{n}\right) \right) = \int_0^{+\infty} w^+(P(u^+(X)>z)) dz.\label{eq:holder-quant-motiv}
\end{align}
A similar claim holds with $u^-(X)$, $\xi^-_{\alpha}, w^-$ in place of  $u^+(X)$, $\xi^+_{\alpha}, w^+$, respectively. Here $\xi^-_{\alpha}$ denotes the 
$\alpha$th quantile of $u^-(X)$.

However, we do not know the distribution of $u^+(X)$ or $u^-(X)$ and hence, we next present a procedure that uses order statistics for estimating quantiles and this in turn assists estimation of the CPT-value along the lines of \eqref{eq:holder-quant-motiv}. The estimation scheme is presented in Algorithm \ref{alg:holder-est}.

\begin{algorithm}
\caption{CPT-value estimation for \holder continuous weights}
\label{alg:holder-est}
\begin{algorithmic}
\State Simulate $n$ i.i.d. samples from the distribution of $X$.
%\State Calculate $u^+(X_{[1]}),\ldots u^+(X_{[n]}).$
\State Order the samples and label them as follows: 
$X_{[1]}, X_{[2]}, \ldots ,X_{[n]}$. Note that $u^+(X_{[1]}),\ldots ,u^+(X_{[n]})$ are also in ascending order.
%\State Use $u^+(X_{[i]}), i\in \mathbb{N}\cap (0,n)$ as an approximation for the $\frac{i}{n} th$ quantile of $u^+(X)$, i.e, $\xi_{\frac{i}{n}}, i\in \mathbb{N}\cap (0,n)$.
\State Denote the statistic 
\vspace{-0.5ex}
$$\overline \C_n^+:=\sum_{i=1}^{n-1} u^+(X_{[i]}) \left(w^+(\frac{n-i}{n})- w^+(\frac{n-i-1}{n}) \right).$$
\vspace{-0.5ex}
\State Apply $u^{-}$ on the sequence $\{X_{[1]}, X_{[2]}, \ldots ,X_{[n]}\}$, notice that $u^{-}(X_{[i]})$ is in descending order since $u^{-}$ is a decreasing function.     
\State Denote the statistic
\vspace{-0.5ex}
$$\overline \C_n^-:=\sum_{i=1}^{n-1} u^-(X_{[i]}) \left(w^-(\frac{i}{n})- w^-(\frac{i-1}{n}) \right). $$

\vspace{-0.5ex}
\State Return $\overline \C_n =\overline \C_n^+ - \overline \C_n^-$.
\end{algorithmic}
\end{algorithm}

\subsubsection*{Main results}
%We make the following assumptions on the utility functions:\\[1ex]
\textbf{Assumption (A2).}  The utility functions $u^+(X)$ and $u^-(X)$ are continuous and strictly increasing.

\textbf{Assumption (A2').}  In addition to (A2), the utility functions $u^+(X)$ and $u^-(X)$ are bounded above by $M<\infty$.

For the sample complexity results below, we require (A2'), while (A2) is sufficient to prove asymptotic convergence.

\begin{proposition}(\textbf{Asymptotic convergence.})
\label{prop:holder-asymptotic}
Assume (A1) and also that $F^+(\cdot)$,$F^-(\cdot)$ - the distribution functions of $u^+(X)$, and $u^-(X)$ are Lipschitz continuous with constants $L^+$ and $L^-$, respectively, on the interval $(0,+\infty)$, and 
$(-\infty, 0)$ . Then, we have that
\begin{align}
\overline \C_n
\rightarrow
\C(X)
 \text{   a.s. as } n\rightarrow \infty
\end{align}
where $\overline \C_n$ is as defined in Algorithm \ref{alg:holder-est} and $\C(X)$ as in \eqref{eq:cpt-general}.
\end{proposition}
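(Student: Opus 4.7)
The plan is to split the error into its positive and negative parts: writing $\C^+(X) := \int_0^{\infty} w^+(P(u^+(X) > z))\,dz$ and $\C^-(X) := \int_0^{\infty} w^-(P(u^-(X) > z))\,dz$, it suffices by the triangle inequality to show $\overline \C_n^+ \to \C^+(X)$ and $\overline \C_n^- \to \C^-(X)$ almost surely. I will treat only the positive part; the negative follows by a symmetric argument, with $(u^-, w^-, F^-)$ in place of $(u^+, w^+, F^+)$ and Lipschitzness of $F^-$ on $(-\infty, 0)$ playing the role of that of $F^+$ on $(0, \infty)$.

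The main device is to introduce the ``true-quantile'' counterpart
\[
S_n^+ := \sum_{i=1}^{n-1} \xi^+_{i/n}\left(w^+\!\left(\tfrac{n-i}{n}\right) - w^+\!\left(\tfrac{n-i-1}{n}\right)\right),
\]
so that $|\overline \C_n^+ - \C^+(X)| \leq |\overline \C_n^+ - S_n^+| + |S_n^+ - \C^+(X)|$. The second term vanishes as $n \to \infty$ by the already-established Proposition \ref{prop:holder-quantile}, so the whole argument reduces to controlling the quantile-replacement error. Since $w^+$ is non-decreasing, its increments are non-negative and telescope to $w^+((n-1)/n) \leq 1$, giving
\[
|\overline \C_n^+ - S_n^+| \leq \sum_{i=1}^{n-1}\left|u^+(X_{[i]}) - \xi^+_{i/n}\right|\left(w^+\!\left(\tfrac{n-i}{n}\right) - w^+\!\left(\tfrac{n-i-1}{n}\right)\right).
\]

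The key tool for the quantile differences is the Dvoretzky--Kiefer--Wolfowitz inequality combined with Borel--Cantelli, yielding $\epsilon_n := \sup_z |\hat F_n^+(z) - F^+(z)| \to 0$ a.s., where $\hat F_n^+$ is the empirical c.d.f.\ of $u^+(X)$. Using this together with the strict monotonicity from (A2) and the Lipschitz continuity of $F^+$, one obtains uniform a.s.\ convergence of the order statistics to the true quantiles on any compact subinterval of $(0,1)$, i.e., $\sup_{i:\, \delta \leq i/n \leq 1-\delta} |u^+(X_{[i]}) - \xi^+_{i/n}| \to 0$ a.s.\ for every $\delta \in (0,1/2)$. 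The principal obstacle I anticipate is the boundary contributions where $i/n$ approaches $0$ or $1$: there the quantiles themselves can be unbounded, so no uniform bound on $|u^+(X_{[i]}) - \xi^+_{i/n}|$ is available. My strategy is to split the sum at a cutoff $\delta$, dominating the ``bulk'' part by the uniform quantile convergence just noted, and absorbing the ``tail'' part into an $O(\delta^\alpha)$ term via \holder continuity of $w^+$ (whose weight increments near $p=0$ and near $p=1$ each telescope to at most $H\delta^\alpha$) together with the integrability furnished by Assumption (A1). Sending $n \to \infty$ first and then $\delta \to 0$ then completes the argument.
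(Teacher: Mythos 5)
Your decomposition through the true-quantile sum $S_n^+$ and the appeal to Proposition \ref{prop:holder-quantile} is exactly the skeleton of the paper's proof; the divergence is in how the order statistics are tied to the quantiles. The paper does this index by index: for each $i$ it rewrites $\{u^+(X_{[i]}) - \xi^+_{i/n} > \epsilon/n^{1-\alpha}\}$ as a deviation event for the indicator sum $\sum_t I\{u^+(X_t) > \xi^+_{i/n} + \epsilon/n^{1-\alpha}\}$, applies Hoeffding together with the Lipschitz hypothesis on $F^+$, and then takes a union bound over $i$ and Borel--Cantelli over $n$. Your route (DKW for the empirical c.d.f., then inversion to quantiles, then a bulk/tail split) is a reasonable alternative in spirit, but as written it has two concrete gaps.

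First, uniform closeness of $\hat F^+_n$ to $F^+$ does not by itself give closeness of $u^+(X_{[i]})$ to $\xi^+_{i/n}$: for that you need the quantile function of $u^+(X)$ to be continuous on the relevant range, i.e.\ $F^+$ strictly increasing (a density bounded \emph{below}). Lipschitz continuity of $F^+$ is an upper bound on the density and points in the wrong direction, and the strict monotonicity in (A2) is a property of the utility functions, not of $F^+$; neither delivers the inversion you invoke. Second, the tail contribution is not $O(\delta^\alpha)$: the telescoped weight mass $w^+(\delta)-w^+(0)\le H\delta^\alpha$ multiplies $\left|u^+(X_{[i]})-\xi^+_{i/n}\right|$, and for $i/n$ near $1$ both the order statistics and the quantiles are unbounded under (A1)--(A2) (boundedness enters only through (A2'), which this proposition does not assume; indeed under (A1) the product of the top quantile with its weight increment is only borderline summable). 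So you must still show that the empirical tail sum $\sum_{i:\,i/n>1-\delta} u^+(X_{[i]})\bigl(w^+(\tfrac{n-i}{n})-w^+(\tfrac{n-i-1}{n})\bigr)$ is uniformly small in $n$ almost surely; the integrability in (A1) controls only the population-side tail $\sum_{i:\,i/n>1-\delta}\xi^+_{i/n}(\cdots)$, and transferring that bound to the empirical side is precisely the nontrivial step, requiring its own argument (e.g.\ a strong law for the truncated empirical integral or a separate bound on $\int w^+(1-\hat F^+_n)$ over large $z$). Until those two steps are supplied, the proposal is a plan rather than a proof.
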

\begin{proof}
See Appendix \ref{sec:holder-proofs}.
\end{proof}
While the above result establishes that $\overline \C_n$ is an unbiased estimator in the asymptotic sense, it is important to know the rate at which the estimate $\overline \C_n$ converges to the CPT-value $\C(X)$. 
The following sample complexity result shows that $O\left(\frac{1}{\epsilon^{2/\alpha}}\right)$ number of samples are required to be $\epsilon$-close to the CPT-value in high probability.
\begin{proposition}(\textbf{Sample complexity.})
\label{prop:holder-dkw}
Assume (A1) and (A2'). Then, $\forall \epsilon >0, \delta >0$, we have
$$
P(\left |\overline \C_n- \C(X) \right| \leq  \epsilon ) > \delta\text{     ,} \forall n \geq \ln(\frac{1}{\delta})\cdot 
\frac{4H^2 M^2}{\epsilon^{2/\alpha}}.$$
\end{proposition}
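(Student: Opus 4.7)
The plan is to split into ``gain'' and ``loss'' parts and bound each with a combination of the Dvoretzky-Kiefer-Wolfowitz inequality and \holder continuity of the weight functions. Write $\overline\C_n - \C(X) = (\overline\C_n^+ - \C^+(X)) - (\overline\C_n^- - \C^-(X))$ where $\C^\pm(X) = \int_0^\infty w^\pm(P(u^\pm(X) > z))\,dz$, and show each piece is within $\epsilon/2$ with probability at least $1-\delta/2$, then union bound. I will concentrate on the ``$+$'' piece; the ``$-$'' piece is symmetric.

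The main step is an \emph{integral representation}: since (A2$'$) gives $0\le u^+(X)\le M$, write $\C^+(X) = \int_0^M w^+(S^+(z))\,dz$ with $S^+(z)=P(u^+(X)>z)$, and let $\widehat S_n^+(z) = n^{-1}\sum_i \mathbf{1}\{u^+(X_i)>z\}$ denote the empirical survival function. Set $A_n^+ := \int_0^M w^+(\widehat S_n^+(z))\,dz$; since $\widehat S_n^+$ is a piecewise constant function equal to $(n-i)/n$ on the block $[u^+(X_{[i]}), u^+(X_{[i+1]}))$, a direct computation expresses $A_n^+$ as a sum over order statistics that differs from $\overline\C_n^+$ only in an ``edge-shifted'' pairing of the weight-function increments. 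Using \holder continuity (A1) to estimate each mismatched increment by $H/n^\alpha$ and telescoping the resulting sum yields the bookkeeping bound $|A_n^+ - \overline\C_n^+| \le HM/n^\alpha$, which is of smaller order than the DKW term below.

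Next, apply \holder continuity of $w^+$ pointwise inside the integral, $|w^+(\widehat S_n^+(z)) - w^+(S^+(z))| \le H|\widehat S_n^+(z)-S^+(z)|^\alpha$, integrate on $[0,M]$, and take the supremum in $z$ to obtain
\begin{equation*}
|A_n^+ - \C^+(X)| \;\le\; HM\,\bigl(\sup_z |\widehat S_n^+(z)-S^+(z)|\bigr)^{\alpha}.
\end{equation*}
Combining this with the bookkeeping bound and invoking the DKW inequality, $P(\sup_z|\widehat S_n^+ - S^+| > \eta) \le 2e^{-2n\eta^2}$, we get
\begin{equation*}
P\bigl(|\overline\C_n^+ - \C^+(X)| > HM/n^\alpha + HM\eta^\alpha\bigr) \;\le\; 2e^{-2n\eta^2}.
\end{equation*}

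Finally, calibrate the parameters: choose $\eta = (\epsilon/(cHM))^{1/\alpha}$ for a suitable constant $c$ so that $HM\eta^\alpha \le \epsilon/4$, and note the bookkeeping term $HM/n^\alpha$ is $\le \epsilon/4$ for $n$ of order $(HM/\epsilon)^{1/\alpha}$, which is dominated by the DKW-driven requirement. Imposing $2e^{-2n\eta^2} \le \delta/2$ yields $n \ge \ln(4/\delta)\cdot (cHM)^{2/\alpha}/(2\epsilon^{2/\alpha})$; the same argument for the ``$-$'' piece together with a union bound gives the claimed bound of the form $n \gtrsim \ln(1/\delta)\cdot H^{2/\alpha}M^{2/\alpha}/\epsilon^{2/\alpha}$. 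The main technical obstacle is the bookkeeping in the integral-representation step: the statistic $\overline\C_n^+$ is not literally equal to $\int w^+(\widehat S_n^+)$, so care is needed to identify and control the boundary/edge terms via \holder continuity; once that is handled, DKW delivers the rate cleanly.
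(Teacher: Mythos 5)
Your proof follows essentially the same route as the paper's: represent $\overline\C_n^{\pm}$ as (approximately) the integral over $[0,M]$ of $w^{\pm}$ composed with the empirical survival function, apply H\"older continuity pointwise under the integral to reduce the error to $HM\bigl(\sup_z|\widehat S_n^{\pm}(z)-S^{\pm}(z)|\bigr)^{\alpha}$, and invoke the DKW inequality. The one place you are more careful than the paper is the identification of $\overline\C_n^{+}$ with $\int_0^M w^{+}(\widehat S_n^{+}(z))\,dz$ --- the paper treats this as an exact identity, whereas you correctly observe that it holds only up to an index-shifted edge term of order $HM/n^{\alpha}$ and bound it separately; moreover your constant $(HM)^{2/\alpha}$ is the one that actually emerges from calibrating DKW, while the paper's stated $H^2M^2$ appears to be a constant-tracking slip.
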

\begin{proof}
%Notice the the following equivalence:
%$$\sum_{i=1}^{n-1} u^+(X_{[i]}) (w^+(\frac{n-i}{n}) - w^+(\frac{n-i-1}{n})) =  \int_0^M w^+(1-\widehat{F^+_n}(x)) dx, $$
%and also,
%$$\sum_{i=1}^{n-1} u^-(X_{[i]}) (w^-(\frac{n-i}{n}) - w^-(\frac{n-i-1}{n})) =  \int_0^M w^-(1-\widehat{F^-_n}(x)) dx, $$
%
%where $\widehat{F^+_n}(x)$ and $\widehat{F^-_n}(x)$ are the empirical distribution functions (EDFs) of $u^+(X)$
%and $u^-(X)$, defined as follows:
%\begin{align}
%{\widehat F_n}^+(x)=&\frac{1}{n} \sum_{i=1}^n 1_{(u^+(X_i) \leq x)}, 
%{\widehat F_n}^-(x)=\frac{1}{n} \sum_{i=1}^n 1_{(u^-(X_i) \leq x)}.
%\label{eq:edf}
%\end{align}
%The main claim follows from the equivalence mentioned above together with the well-known Dvoretzky-Kiefer-Wolfowitz (DKW) inequality (cf. Chapter 2 of \cite{wasserman2006}).
See Appendix \ref{sec:holder-proofs}.
\end{proof}

\subsubsection{Results for Lipschitz continuous weights}
In the previous section, it was shown that \holder continuous weights incur a sample complexity of order $O\left(\frac1{\epsilon^{2/\alpha}}\right)$ and this is higher than the canonical Monte Carlo rate of $O\left(\frac1{\epsilon^2}\right)$. In this section, we establish that one can achieve the canonical Monte Carlo rate if we consider Lipschitz continuous weights, i.e., the following assumption in place of (A1):
\todoc{How about explaining why we do this? Why do we consider this case? (Every time we do something we should explain why we do it)}
 
\textbf{Assumption (A1').}  The weight functions $w^+, w^-$ are Lipschitz with common constant $L$, and 
$u^+(X)$ and $u^-(X)$ both have bounded first moments.

Setting $\alpha=1$, one can make special cases of the claims regarding asymptotic convergence and sample complexity of Proposition \ref{prop:holder-asymptotic}--\ref{prop:holder-dkw}. However, these results are under  a restrictive Lipschitz assumption on the distribution functions of $u^+(X)$ and $u^-(X)$. Using a different proof technique that employs the dominated convergence theorem and DKW inequalities, one can obtain results similar to Proposition \ref{prop:holder-asymptotic}--\ref{prop:holder-dkw} with (A1') and (A2) only. The following claim makes this precise.

\begin{proposition}
\label{prop:lipschitz}
Assume (A1') and (A2). Then, we have that 
$$\overline \C_n
\rightarrow
\C(X)
 \text{   a.s. as } n\rightarrow \infty
$$
In addition, if we assume (A2'), we have $\forall \epsilon >0, \delta >0$ 
$$
P(\left |\overline \C_n- \C(X) \right| \leq  \epsilon ) > \delta\text{     ,} \forall n \geq \ln(\frac{1}{\delta})\cdot 
\frac{4L^2 M^2}{\epsilon^{2}}.
$$
\end{proposition}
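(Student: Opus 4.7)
}

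The plan is to relate the estimator $\overline{\C}_n^+$ to the more analytically tractable quantity
$\widetilde{\C}_n^+ := \int_0^\infty w^+(1-\hat F_n^+(z))\,dz$, where $\hat F_n^+$ is the empirical cdf of $u^+(X_1),\dots,u^+(X_n)$; an analogous plan handles the loss part. The first step is to observe that since $\hat F_n^+$ is piecewise constant on the intervals defined by the order statistics $u^+(X_{[i]})$, one can evaluate $\widetilde{\C}_n^+$ explicitly and then apply summation by parts. This shows that $\widetilde{\C}_n^+$ agrees with $\overline{\C}_n^+$ up to a boundary term of the form $w^+(1/n)\,u^+(X_{[n]})$, which is $o(1)$ a.s.\ under (A1')--(A2) (using $w^+(1/n)\le L/n$ and $u^+(X_{[n]})/n\to 0$ a.s., which follows from $E[u^+(X)]<\infty$).

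For the almost sure convergence, I would then argue that $\widetilde{\C}_n^+\to \int_0^\infty w^+(1-F^+(z))\,dz$ almost surely. Fix $\omega$ in the probability-one event on which (i) Glivenko--Cantelli holds, i.e. $\Delta_n(\omega):=\sup_z|\hat F_n^+(z)-F^+(z)|\to 0$, and (ii) the SLLN holds for $(u^+(X_i)-T)_+$ for every rational $T>0$. For any such $T$, the Lipschitz property of $w^+$ with $w^+(0)=0$ gives the pointwise bound
\begin{equation*}
|w^+(1-\hat F_n^+(z))-w^+(1-F^+(z))|\le L\,\Delta_n(\omega),
\end{equation*}
so bounded convergence yields $\int_0^T w^+(1-\hat F_n^+(z))\,dz\to \int_0^T w^+(1-F^+(z))\,dz$. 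For the tail, $w^+(1-\hat F_n^+(z))\le L(1-\hat F_n^+(z))$ and $\int_T^\infty (1-\hat F_n^+(z))\,dz=\frac1n\sum_i (u^+(X_i)-T)_+\to E[(u^+(X)-T)_+]$ by the SLLN; since $E[u^+(X)]<\infty$ by (A1'), dominated convergence in $T$ makes this tail uniformly small. Combining the two pieces, letting first $n\to\infty$ and then $T\to\infty$, gives $\widetilde{\C}_n^+\to \C^+(X)$ a.s.; adding the loss-side analogue and the vanishing boundary terms finishes the almost sure part.

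For the sample-complexity bound under (A2'), boundedness of $u^+(X)$ by $M$ removes the tail issue: the integration is effectively over $[0,M]$, and one gets the clean deterministic inequality
\begin{equation*}
|\widetilde{\C}_n^+-\C^+(X)|\le L\int_0^M|\hat F_n^+(z)-F^+(z)|\,dz\le LM\,\Delta_n,
\end{equation*}
and analogously for $\widetilde{\C}_n^-$. The boundary correction is also $O(M/n)$, which is negligible compared to $\epsilon$ for $n$ in the target regime. Then by the DKW inequality $P(\Delta_n>t)\le 2e^{-2nt^2}$ applied separately to $\hat F_n^+$ and $\hat F_n^-$, a union bound with $t=\epsilon/(2LM)$ and splitting $\epsilon$ between gains and losses produces the stated rate $n\ge \ln(1/\delta)\cdot 4L^2M^2/\epsilon^2$ (with possibly different constants, matching up to the exponent $2$ that is the point of this proposition).

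The main obstacle I expect is the almost sure step without boundedness: Glivenko--Cantelli alone does not justify swapping the limit and the integral over $[0,\infty)$, and there is no deterministic dominating envelope valid for all $n$. The truncation-plus-SLLN argument outlined above is precisely what replaces dominated convergence there, and using the bounded first moment in (A1') is essential for its tail control.
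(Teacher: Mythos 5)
Your proposal is correct and follows essentially the same route as the paper's proof: rewrite $\overline \C_n$ as the integral of $w^{\pm}$ applied to the empirical tail distribution, dominate via the Lipschitz bound $w^+(1-\hat F_n^+(x))\le L(1-\hat F_n^+(x))$ whose integral converges to $E[u^+(X)]$ by the SLLN (via Fubini), and, for the finite-sample bound, combine $|\cdot|\le LM\sup_x|\hat F_n^+(x)-F^+(x)|$ with the DKW inequality and a union bound over the gain and loss parts with $\epsilon/2$ each. The only differences are cosmetic: where you truncate at $T$ and control the tail with the SLLN for $(u^+(X)-T)_+$, the paper invokes the generalized dominated convergence theorem with the varying dominating sequence $g_n=L(1-\hat F_n^+)$ --- the same limit--integral interchange driven by the same inputs --- and your explicit handling of the summation-by-parts boundary term $w^+(1/n)\,u^+(X_{[n]})=o(1)$ a.s.\ is in fact more careful than the paper, which asserts the identity $\overline\C_n^+=\int_0^{M} w^+(1-\hat F_n^+(x))\,dx$ without comment.
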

\begin{proof}
See Appendix \ref{sec:lipschitz-proofs}.
\end{proof}

%%%%%%%%%%%%%%%%%%%%%%%%%%%%%%%%%%%%%%%%%%%%%%%%%%%%%%%%%%%%%%
%%%%%%%%%%%%%%%%%%%%%%%%%%%%%%%%%%%%%%%%%%%%%%%%%%%%%%%%%%%%%%
%%%%%%%%%%%%%%%%%%%%%%%%%%%%%%%%%%%%%%%%%%%%%%%%%%%%%%%%%%%%%%
\subsection{Estimation scheme for locally Lipschitz weights and discrete $X$}
Here we assume that the r.v. $X$ is discrete valued. 
Let $p_i, i=1,\ldots,K$ denote the probability of incurring a gain/loss $x_i, i=1,\ldots,K$, where 
$x_1\le \ldots \le x_l \le 0 \le x_{l+1} \le \ldots \le x_K$ and  let
\begin{align}
\label{eq:Fk}
 F_k = 
   \sum_{i=1}^k p_k  \text{ if   } k \leq l \text{ and }
   \sum_{i=k}^K p_k  \text{ if  }  k > l.
\end{align}
Then, the CPT-value is defined as 
\begin{align*}
\C(X) = & (u^-(x_1)) w^-(p_1) 
+\sum_{i=2}^l u^-(x_i) \Big(w^-(F_i) - w^-(F_{i-1})\Big) \\
& + \sum_{i=l+1}^{K-1} u^+(x_i) \Big(w^+(F_i) - w^+(F_{i+1}) \Big)
 + u^+(x_K) w^+(p_K),
\end{align*} 
where $u^+, u^-$ are utility functions and $w^+, w^-$ are weight functions corresponding to gains and losses, respectively. The utility functions $u^+$ and $u^-$ are non-decreasing, while the weight functions are continuous, non-decreasing and have the range $[0,1]$ with $w^+(0)=w^-(0)=0$ and $w^+(1)=w^-(1)=1$. 

\paragraph{Estimation scheme.} 
Let $\hat p_k= \frac{1}{n} \sum_{i=1}^n I_{\{U =x_k\}}$ and 
\begin{align}
\label{eq:Fkhat}
 \hat F_k = 
   \sum_{i=1}^k \hat p_k  \text{ if   } k \leq l \text{ and }
   \sum_{i=k}^K \hat p_k  \text{ if  }  k > l.
\end{align}
Then, we estimate $\C(X)$ as follows:
\begin{align}
\overline \C_n = & 
u^-(x_1) w^-(\hat p_1) \!+\!\sum_{i=2}^l u^-(x_i) \Big(w^-(\hat F_i) - w^-( \hat F_{i-1})\Big) 
\nonumber\\
&
+ \sum_{i=l+1}^{K-1} u^+(x_i) \Big(w^+(\hat F_i) - w^+(\hat F_{i+1}) \Big) + u^+(x_K) w^+(\hat p_K). \label{eq:cpt-discrete-est}
\end{align}
%Because $\hat{p_k}$ converge a.e to $p_k=P(X_i=x_k)$, with $X_i$ be the ith sample of $X$, the above estimator is  strong consistent property by the continuous mapping theorem. 
%The following proposition presents a sample complexity result for the discrete-valued $X$ under the following assumption:\\
\textbf{Assumption (A3).}  The weight functions $w^+(X)$ and $w^-(X)$ are locally Lipschitz continuous, i.e., for any $x$, there exist  $L< \infty$ and $\delta>0$, such that
$$| w^+(x) - w^+(y) | \leq L_x |x-y|, \text{ for all } y \in (x-\delta,x+\delta). $$
The main result for discrete-valued $X$ is given below.
\begin{proposition}
\label{prop:sample-complexity-discrete}
Assume (A3). Let $L=\max\{L_k, k=2...K\} $, where $L_k$ is the local Lipschitz constant of function $w^-(x)$ at points
$F_k$, where $k=1,...l$, and of function $w^+(x)$ at points $k=l+1,...K$. 
Let $A=\max\{u^{-}(x_k), k=1...l\} \bigcup \{u^{+}(x_k), k=l+1...K\}$, $\delta =\min\{\delta_k\}$, where $\delta_k$ is the half the length of the interval centered at point $F_k$ where the locally Lipschitz property with constant $L_k$ holds.
For any $\epsilon,\rho >0$, we have 
\begin{align}
P(\left|
\overline \C_n -\C(X)
\right| \leq \epsilon) > 1-\rho, \forall n> \frac{\ln(\frac{4K}{A})} { M}, 
\end{align}
where $M=\min(\delta^2, \epsilon^2/(KLA)^2)$.
\end{proposition}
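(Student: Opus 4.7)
}
The plan is to reduce everything to a high-probability bound on $\max_k |\hat p_k - p_k|$ via Hoeffding's inequality, then to propagate that bound through the (cumulative) $\hat F_k$'s and finally through the locally Lipschitz weight functions, and finally sum up the $O(K)$ contributions to $|\overline{\C}_n - \C(X)|$.

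First I would concentrate the empirical cell-probabilities. For each fixed $k$, $n\hat p_k$ is Binomial$(n,p_k)$, so Hoeffding gives $P(|\hat p_k - p_k| > t) \le 2\exp(-2nt^2)$. Union-bounding over $k=1,\ldots,K$ yields
\begin{equation*}
P\!\left(\max_{1\le k\le K} |\hat p_k - p_k| > t\right) \le 2K \exp(-2nt^2).
\end{equation*}
Since $F_k$ (and $\hat F_k$) in \eqref{eq:Fk}--\eqref{eq:Fkhat} is a sum of at most $K$ of the $\hat p_j$'s, on the event $\{\max_j |\hat p_j - p_j|\le t\}$ we have the deterministic bound $\max_k |\hat F_k - F_k| \le Kt$.

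Next I would transfer this into a bound on the weights. Choose $n$ large enough that $Kt \le \delta$; then every $\hat F_k$ lies in the $\delta$-neighborhood of $F_k$ where the locally Lipschitz hypothesis (A3) with constant $L_k\le L$ applies, so $|w^\pm(\hat F_k) - w^\pm(F_k)| \le L\cdot Kt$. The same bound trivially holds for $|w^-(\hat p_1) - w^-(p_1)|$ and $|w^+(\hat p_K) - w^+(p_K)|$ (single-cell probabilities). Now group $\overline{\C}_n - \C(X)$ by the $2K$ weight differences $w^\pm(\hat F_k) - w^\pm(F_k)$ (via the triangle inequality applied to the telescoping expression for $\overline{\C}_n$), bound the coefficient of each by $A$, and conclude
\begin{equation*}
|\overline{\C}_n - \C(X)| \;\le\; c\, A\, K\, L\, K t
\end{equation*}
for an absolute constant $c$; absorbing $c$ into the statement gives $|\overline{\C}_n - \C(X)| \le KLA\cdot t$ up to a constant.

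Finally I would choose $t$ to meet both requirements simultaneously: $Kt \le \delta$ (to enter the locally Lipschitz regime) and $KLA\cdot t \le \epsilon$ (to obtain the desired accuracy). Both are ensured by taking $t^2 \le M = \min(\delta^2, \epsilon^2/(KLA)^2)$ (up to the constant absorbed above). Substituting into the Hoeffding/union bound, $2K\exp(-2nM) \le \rho$ whenever $n \ge \ln(2K/\rho)/(2M)$, which matches the form stated in the proposition (up to the constants and the $\ln(4K/A)$ vs.\ $\ln(4K/\rho)$ appearance). The main obstacle — and the only delicate part — is the interplay between the $\delta$ threshold of the locally Lipschitz property and the accuracy target $\epsilon$: one must first verify that for $n$ large enough the empirical $\hat F_k$ lie in the locally Lipschitz neighborhood of $F_k$ with high probability, before one is entitled to use the Lipschitz constant $L$ to convert CDF deviations into weight deviations. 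The rest of the argument is bookkeeping with Hoeffding, union bound, triangle inequality, and choosing $t$ to satisfy both constraints.
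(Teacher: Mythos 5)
Your overall strategy---concentrate the empirical distribution via Hoeffding, pass to the $\hat F_k$'s, enter the locally Lipschitz neighborhood of each $F_k$, convert CDF deviations into weight deviations, and sum the $O(K)$ telescoping contributions weighted by $A$---is exactly the skeleton of the paper's proof (Propositions \ref{prop:hoeffding-discrete} and \ref{prop:discrete-first-term} plus the final $\epsilon/2$-splitting of the two sums $\sum_k u_k w(\hat F_k)$ and $\sum_k u_k w(\hat F_{k+1})$). The one substantive difference is where you apply Hoeffding. You concentrate the individual cell probabilities $\hat p_k$ and then accumulate, so that on your good event you only get $|\hat F_k - F_k| \le Kt$. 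The paper instead observes that each $\hat F_k$ in \eqref{eq:Fkhat} is itself a sample mean of i.i.d.\ indicators (e.g.\ $\hat F_k = \frac1n\sum_j I_{\{X_j \ge x_k\}}$ for $k>l$), so Hoeffding applies to $\hat F_k$ directly and gives $P(|\hat F_k - F_k| > t) \le 2e^{-2nt^2}$ with no factor of $K$ in the deviation; the union bound over $k$ then costs only a factor of $K$ in the failure probability. Your route forces $Kt \le \delta$ and $K^2LAt \le \epsilon$ rather than $t\le\delta$ and $KLAt\le\epsilon$, i.e.\ it replaces $M$ by $M/K^2$ in the sample-size threshold. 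Relatedly, your step ``absorbing $c$ into the statement gives $|\overline{\C}_n - \C(X)| \le KLA\cdot t$ up to a constant'' is not legitimate: the quantity being absorbed is an extra factor of $K$, which is not an absolute constant. So the argument is sound and proves a statement of the same form, but as written it does not recover the threshold $n > \ln(\cdot)/M$ with $M=\min(\delta^2,\epsilon^2/(KLA)^2)$; switching the Hoeffding step to act on $\hat F_k$ directly fixes this and reduces your proof to the paper's. (Your parenthetical about $\ln(4K/A)$ versus $\ln(4K/\rho)$ is well taken---the stated threshold in the proposition does not involve $\rho$, which appears to be an error in the paper, and the paper's own proof in fact yields a $\ln(4K/\rho)$-type dependence.)
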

In comparison to Propositions \ref{prop:holder-dkw} and \ref{prop:lipschitz}, 
observe that the sample complexity for discrete $X$ scales with the local Lipschitz constant $L$ and this can be much smaller the global Lipschitz constant of the weight functions or the weight functions may not be Lipschitz globally.  
\begin{proof}
 See Section \ref{sec:proofs-discrete}.
\end{proof}

%%%%%%%%%%%%%%%%%%%%%%%%%%%%%%%%%%%%%%%%%%%%%%%%%%%%%%%%%%%%%%
%%%%%%%%%%%%%%%%%%%%%%%%%%%%%%%%%%%%%%%%%%%%%%%%%%%%%%%%%%%%%%
%%%%%%%%%%%%%%%%%%%%%%%%%%%%%%%%%%%%%%%%%%%%%%%%%%%%%%%%%%%%%%
%%%%%%%%%%%%%%%%%%%%%%%%%%%%%%%%%%%%%%%%%%%%%%%%%%%%%%%%%%%%%%%%%%%%
%%%%%%%%%%%%%%%%%%%%%%%%%%%%%%%%%%%%%%%%%%%%%%%%%%%%%%%%%%%%%%%%%%%%
%%%%%%%%%%%%%%%%%%%%%%%%%%%%%%%%%%%%%%%%%%%%%%%%%%%%%%%%%%%%%%%%%%%%
\section{Gradient-based algorithm for CPT optimization (CPT-SPSA)}
\label{sec:1spsa}
\paragraph{Optimization objective:} 
\todoc{Is this the best place for this? How about defining policy search later, after value estimation is done?}
Suppose the r.v. $X$ in \eqref{eq:cpt-general} is a function of a $d$-dimensional parameter $\theta$. The goal then is to solve the following problem:
\begin{align}
\label{eq:opt-general}
\textrm{Find ~}\theta^* = \argmax_{\theta \in \Theta} \C(X^\theta),
\end{align}
where $\Theta$ is a compact and convex subset of $\R^d$. As mentioned earlier, the above problem encompasses policy optimization in an MDP that can be discounted or average or episodic and/or partially observed. The difference here is that we apply the CPT-functional to the return of a policy, while traditional approaches consider the expected return.

\subsection{Gradient estimation} 
Given that we operate in a learning setting and only have biased estimates of the CPT-value from Algorithm \ref{alg:holder-est}, we require a simulation scheme to estimate $\nabla \C(X^\theta)$.  
Simultaneous perturbation methods are a general class of stochastic gradient schemes that optimize a function given only noisy sample values - see \cite{Bhatnagar13SR} for a textbook introduction. SPSA is a well-known scheme that estimates the gradient using two sample values. In our context, at any iteration $n$ of CPT-SPSA-G, with parameter $\theta_n$, the gradient $\nabla \C(X^{\theta_n})$ is estimated as follows: For any  $i=1,\ldots,d$,
\begin{align}
\widehat \nabla_{i} \C(X^\theta) = \dfrac{\overline \C_n^{\theta_n+\delta_n \Delta_n} - \overline \C_n^{\theta_n-\delta_n \Delta_n}}{2 \delta_n \Delta_n^{i}},\label{eq:grad-est-spsa}
\end{align}
where $\delta_n$ is a positive scalar that satisfies (A3) below, $\Delta_n = \left( \Delta_n^{1},\ldots,\Delta_n^{d}\right)\tr$, where $\{\Delta_n^{i}, i=1,\ldots,d\}$, $n=1,2,\ldots$ are i.i.d. Rademacher, independent of $\theta_0,\ldots,\theta_n$ and $\overline \C_n^{\theta_n+\delta_n \Delta_n}$ (resp. $\overline \C_n^{\theta_n-\delta_n \Delta_n}$) denotes the CPT-value estimate that uses $m_n$ samples of the r.v. $X^{\theta_n+\delta_n \Delta_n}$ (resp. $\overline X^{\theta_n-\delta_n \Delta_n}$).
%From the asymptotic mean square analysis that we present later, it is optimal to set $\delta_n = \delta_0/n^{0.16}$.
The (asymptotic) unbiasedness of the gradient estimate is proven in Lemma \ref{lemma:1spsa-bias}.

%This idea of using two-point feedback for estimating the gradient has been employed in various settings. Machine learning applications include bandit/stochastic convex optimization - cf. 
%\cite{hazan2015online}, \cite{duchi2013optimal}. However, the idea applies to non-convex functions as well - cf. \cite{spall2005introduction}, \cite{Bhatnagar13SR}.

\subsection{Update rule} We incrementally update the parameter $\theta$ in the ascent direction as follows: For   $i=1,\ldots,d$,
\begin{align}
\theta^{i}_{n+1} = \Gamma_{i}\left(\theta^{i}_n + \gamma_n  \widehat \nabla_{i} \C(X^{\theta_n})\right),
\label{eq:theta-update}
\end{align}
where  $\gamma_n$ is a step-size chosen to satisfy (A3) below and
$\Gamma=\left(\Gamma_{1},\ldots,\Gamma_{d}\right)$ is an operator that ensures that the update \eqref{eq:theta-update} stays bounded within a compact and convex set $\Theta$. 
Algorithm \ref{alg:1spsa}  presents the pseudocode.

%%%%%%%%%%%%%%%% alg-custom-block %%%%%%%%%%%%
\algblock{PEval}{EndPEval}
\algnewcommand\algorithmicPEval{\textbf{\em CPT-value Estimation (Trajectory 1)}}
 \algnewcommand\algorithmicendPEval{}
\algrenewtext{PEval}[1]{\algorithmicPEval\ #1}
\algrenewtext{EndPEval}{\algorithmicendPEval}

\algblock{PEvalPrime}{EndPEvalPrime}
\algnewcommand\algorithmicPEvalPrime{\textbf{\em CPT-value Estimation (Trajectory 2)}}
 \algnewcommand\algorithmicendPEvalPrime{}
\algrenewtext{PEvalPrime}[1]{\algorithmicPEvalPrime\ #1}
\algrenewtext{EndPEvalPrime}{\algorithmicendPEvalPrime}

\algblock{PImp}{EndPImp}
\algnewcommand\algorithmicPImp{\textbf{\em Gradient Ascent}}
 \algnewcommand\algorithmicendPImp{}
\algrenewtext{PImp}[1]{\algorithmicPImp\ #1}
\algrenewtext{EndPImp}{\algorithmicendPImp}

\algtext*{EndPEval}
\algtext*{EndPEvalPrime}
\algtext*{EndPImp}
%%%%%%%%%%%%%%%%%%%
\begin{algorithm}[t]
\begin{algorithmic}
    \State {\bf Input:}  initial parameter $\theta_0 \in \Theta$ where $\Theta$ is a compact and convex subset of $\R^d$, perturbation constants $\delta_n>0$, sample sizes $\{m_n\}$, step-sizes $\{\gamma_n\}$, operator $\Gamma: \R^d \rightarrow \Theta$.
\For{$n = 0,1,2,\ldots$}	
	\State Generate $\{\Delta_n^i, i=1,\ldots,d\}$ using Rademacher distribution, independent of $\{\Delta_m, m=0,1,\ldots,n-1\}$.
	\PEval
	    \State Simulate $m_n$ samples using  $(\theta_n+\delta_n \Delta_n)$.
	    \State Obtain CPT-value estimate $\overline \C_n^{\theta_n+\delta_n \Delta_n}$. 
	    \EndPEval
	    \PEvalPrime
  	    \State Simulate $m_n$ samples using $(\theta_n-\delta_n \Delta_n)$.
	    \State Obtain CPT-value estimate $\overline \C_n^{\theta_n-\delta_n \Delta_n}$.
	    \EndPEvalPrime
	    \PImp
		\State Update $\theta_n$ using \eqref{eq:theta-update}.
		\EndPImp
\EndFor
\State {\bf Return} $\theta_n$.
\end{algorithmic}
\caption{Structure of CPT-SPSA-G algorithm.}
\label{alg:1spsa}
\end{algorithm}

 %\begin{figure}
    %\centering
     %\begin{tabular}{cc}
%\subfigure[Simulation optimization]{
%\scalebox{0.6}{\begin{tikzpicture}
%% We start by placing the blocks
%\node (theta) {$\boldsymbol{\theta}$};
%\node [block, fill=blue!20,right=0.6cm of theta,align=center] (sample) {\makecell{\textbf{Measurement}\\\textbf{ Oracle}}}; 
%\node [right=0.6cm of sample] (end) {$\boldsymbol{\mathbf{f(\theta) + \xi}}$};
%\node [ above right= 0.6cm of end] (bias) {\textbf{Zero mean}};
%\draw [->] (theta) --  (sample);
%\draw [->] (sample) -- (end);
%\path [darkgreen,->] (bias) edge [bend left] (end);
%\end{tikzpicture}}
%}
%&
%\subfigure[CPT-value optimization]{
%\scalebox{0.6}{\begin{tikzpicture}
%% We start by placing the blocks
%\node (theta) {$\boldsymbol{\theta, \epsilon}$};
%\node [block, fill=blue!20,right=0.6cm of theta,align=center] (sample) {\makecell{\textbf{CPT}\\\textbf{ Estimator}}}; 
%\node [right=0.6cm of sample] (end) {$\boldsymbol{\mathbf{\C(X^\theta) + \epsilon}}$};
%\node [ above right= 0.6cm of end] (bias) {\textbf{Controlled bias}};
%\draw [->] (theta) --  (sample);
%\draw [->] (sample) -- (end);
%\path [red,->] (bias) edge [bend left] (end);
%\end{tikzpicture}}
%}
%\end{tabular}
%\caption{Illustration of difference between classic simulation optimization and CPT-value optimiziation settings}
%\label{fig:opt-diff}
%\end{figure}

\paragraph{On the number of samples $m_n$ per iteration:}
The CPT-value estimation scheme is biased, i.e., providing samples with parameter $\theta_n$ at instant $n$, we obtain its CPT-value estimate as $\C(X^{\theta_n}) + \epsilon_n^\theta$, with $\epsilon_n^\theta$ denoting the bias. The bias can be controlled by increasing the number of samples $m_n$ in each iteration of CPT-SPSA (see Algorithm \ref{alg:1spsa}). This is unlike classic simulation optimization settings where one only sees function evaluations with zero mean noise and there is no question of deciding on $m_n$ to control the bias as we have in our setting.

To motivate the choice for $m_n$, we first rewrite the update rule \eqref{eq:theta-update} as follows:
\begin{align*}
\theta^{i}_{n+1}  = & \Gamma_{i}\bigg( \theta^{i}_n +  \gamma_n \bigg( \frac{\C(X^{\theta_n +\delta_n\Delta_n}) - \C(X^{\theta_n-\delta_n\Delta_n})}{2\delta_n\Delta_n^{i}}\bigg) + \underbrace{\frac{(\epsilon_n^{\theta_n +\delta_n\Delta_n} - \epsilon_n^{\theta_n-\delta_n\Delta_n})}{2\delta_n\Delta_n^{i}}}_{\kappa_n}\bigg).
\end{align*}
Let $\zeta_n = \sum_{l = 0}^{n} \gamma_l \kappa_{l}$. Then, a critical requirement that allows us to ignore the bias term $\zeta_n$ is the following condition (see Lemma 1 in Chapter 2 of \cite{borkar2008stochastic}): 
$$\sup_{l\ge0} \left (\zeta_{n+l} - \zeta_n \right) \rightarrow 0 \text{ as } n\rightarrow\infty.$$ 
While Theorems \ref{prop:holder-asymptotic}--\ref{prop:holder-dkw} show that the bias $\epsilon^\theta$ is bounded above, to establish convergence of the policy gradient recursion \eqref{eq:theta-update}, we increase the number of samples $m_n$ so that the bias vanishes asymptotically.  The assumption below provides a condition on the increase rate of $m_n$.

\noindent\textbf{Assumption (A3).}  The step-sizes $\gamma_n$ and the perturbation constants 
$\delta_n$ are positive $\forall n$ and satisfy
\begin{align*}
\gamma_n, \delta_n \rightarrow 0, \frac{1}{m_n^{\alpha/2}\delta_n}\rightarrow 0,  \sum_n \gamma_n=\infty \text{ and } \sum_n \frac{\gamma_n^2}{\delta_n^2}<\infty. 
\end{align*}
While the conditions on $\gamma_n$ and $\delta_n$ are standard for SPSA-based algorithms, the condition on $m_n$ is motivated by the earlier discussion. 
A simple choice that satisfies the above conditions is $\gamma_n = a_0/n$, $m_n = m_0 n^\nu$ and $\delta_n = \delta_0/{n^\gamma}$, for some $\nu, \gamma >0$ with $\gamma > \nu\alpha/2$.

\subsection{Convergence result}
\begin{theorem}
\label{thm:1spsa-conv}
Assume (A1)-(A3) and also that $\C(X^\theta)$ is a continuously differentiable function of $\theta$, for any $\theta \in \Theta$\footnote{In a typical RL setting, it is sufficient to assume that the policy is continuously differentiable in $\theta$.}.
Consider the  ordinary differential equation (ODE): 
$$\dot\theta^{i}_t = \check\Gamma_{i}\left(- \nabla \C(X^{\theta^{i}_t})\right), \text{ for }i=1,\dots,d,$$ 
where 
$\check\Gamma_{i}(f(\theta)) := \lim\limits_{\alpha \downarrow 0} \frac{\Gamma_{i}(\theta + \alpha f(\theta)) - \theta}{\alpha}$, for any continuous $f(\cdot).$
 Let $\K = \{\theta \mid \check\Gamma_{i} \left(\nabla_i \C(X^{\theta})\right)=0, \forall i=1,\ldots,d\}$. Then, for $\theta_n$ governed by \eqref{eq:theta-update}, we have
$$\theta_n \rightarrow \K \text{ a.s. as } n\rightarrow \infty.$$
\end{theorem}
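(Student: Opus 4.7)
The plan is to cast \eqref{eq:theta-update} as a projected stochastic approximation recursion and invoke the Kushner--Clark ODE method (as in Chapter 5.4 of Kushner--Yin, or Chapter 2 of \cite{borkar2008stochastic}). To this end I would decompose the SPSA gradient estimate as
\begin{align*}
\widehat\nabla_i \C(X^{\theta_n}) = \nabla_i \C(X^{\theta_n}) + b_n^{\text{SPSA}} + b_n^{\text{CPT}} + M_{n+1}^i,
\end{align*}
where $b_n^{\text{SPSA}}$ comes from the finite-difference approximation, $b_n^{\text{CPT}}$ captures the bias due to the fact that $\overline \C_n^{\theta \pm \delta_n\Delta_n}$ is only an approximation of $\C(X^{\theta \pm \delta_n\Delta_n})$, and $M_{n+1}^i$ is a martingale-difference noise with respect to the filtration $\mathcal F_n = \sigma(\theta_0,\Delta_0,\ldots,\Delta_{n-1})$.

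For the SPSA bias, a second-order Taylor expansion of $\C(X^\theta)$ around $\theta_n$ combined with the independence and Rademacher properties of $\Delta_n$ gives $b_n^{\text{SPSA}} = O(\delta_n^2)$, which is $o(1)$ by (A3). For the CPT-estimation bias, the key tool is Proposition \ref{prop:holder-dkw}: with $m_n$ samples the bias $\epsilon_n^\theta$ of $\overline \C_n^\theta$ is of order $m_n^{-\alpha/2}$ in probability, so dividing by $2\delta_n\Delta_n^i$ the term $\kappa_n$ in the excerpt satisfies $|\kappa_n| = O(1/(m_n^{\alpha/2}\delta_n))$, which vanishes by (A3). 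The delicate point is to pass from a tail bound on $\epsilon_n^\theta$ to an a.s. statement uniformly along the iterates; I would do this by a Borel--Cantelli argument using the summability implied by the rate $\gamma_n/(m_n^{\alpha/2}\delta_n)\to 0$, which in turn lets the accumulated bias $\zeta_n = \sum_{l\le n}\gamma_l\kappa_l$ satisfy $\sup_{l\ge 0}(\zeta_{n+l}-\zeta_n)\to 0$, as required by Lemma 1, Chapter 2 of \cite{borkar2008stochastic}.

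For the martingale noise, boundedness of $\theta_n \in \Theta$ together with continuous differentiability of $\C(X^\theta)$ yields $E[|M_{n+1}^i|^2 \mid \mathcal F_n] \le K/\delta_n^2$ for some constant $K$. The standard martingale convergence argument then shows $\sum_n \gamma_n M_{n+1}^i$ converges a.s., using the condition $\sum_n \gamma_n^2/\delta_n^2 < \infty$ from (A3). Having dispensed with the two biases and the noise, the recursion reduces to a noisy, projected Euler discretization of the ODE $\dot\theta = \check\Gamma(\nabla\C(X^\theta))$. Invoking the Kushner--Clark lemma (or Theorem 5.3.1 of Kushner--Yin) for projected stochastic approximation, together with the fact that $\C(X^\theta)$ is a Lyapunov function for this ODE on $\Theta$ (since $\frac{d}{dt}\C(X^{\theta_t}) = \langle\nabla\C(X^{\theta_t}),\check\Gamma(\nabla\C(X^{\theta_t}))\rangle \ge 0$, with equality exactly on $\K$), gives $\theta_n \to \K$ a.s.

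The main obstacle will be the coupling between the two vanishing-bias sources and the unbounded variance of the SPSA estimator (scaling as $1/\delta_n^2$). The bias analysis has to be made uniform in $\theta \in \Theta$, which requires combining the high-probability bound of Proposition \ref{prop:holder-dkw} (applied with carefully chosen $\epsilon = \epsilon_n \downarrow 0$ and $\delta = \delta_n \downarrow 0$) with a Borel--Cantelli step; the rate condition $1/(m_n^{\alpha/2}\delta_n)\to 0$ in (A3) is precisely what makes this feasible, and its interaction with $\sum_n \gamma_n^2/\delta_n^2 < \infty$ is what dictates the admissible choices of $(\gamma_n,\delta_n,m_n)$ highlighted after (A3).
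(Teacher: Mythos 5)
Your proposal follows essentially the same route as the paper's own proof: the same decomposition of the SPSA estimate into the true gradient, an $O(\delta_n^2)$ finite-difference bias (via Taylor expansion and the Rademacher properties of $\Delta_n$, cf.\ Lemma \ref{lemma:1spsa-bias}), a CPT-estimation bias killed by Proposition \ref{prop:holder-dkw} together with $1/(m_n^{\alpha/2}\delta_n)\to 0$, and a martingale-difference noise with conditional second moment of order $1/\delta_n^2$ handled through $\sum_n \gamma_n^2/\delta_n^2<\infty$, all fed into the Kushner--Clark lemma with $\C(X^{\theta})$ as Lyapunov function. The one point where you go beyond the paper is the explicit Borel--Cantelli step to upgrade the high-probability bias bound of Proposition \ref{prop:holder-dkw} to an almost-sure statement along the iterates --- a step the paper's proof leaves informal --- which is a genuine refinement rather than a departure in method.
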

\begin{proof}
 See Appendix \ref{appendix:1spsa}.
\end{proof}

See Appendix \ref{sec:2spsa} for a second-order CPT-value optimization scheme based on SPSA.
\section{Simulation Experiments}
\label{sec:expts}

We consider a traffic signal control application where the aim is to improve the road user experience by an adaptive traffic light control (TLC) algorithm.
We apply the CPT-functional to the delay experienced by road users, since CPT captures realistically captures the attitude of the road users towards delays. We then optimize the CPT-value of the delay and contrast this approach with traditional expected delay optimizing algorithms.

%\paragraph{Simulation Setup:}  
We consider a road network with $\N$ signalled lanes that are spread across junctions and $\M$ paths, where each path connects (uniquely) two edge nodes, where the traffic is generated - see Figure \ref{fig:2x2grid}.
At any instant $n$, let $q_n^i$ and $t_n^i$ denote the queue length and elapsed time since the lane turned red, for any lane $i = 1,\ldots, \N$. Let $d_n^{i,j}$ denote the delay experienced by $j$th road user on $i$th path, for any $i=1,\ldots,\M$ and $j=1,\ldots,n_i$, where $n_i$ denotes the number of road users on path $i$.
We specify the various components of the traffic control MDP in the following.
The state $s_n=(q_n^1,\ldots,q_n^{\N},t_n^1,\ldots,t_n^{\N},d_n^{1,1},\ldots,d_n^{\M,n_{\M}})\tr$ is a vector of lane-wise queue lengths, elapsed times and path-wise delays.
The actions are the feasible sign configurations. Traffic lights that can be simultaneously switched to green form a sign configuration. 

 \begin{figure}
    \centering
     \begin{tabular}{c}
\subfigure[2x2-grid network]{
\label{fig:2x2grid}
        \includegraphics[width=2.2in,height=1.2in]{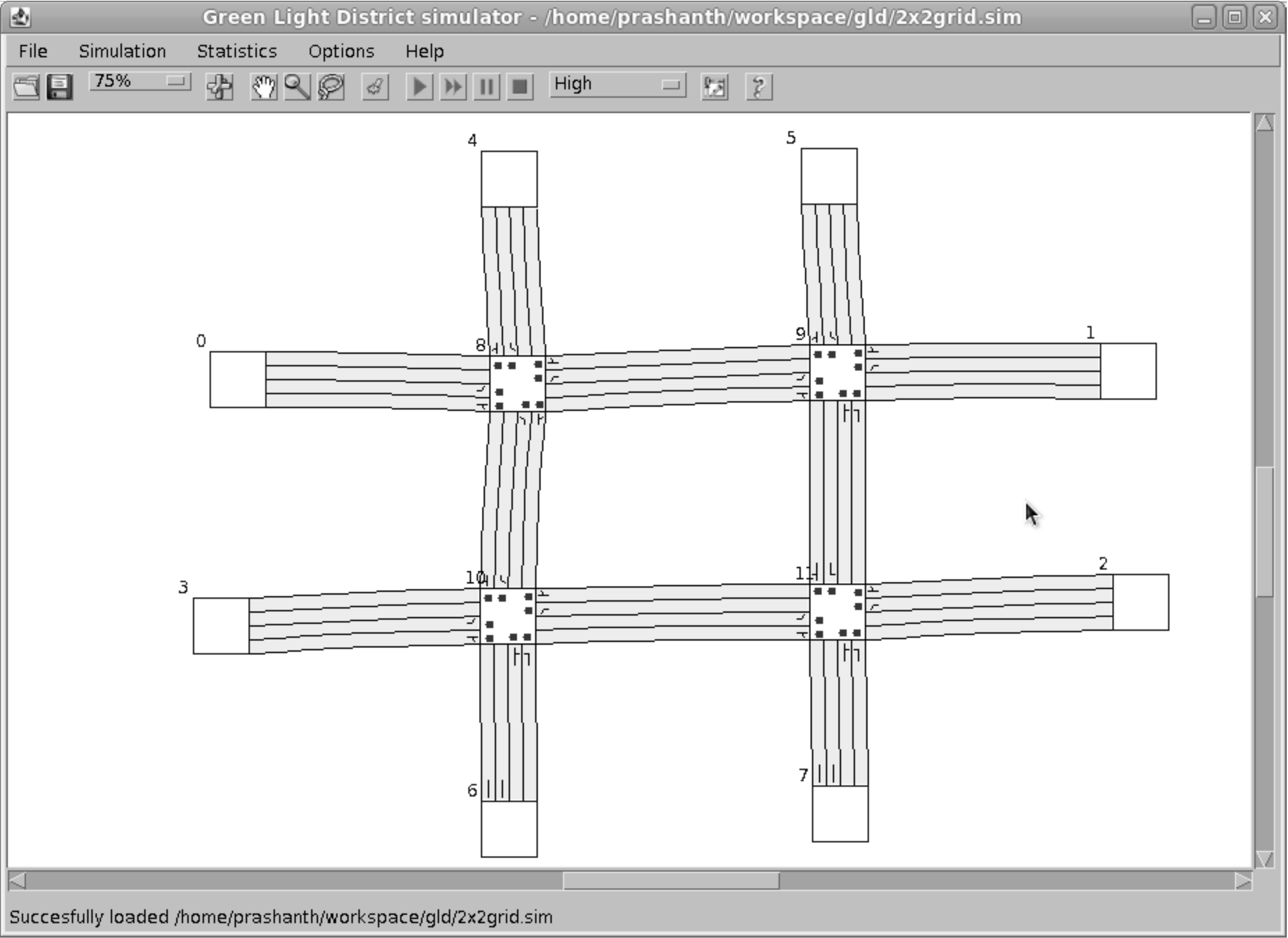}
}
\\		
\subfigure[AVG-SPSA]{
\label{fig:avg}
%\hspace{2em} 
\tabl{c}{\scalebox{0.7}{\begin{tikzpicture}
\begin{axis}[
ybar={2pt},
legend pos=outer north east,
legend image code/.code={\path[fill=white,white] (-2mm,-2mm) rectangle
(-3mm,2mm); \path[fill=white,white] (-2mm,-2mm) rectangle (2mm,-3mm); \draw
(-2mm,-2mm) rectangle (2mm,2mm);},
ylabel={\bf Frequency},
xlabel={\textbf{Bin}},
x label style={at={(axis description cs:0.5,-0.15)},anchor=north},
symbolic x coords={0, -335.86, -315.86, -295.86, -275.86, -255.86, -235.86, -215.86, -195.86, -175.86, -155.86, 11},
xmin={0},
xmax={11},
xtick=data,
ytick align=outside,
xticklabel style={rotate=50, align=center},
bar width=14pt,
nodes near coords,
grid,
grid style={gray!30},
width=11cm,
height=6.5cm,
]
\addplot[red, fill=red!20]   coordinates {  (-335.86,1) (-315.86,4) (-295.86,7) (-275.86,7) (-255.86,18) (-235.86,22) (-215.86,17) (-195.86,11) (-175.86,6) (-155.86,5)}; %LSPI
\end{axis}
\end{tikzpicture}}\\[1ex]}
}
\\
%%%%%%%%%%%%%%
\subfigure[EUT-SPSA]{
\label{fig:eut}
%\hspace{2em} 
\tabl{c}{\scalebox{0.7}{\begin{tikzpicture}
\begin{axis}[
ybar={2pt},
legend pos=outer north east,
legend image code/.code={\path[fill=white,white] (-2mm,-2mm) rectangle
(-3mm,2mm); \path[fill=white,white] (-2mm,-2mm) rectangle (2mm,-3mm); \draw
(-2mm,-2mm) rectangle (2mm,2mm);},
ylabel={\bf Frequency},
xlabel={\textbf{Bin}},
x label style={at={(axis description cs:0.5,-0.15)},anchor=north},
symbolic x coords={0,-188.19,-175.69,-163.19,-150.69,-138.19,-125.69,-113.19,-100.69,-88.19,-75.69,11},
xmin={0},
xmax={11},
xtick=data,
ytick align=outside,
xticklabel style={rotate=50, align=center},
bar width=14pt,
nodes near coords,
grid,
grid style={gray!30},
width=11cm,
height=6.5cm,
]
\addplot   coordinates {  (-188.19,1) (-175.69,0) (-163.19,0) (-150.69,1) (-138.19,9) (-125.69,14) (-113.19,21) (-100.69,26) (-88.19,19) (-75.69,8) }; %LSPI
\end{axis}
\end{tikzpicture}}\\[1ex]}
}
\\
%%%%%%%%%%%%%%%%%%%%%%%%%
\subfigure[CPT-SPSA]{
\label{fig:cpt}
%\hspace{2em} 
\tabl{c}{\scalebox{0.7}{\begin{tikzpicture}
\begin{axis}[
ybar={2pt},
legend pos=outer north east,
legend image code/.code={\path[fill=white,white] (-2mm,-2mm) rectangle
(-3mm,2mm); \path[fill=white,white] (-2mm,-2mm) rectangle (2mm,-3mm); \draw
(-2mm,-2mm) rectangle (2mm,2mm);},
ylabel={\bf Frequency},
xlabel={\textbf{Bin}},
x label style={at={(axis description cs:0.5,-0.15)},anchor=north},
symbolic x coords={0, -43.36,-33.36,-23.36,-13.36,-3.36,6.64,16.64,26.64,36.64,46.64, 11},
xmin={0},
xmax={11},
xtick=data,
ytick align=outside,
xticklabel style={rotate=50, align=center},
bar width=14pt,
nodes near coords,
grid,
grid style={gray!30},
width=11cm,
height=6.5cm,
]
\addplot[darkgreen, fill=darkgreen!20]   coordinates {  (-43.36,1) (-33.36,0) (-23.36,0) (-13.36,1) (-3.36,0) (6.64,1) (16.64,8) (26.64,52) (36.64,24) (46.64,12) }; %LSPI
\end{axis}
\end{tikzpicture}}\\[1ex]}
}
\end{tabular}
\caption{Histogram of CPT-value of the average delay for three different algorithms (all based on SPSA): AVG uses plain sample means (no utility/weights), EUT uses utilities but no weights and CPT uses both utilites and weights. Note: larger values are better.}
\label{fig:histogram-perf}
\end{figure}

We consider three different notions of return as follows:

\textbf{CPT:} Let $\mu^i$ be the proportion of road users along path $i$, for $i=1,\ldots,\M$. Any road user along path $i$, will evaluate the delay he experiences in a manner that is captured well by CPT. Let $X_i$ be the delay r.v. for path $i$ and let the corresponding CPT-value be $\C(X_i)$. With the objective of maximizing the experience of road users across paths, the overall return to be optimized is given by
\begin{align}
\text{CPT}(X_1,\ldots,X_{\M}) = \sum_{i=1}^{\M} \mu^i \C(X_i).\label{eq:cpt-traffic}
\end{align}
\textbf{EUT:} Here we only use the utility functions $u^+$ and $u^-$ to handle gains and losses, but do not distort probabilities. 
Thus, the EUT objective is defined as\\
\centerline{$\text{EUT}(X_1,\ldots,X_{\M}) = \sum_{i=1}^{\M} \mu^i \left(\E(u^+(X_i) - \E(u^-(X_i)\right),$}
	
where $\E(u^+(X_i)) = \intinfinity P(u^+(X_i)>z) dz$ and $\E(u^-(X_i)) - \intinfinity P(u^-(X_i)>z) dz$, for $i=1,\ldots,\M$.

\textbf{AVG:} This is similar to EUT, except that no distinction between gains and losses via utility functions nor distort using weights as in CPT. Thus, $\text{AVG}(X_1,\ldots,X_{\M}) = \sum_{i=1}^{\M} \mu^i \E(X_i)$. 
%Thus, the AVG objective is defined as
%\begin{align}
%\text{AVG}(X_1,\ldots,X_{\M}) = \sum_{i=1}^{\M} \mu^i \E(X_i).
%\end{align}   
%where $\E(X_i) = \intinfinity P(u^+(X_i)>z) dz - \intinfinity P(u^-(X_i)>z) dz$.

An important recommendation of CPT is to employ a reference point to calculate gains and losses. 
%Choosing a suitable reference point is challenging, but \cite{tversky1992advances} advocate using status-quo as the reference point. 
In our setting, we use path-wise delays obtained from a pre-timed TLC (cf. the Fixed TLCs in \cite{prashanth2011reinforcement}) as the reference point. In other words, if the delay of any algorithm (say CPT-SPSA) is less than that of pre-timed TLC, then the (positive) difference in delays is perceived as a gain and in the complementary case, the delay difference is perceived as a loss. The $d_n^{i,j}$ in the state $s_n$ are to be understood as the delay difference to the pre-timed TLC.  

The underlying policy in all the algorithms that we implement follows a Boltzmann distribution  and has the form
$
\pi_{\theta}(s,a) = \frac{e^{\theta^{\top} \phi_{s,a}}}{\sum_{a' \in {\A(s)}} e^{\theta^{\top} \phi_{s,a'}}},
\hspace{6pt} \forall s \in \S,\;\forall a \in \A(s),
$
where the features $\phi(s,a)$ are chosen as in \cite{prashanth2012threshold}.
%, as the road network considered is high-dimensional (state space cardinality $> 10^{60}$). 

We implement the following TLC algorithms:

{\bf\em CPT-SPSA}: This is the first-order algorithm with SPSA-based gradient estimates, as described in Algorithm \ref{alg:1spsa}. In particular, the estimation scheme in Algorithm \ref{alg:holder-est} is invoked to estimate $\C(X_i)$ for each path $i=1,\ldots,\M$, with $d_n^{i,j}, j=1,\ldots,n_i$ as the samples.

{\bf\em EUT-SPSA}: This is similar to CPT-SPSA, except that weight functions $w^+(p)=w^-(p)=p,$ for $p\in [0,1]$. 

{\bf\em AVG-SPSA}: This is similar to CPT-SPSA, except that weight functions $w^+(p)=w^-(p)=p,$ for $p\in [0,1]$. 

For both CPT-SPSA and EUT-SPSA, we set the utility functions (see \eqref{eq:cpt-general}) as follows:
$u^+(x) =  |x|^{\sigma}$, and  $u^-(x) = \lambda |x|^{\sigma}$, with $\lambda = 2.25$ and $\sigma = 0.88$.
For CPT-SPSA, we set the weights as follows:
$w^+(p) = \frac{p^{\eta_1}}{{(p^{\eta_1}+ (1-p)^{\eta_1})}^{\frac{1}{\eta_1}}}$ and  
$w^-(p) = \frac{p^{\eta_2}}{{(p^{\eta_2}+ (1-p)^{\eta_2})}^{\frac{1}{\eta_2}}}$, with
$\eta_1 = 0.61$ and $\eta_2 = 0.69$. These choices are based on median estimates given by \cite{tversky1992advances} and have been used earlier in a traffic application (see \cite{gao2010adaptive}).
For all the algorithms, we set $\delta_n = 1.9/n^{0.101}$ and $a_n = 1/(n+50)$ and this is motivated by standard guidelines - see \cite{spall2005introduction}. The initial point $\theta_0$ is the $d$-dimensional vector of ones and $\forall i$, the operator $\Gamma_i$ projects $\theta_i$ onto the set $[0.1, 10.0]$.
    
The experiments involve two phases.
A training phase where we run each algorithm for $200$ iterations, with each iteration involving two perturbed simulations, each of trajectory length $500$. This is followed by a test phase where we fix the policy for each algorithm and $100$ independent simulations of the MDP (each with a trajectory length of $1000$) are performed. After each run in the test phase, the overall CPT-value \eqref{eq:cpt-traffic} is estimated. 
%\paragraph{Results:} 

Figures \ref{fig:avg}--\ref{fig:cpt} present the histogram of the CPT-values from the test phase for AVG-SPSA, EUT-SPSA and CPT-SPSA, respectively.  A similar exercise for pre-timed TLC resulted in a CPT-value of $-46.14$. It is evident that each algorithm converges to a different policy. However, the CPT-value of the resulting policies is highest in the case of CPT-SPSA followed by EUT-SPSA and AVG-SPSA in that order. Intuitively this is expected because AVG-SPSA uses neither utilities nor probability distortions, while EUT-SPSA distinguishes between gains and losses using utilities while not using weights to distort probabilities.
The results in Figure \ref{fig:histogram-perf} argue for specialized algorithms that incorporate CPT-based criteria, esp. in the light of previous findings which show CPT matches human evaluation well and there is a need for algorithms that serve human needs well.

%%%%%%%%%%%%%%%%%%%%%%%%%%%%%%%%%%%%%%%%%%%%%%%%%%%%%%%%%%%%%%
%%%%%%%%%%%%%%%%%%%%%%%%%%%%%%%%%%%%%%%%%%%%%%%%%%%%%%%%%%%%%%
%%%%%%%%%%%%%%%%%%%%%%%%%%%%%%%%%%%%%%%%%%%%%%%%%%%%%%%%%%%%%%
%%%%%%%%%%%%%%%%%%%%%%%%%%%%%%%%%%%%%%%%%%%%%%%%%%%%%%%%%%%%%%

\section{Conclusions and Future Work}
\label{sec:conclusions}
CPT has been a very popular paradigm for modeling human decisions among psychologists/economists, but has escaped the radar of the AI community. This work is the first step in incorporating CPT-based criteria into an RL framework. However, both estimation and control of CPT-based value is challenging. 
%Using temporal-difference learning type algorithms for estimation was ruled out for CPT-value since the underlying probabilities get (non-linearly) distorted by a weight function. 
We proposed a quantile-based estimation scheme that converges at the optimal rate. Next, for the problem of control, since CPT-value does not conform to any Bellman equation, we employed SPSA - a popular simulation optimization scheme and designed a first-order algorithm for optimizing the CPT-value. 
We provided theoretical convergence guarantees for all the proposed algorithms and illustrated the usefulness of CPT-based criteria in a traffic signal control application.

%%%%%%%%%%%%%%%%%%%%%%%%%%%%%%%%%%%%%%%%%%%%%%%%%%%%%%%%%%%%

\section*{Appendix}

%!TEX root =  cpt-rl-icml.tex

\appendix

%%%%%%%%%%%%%%%%%%%%%%%%%%%%%%%%%%%%%%%%%%%%%%%%%%%%%%%%%%%%%%

%%%%%%%%%%%%%%%%%%%%%%%%%%%%%%%%%%%%%%%%%%%%%%%%%%%%%%%%
\section{Background on CPT}
\label{sec:appendix-cpt-intro}
\todoc[inline]{Is this section referred to from the main text?}
For a random variable $X$, let $p_i, i=1,\ldots,K$ denote the probability of incurring a gain/loss $x_i, i=1,\ldots,K$. %Assume $x_1 \le \ldots \le x_K$. 
Given a utility function $u$ and weighting function $w$, \textit{\textbf{Prospect theory}} (PT) value is defined as $\C(X) = \sum_{i=1}^K u(x_i) w(p_i)$. 
The idea is to take an utility function that is $S$-shaped, so that it satisfies the \textit{diminishing sensitivity}  property. 
If we take the weighting function $w$ to be the identity, then one recovers the classic expected utility. A general weight function inflates low probabilities and deflates high probabilities and this has been shown to be close to the way humans make decisions (see \cite{kahneman1979prospect}, \cite{fennema1997original} for a justification, in particular via empirical tests using human subjects).
%However, distorting the probabilities via a weighting function (that is not identity) is a useful generalization since it  overcomes some of the ill-effects of expected utility and is closer to human decision making - see \cite{kahneman1979prospect}. 
However, PT is lacking in some theoretical aspects as it violates first-order \textit{stochastic dominance}. Consider the following example from \cite{fennema1997original}: Suppose there are $20$ prospects (outcomes) ranging from $-10$ to $180$, each with probability $0.05$. If the weight function is such that $w(0.05) > 0.05$, then it uniformly overweights all \textit{low-probability} prospects and the resulting PT value is higher than the expected value $85$. This violates stochastic dominance, since a shift in the probability mass from bad outcomes did not result in a better prospect. 

\textbf{Cumulative prospect theory} (CPT) \cite{tversky1992advances} uses a similar measure as PT, except that the weights are a function of cumulative probabilities. First, separate the gains and losses as 
$x_1\le \ldots \le x_l \le 0 \le x_{l+1} \le \ldots \le x_K$. Then, the CPT-value is defined as 
\begin{align*}
\C(X) = &(u^-(x_1))\cdot w^-(p_1) 
+\sum_{i=2}^l u^-(x_i) \Big(w^-(\sum_{j=1}^i p_j) - w^-(\sum_{j=1}^{i-1} p_j)\Big) 
\\&
 + \sum_{i=l+1}^{K-1} u^+(x_i) \Big(w^+(\sum_{j=i}^K p_j) - w^+(\sum_{j=i+1}^K p_j) \Big)
 + u^+(x_K)\cdot w^+(p_K), 
\end{align*} 
where $u^+, u^-$ are utility functions and $w^+, w^-$ are weight functions corresponding to gains and losses, respectively. The utility functions $u^+$ and $u^-$ are non-decreasing, while the weight functions are continuous, non-decreasing and have the range $[0,1]$ with $w^+(0)=w^-(0)=0$ and $w^+(1)=w^-(1)=1$ . 
Unlike PT, the CPT-value does not violate stochastic dominance. In the aforementioned example, increasing $w^-(0.05)$ and $w^+(0.05)$ does not impact outcomes other than those on the extreme, i.e., $-10$ and $180$, respectively. For instance, the weight for outcome $100$ would be $w^+(0.45) - w^+(0.40)$. Thus, CPT formalizes the intuitive notion that humans are sensitive to extreme outcomes and relatively insensitive to intermediate ones.

\subsection*{Allais paradox}
Suppose we have the following two traffic light switching policies:

\textbf{\textit{[Policy 1]}} A throughput (number of vehicles that reach destination per unit time) of $1000$  w.p. $1$. Let this be denoted by $(1000,1)$.

\textbf{\textit{[Policy 2]}}  $(10000, 0.1; 1000,0.89; 100, 0.01)$ i.e., throughputs $10000$, $1000$ and $100$ with respective probabilities $0.1$, $0.89$ and $0.01$.

Humans usually choose Policy $1$ over Policy $2$. On the other hand, consider the following two policies:

\textbf{\textit{[Policy 3]}} (100,0.89; 1000, 0.11)

\textbf{\textit{[Policy 4]}} (100,0.9; 10000, 0.1)

Humans usually choose Policy $4$ over Policy $3$. 

We can now argue against using expected utility (EU) as an objective as follows: Let $u$ be the utility function in EU.
\begin{align}
&\text{Policy 1 is preferred over Policy 2}\nonumber\\ 
&\Rightarrow u(1000) > 0.1 u(10000) + 0.89 u(1000) + 0.01 u(100)\nonumber\\
&\Rightarrow 0.11 u(1000) > 0.1 u(10000) + 0.01 u(100) \label{eq:12}\\[1ex]
&\text{Policy 4 is preferred over Policy 3}\nonumber\\ 
&\Rightarrow 0.89 u(100) + 0.11 u(1000) < 0.9 u(100) + 0.1 u(10000)\nonumber\\
&\Rightarrow 0.11 u(1000) < 0.1 u(10000) + 0.01 u(100) \label{eq:23}
\end{align}

And we have a contradiction from \eqref{eq:12} and \eqref{eq:23}.

%\newpage
%%%%%%%%%%%%%%%%%%%%%%%%%%%%%%%%%%%%%%%%%%%%%%%%%%%%%%%%%%%%%%
%%%%%%%%%%%%%%%%%%%%%%%%%%%%%%%%%%%%%%%%%%%%%%%%%%%%%%%%%%%%%%
%%%%%%%%%%%%%%%%%%%%%%%%%%%%%%%%%%%%%%%%%%%%%%%%%%%%%%%%%%%%%%%%%%%%%%%%%%%%%%%%
%%%%%%%%%%%%%%%%%%%%%%%%%%%%%%%%%%%%%%%%%%%%%%%%%%%%%%%%%%%%%%%%%%%%%%%%%%%%%%%%
\section{CPT-value in a Stochastic Shortest Path Setting}
\label{sec:cpt-ssp}
We consider a stochastic shortest path (SSP) problem with states $\S=\{0,\ldots,\L\}$, where $0$ is a special reward-free absorbing state.  A randomized policy $\pi$ is a function that maps any state $s\in \S$ onto a probability distribution over the actions $\A(s)$ in state $s$. As is standard in policy gradient algorithms, we parameterize $\pi$ and assume it is continuously differentiable in its parameter $\theta \in \R^d$.  
An \textit{episode} is a simulated sample path using policy $\theta$ that starts in state $s^0\in \S$, visits $\{s_1,\ldots, s_{\tau-1}\}$ before ending in the absorbing state $0$, where $\tau$ is the first passage time to state $0$.
Let $D^\theta(s^0)$ be a random variable (r.v) that denote the total reward from an episode, defined by
$$ D^\theta(s^0) = \sum\limits_{m=0}^{\tau-1} r(s_m,a_m), $$
where the actions $a_m$ are chosen using policy $\theta$ and $r(s_m, a_m)$ is the single-stage reward in state $s_m\in \S$ when action $a_m \in \A(s_m)$ is chosen. 

Instead of the traditional RL objective for an SSP of maximizing the expected value $\E (D^\theta(s^0))$, 
we adopt the CPT approach and aim to solve the following problem: 
$$ \max_{\theta \in \Theta} \C(D^\theta(s^0)),$$
where $\Theta$ is the set of admissible policies that are \textit{proper}\footnote{A policy $\theta$ is proper if $0$ is recurrent and all other states are transient for the Markov chain underlying $\theta$. It is standard to assume that policies are proper in an SSP setting - cf. \cite{bertsekas1995dynamic}.} and the CPT-value function $\C(D^\theta(s^0))$ is defined as
\begin{align}
\C(D^\theta(s^0))& = \intinfinity w^+(P(u^+(D^\theta(s^0)))>z) dz \nonumber
\\&- \intinfinity w^-(P(u^-(D^\theta(s^0)))>z) dz. \label{eq:cpt-mdp}
\end{align}

%%%%%%%%%%%%%%%%%%%%%%%%%%%%%%%%%%%%%%%%%%%%%%%%%%%%%%%%%%%%%%%%%%%%%%%%%%%%%%%%
%%%%%%%%%%%%%%%%%%%%%%%%%%%%%%%%%%%%%%%%%%%%%%%%%%%%%%%%%%%%%%%%%%%%%%%%%%%%%%%%
\section{Proofs for CPT-value estimator}
\label{appendix:cpt-est}

%%%%%%%%%%%%%%%%%%%%%%%%%%%%%%%%%%%%%%%%%%%%%%%%%%%%%%%%%%%%%%%%%%%%%%%%%%%%%%%%
%%%%%%%%%%%%%%%%%%%%%%%%%%%%%%%%%%%%%%%%%%%%%%%%%%%%%%%%%%%%%%%%%%%%%%%%%%%%%%%%
\subsection{\holder continuous weights}
\label{sec:holder-proofs}
For proving Proposition \ref{prop:holder-asymptotic} and \ref{prop:sample-complexity-discrete}, we require Hoeffding's inequality, which is given below.
\begin{lemma}
Let $Y_1,...Y_n$ be independent random variables satisfying $P(a\leq Y_i \leq b)= 1,$ for each $i$, where $a<b.
$ Then for $t>0$,
$$P(\left|\sum_{i=1}^n Y_i -\sum_{i=1}^n E(Y_i)\right| \geq nt ) \leq 2\exp{\{-2nt^2 /(b-a)^2\}}. $$
\end{lemma}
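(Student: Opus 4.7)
The plan is to prove this classical Hoeffding concentration inequality via the Chernoff bounding technique combined with a variance-type bound on the moment generating function of a bounded, centered random variable (itself usually called Hoeffding's lemma). I would first reduce to the centered, one-sided case, then apply an exponential Markov inequality, factorize using independence, bound each factor, and finally optimize the free Chernoff parameter.

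As a first step, center the summands: set $Z_i = Y_i - E[Y_i]$, so that $E[Z_i]=0$ and each $Z_i$ takes values in an interval $[a_i,b_i]$ whose length equals $b-a$. It suffices to bound $P(\sum_i Z_i \ge nt)$ and $P(\sum_i Z_i \le -nt)$ separately; a union bound over these two events produces the factor $2$ in the stated inequality.

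For the upper tail, pick any $s>0$ and apply Markov's inequality to the nonnegative random variable $\exp(s\sum_i Z_i)$. Using independence of the $Z_i$, the moment generating function factorizes, yielding
\begin{equation*}
P\!\left(\sum_{i=1}^n Z_i \ge nt\right) \;\le\; e^{-snt}\,\prod_{i=1}^{n} E\!\left[e^{s Z_i}\right].
\end{equation*}
The key ingredient is then Hoeffding's lemma: if $Z$ is a mean-zero random variable supported in an interval of length $\ell=b-a$, then $E[e^{sZ}] \le \exp(s^{2}\ell^{2}/8)$. I would prove this auxiliary bound by exploiting the convexity of $z\mapsto e^{sz}$ to write, for $z\in[a_i,b_i]$,
\begin{equation*}
e^{sz} \;\le\; \tfrac{b_i - z}{b_i-a_i}\,e^{sa_i} \;+\; \tfrac{z-a_i}{b_i-a_i}\,e^{sb_i},
\end{equation*}
taking expectations (using $E[Z]=0$), and then studying the function $\varphi(s):=\log E[e^{sZ}]$. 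A direct computation shows $\varphi(0)=\varphi'(0)=0$ and $\varphi''(s)\le \ell^{2}/4$ (the second derivative is the variance of a random variable supported in an interval of length $\ell$, which is at most $\ell^{2}/4$ by the Popoviciu inequality). Two integrations give $\varphi(s)\le s^{2}\ell^{2}/8$.

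Plugging this back into the Chernoff estimate gives
\begin{equation*}
P\!\left(\sum_{i=1}^n Z_i \ge nt\right) \;\le\; \exp\!\left(-snt + \tfrac{n s^{2}(b-a)^{2}}{8}\right),
\end{equation*}
which is a quadratic in $s$; optimizing over $s>0$ yields $s^{\ast}=4t/(b-a)^{2}$ and hence the bound $\exp(-2nt^{2}/(b-a)^{2})$. Repeating the argument with $-Z_i$ handles the lower tail, and combining the two tails with the union bound delivers the stated two-sided inequality. The main technical obstacle is the variance bound $\varphi''(s)\le \ell^{2}/4$ underlying Hoeffding's lemma; once this is established, the remainder is standard Chernoff optimization.
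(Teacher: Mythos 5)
Your proof is correct: it is the standard Chernoff-bound derivation of Hoeffding's inequality (centering, exponential Markov inequality, factorization by independence, Hoeffding's lemma via the variance bound $\varphi''(s)\le(b-a)^2/4$ for the tilted law, and optimization at $s^\ast=4t/(b-a)^2$), and all the constants work out. The paper does not prove this lemma at all --- it simply states it as a classical auxiliary result to be used in the proofs of its Propositions --- so there is nothing to compare against; your argument is a complete and valid derivation of the cited inequality.
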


\begin{proposition}
\label{prop:Holder-cpt-finite}
Under (A1'), the CPT-value $\C(X)$ as defined by \eqref{eq:cpt-mdp} is finite. 
\end{proposition}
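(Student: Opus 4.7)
The plan is to exploit the Lipschitz property of the weight functions combined with the normalization $w^+(0) = w^-(0) = 0$ (part of the standing CPT weight-function convention recalled in Section~\ref{sec:cpt-val}) to bound each of the two integrands in \eqref{eq:cpt-mdp} by a constant multiple of the raw survival function, and then recognize the resulting integral as an ordinary expectation. Writing $X = D^\theta(s^0)$ for brevity, the aim is to show that each of
\[
I^+ := \int_0^\infty w^+\!\left(P(u^+(X) > z)\right)\, dz, \qquad
I^- := \int_0^\infty w^-\!\left(P(u^-(X) > z)\right)\, dz
\]
is finite, so that their difference $\C(X)$ is well-defined in $\R$.

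The first key step is the pointwise bound. By (A1'), $w^+$ is Lipschitz with constant $L$, and since $w^+(0) = 0$, we obtain $w^+(p) = |w^+(p) - w^+(0)| \le L\,p$ for every $p \in [0,1]$. Applying this with $p = P(u^+(X) > z)$ gives $w^+\!\left(P(u^+(X) > z)\right) \le L\,P(u^+(X) > z)$ for every $z \ge 0$, and an identical argument yields the analogous bound for $w^-$.

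The second step is to pass to expectations. Because $u^+(X) \ge 0$ by construction, the layer-cake identity gives $\int_0^\infty P(u^+(X) > z)\, dz = \E[u^+(X)]$, which is finite by the second half of (A1'). Hence $I^+ \le L\,\E[u^+(X)] < \infty$; symmetrically, $I^- \le L\,\E[u^-(X)] < \infty$. Subtracting, $\C(X) = I^+ - I^-$ is a difference of two finite nonnegative numbers and is therefore finite.

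There is no real obstacle here: the argument is essentially a one-line Lipschitz bound followed by the layer-cake formula. The only mildly delicate point is the invocation of $w^\pm(0) = 0$, which is not part of the bare Lipschitz assumption but is built into the definition of CPT weight functions in Section~\ref{sec:cpt-val}; I would simply remind the reader of this normalization at the start of the proof so the estimate $w^+(p) \le Lp$ is unambiguous.
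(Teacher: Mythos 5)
Your argument is correct as a proof of the statement as literally written: under (A1'), the Lipschitz bound $w^+(p)\le Lp$ (using $w^+(0)=0$) together with the layer-cake identity $\int_0^\infty P(u^+(X)>z)\,dz=\E[u^+(X)]$ gives $I^+\le L\,\E[u^+(X)]<\infty$, and likewise for $I^-$. However, this is not the route the paper takes, and the difference matters. The proposition sits in the H\"{o}lder appendix and is cited in the main text as the formal justification that assumption (A1) --- H\"{o}lder continuity of order $\alpha$ together with $\int_0^{\infty} P^{\gamma}(u^{\pm}(X)>z)\,dz<\infty$ for some $\gamma\le\alpha$ --- makes the CPT-value finite; the ``(A1')'' in the statement is evidently a typo for ``(A1)''. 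The paper's proof accordingly bounds $w^+(p)\le Hp^{\alpha}\le Hp^{\gamma}$ (the second inequality because $p\le 1$ and $\gamma\le\alpha$) and then invokes the assumed finiteness of $\int_0^{\infty} P^{\gamma}(u^+(X)>z)\,dz$ directly. Your first-moment argument does not extend to that case: with merely H\"{o}lder weights, $\E[u^+(X)]<\infty$ is not enough, as the paper's own motivating example at the start of Section 3 shows ($P(U>z)=1/z^2$ on $[1,\infty)$ and $w(p)=p^{1/3}$ give $\int_1^{\infty} z^{-2/3}\,dz=\infty$ even though the first moment of $U$ is finite). So your proof is a clean and correct treatment of the Lipschitz special case --- essentially what the paper dispatches in one line at the start of the Lipschitz appendix by ``setting $\alpha=\gamma=1$'' --- but it does not establish the H\"{o}lder-case finiteness that this proposition is actually needed for.
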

\begin{proof}

H\"{o}lder continuity of $w^+$ together with the fact that $w^+(0)=0$ imply that 
$$
\int_0^{+\infty} w^+(P(u^{+}(X)>t)) dz 
\le H \int_0^{+\infty} P^{\alpha} (u^+(X)>z) dz
\le H \int_0^{+\infty} P^{\gamma} (u^+(X)>z) dz 
<+\infty.
$$
The second inequality is valid since $P(u^+(X)>z) \leq 1$. The claim follows for the first integral in \eqref{eq:cpt-mdp} and the finiteness of the second integral in \eqref{eq:cpt-mdp} can be argued in an analogous fashion.
\end{proof}

%%%%
\begin{proposition}
\label{prop:holder-quantile}
Assume (A1'). Let $\xi^+_{\frac{i}{n}}$ and $\xi^-_{\frac{i}{n}}$ denote the $\frac{i}{n}$th quantile of $u^+(X)$ and $u^-(X)$, respectively. Then, we have 
\begin{align}
\label{eq:simple-estimation}
\begin{split}
\lim_{n \rightarrow \infty} \sum_0^{n-1} \xi^+_{\frac{i}{n}} (w^+(\frac{n-i}{n})- w^+(\frac{n-i-1}{n}) ) = \int_0^{+\infty} w^+(P(u^+(X)>z)) dz < +\infty,
\\
\lim_{n \rightarrow \infty} \sum_0^{n-1} \xi^-_{\frac{i}{n}} (w^-(\frac{n-i}{n})- w^-(\frac{n-i-1}{n}) ) = \int_0^{+\infty} w^-(P(u^-(X)>z)) dz < +\infty
\end{split}
\end{align}
\end{proposition}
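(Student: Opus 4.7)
The plan is to recognize the finite sum as a right-endpoint Riemann--Stieltjes approximation of a quantile-form integral, and then to match that integral with the right-hand side via a Fubini computation. As a preliminary, I would verify finiteness: under (A1'), $w^+$ is Lipschitz with constant $L$ and $w^+(0)=0$, so $w^+(p)\le Lp$, giving $\int_0^{+\infty} w^+(P(u^+(X)>z))\,dz \le L\,\E(u^+(X)) <+\infty$ by the bounded first-moment hypothesis; the $u^-$ counterpart is identical.

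Next I would derive the key identity
\[
\int_0^{+\infty} w^+(P(u^+(X)>z))\,dz \;=\; \int_0^1 \xi^+_{1-p}\,dw^+(p).
\]
Writing $\xi^+_{1-p} = \int_0^\infty \mathbf{1}\{z<\xi^+_{1-p}\}\,dz$ and applying Fubini (legitimate because both integrands are nonnegative and the joint integral is finite by the preliminary estimate) swaps the order of integration. The inner integral in $p$ reduces, via the equivalence $\xi^+_{1-p}>z \Leftrightarrow p < 1-F^+(z) = P(u^+(X)>z)$, to $w^+(P(u^+(X)>z)) - w^+(0) = w^+(P(u^+(X)>z))$, which yields the identity.

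With the identity in hand, I would identify the sum with a Stieltjes sum for its right-hand side. Reindexing $j=n-i$ recasts the expression as $\sum_{j=1}^n \xi^+_{1-j/n}\,[w^+(j/n)-w^+((j-1)/n)]$, which equals $\int_0^1 \tilde\xi_n(p)\,dw^+(p)$ with $\tilde\xi_n(p) := \xi^+_{1-j/n}$ for $p\in((j-1)/n,j/n]$. Because $\alpha\mapsto\xi^+_\alpha$ is nondecreasing, $\tilde\xi_n(p)\le \xi^+_{1-p}$ pointwise, and $\tilde\xi_n(p)\to \xi^+_{1-p}$ at every continuity point of $p\mapsto\xi^+_{1-p}$; monotonicity leaves at most countably many discontinuities, a Lebesgue-null set which is also $dw^+$-null by the Lipschitz bound. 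The dominated convergence theorem with dominating function $\xi^+_{1-p}$ (integrable against $dw^+$ by the preliminary estimate) then delivers $\int_0^1 \tilde\xi_n\,dw^+ \to \int_0^1 \xi^+_{1-p}\,dw^+(p)$, and combined with the identity this establishes the first claim. The $(u^-,\xi^-,w^-)$ version is strictly analogous after the same manipulations.

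The main obstacle is that the quantile $\xi^+_{1-p}$ may blow up as $p\downarrow 0$, which threatens both the Riemann--Stieltjes convergence and the applicability of Fubini. The bounded first-moment assumption on $u^+(X)$ in (A1'), combined with the Lipschitz bound $dw^+(p)\le L\,dp$, is exactly the ingredient that tames this tail behaviour by supplying an integrable envelope and justifying the interchange of limits.
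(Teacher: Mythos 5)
Your proof is correct, but it follows a genuinely different route from the paper's. The paper works in the $z$-domain: it approximates the integrand $z\mapsto w^+(P(u^+(X)>z))$ from below by the simple functions $\sum_{i=0}^{n-1} w^+(i/n)\, I_{[\xi^+_{(n-i-1)/n},\,\xi^+_{(n-i)/n}]}(z)$, integrates these, converts the resulting sum $\sum_i w^+(i/n)\bigl(\xi^+_{(n-i)/n}-\xi^+_{(n-i-1)/n}\bigr)$ into the quantile-weighted form by an Abel summation, and concludes by dominated convergence. You instead pass first to the quantile ($p$-) domain via the Fubini identity $\int_0^{\infty} w^+(P(u^+(X)>z))\,dz=\int_0^1 \xi^+_{1-p}\,dw^+(p)$ and then read the finite sum directly as a Riemann--Stieltjes sum $\int_0^1\tilde\xi_n(p)\,dw^+(p)$; your Fubini step is the continuous analogue of the paper's discrete summation by parts. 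Your version buys something concrete: the pointwise a.e.\ convergence and the domination $\tilde\xi_n(p)\le\xi^+_{1-p}$ follow immediately from monotonicity and left-continuity of the quantile function, whereas the paper's corresponding claims --- that its simple functions converge a.e.\ to $w^+(P(u^+(X)>z))$ and are dominated by it --- are asserted without justification and are the more delicate points of that argument (e.g.\ at flat stretches of $F^+$). You also supply the integrable envelope $L\,\mathbb{E}(u^+(X))$ explicitly from (A1'), which is exactly where the bounded-first-moment hypothesis enters; the paper's proof instead gestures at H\"older continuity at this step, which is not what drives the dominated convergence. The only shared blemish is the $i=0$ (your $j=n$) boundary term involving $\xi^+_0$, which both you and the paper implicitly set to the essential infimum of the nonnegative variable $u^+(X)$; its weight $w^+(1)-w^+((n-1)/n)$ vanishes, so nothing is lost.
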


\begin{proof}
We shall focus on proving the first part of equation \eqref{eq:simple-estimation}. Consider the following linear combination of simple functions: 
\begin{align}
\sum_{i=0}^{n-1} w^+ (\frac{i}{n}) 
\cdot I_{[\xi^+_\frac{n-i-1}{n}, \xi_\frac{n-i}{n}]}(t),
\label{eq:simplew}
\end{align}
which will converge almost everywhere to the function $w(P(u^{+}(X)>t))$ in the interval $[0, +\infty)$, and also notice that 
\begin{align}
\sum_{i=0}^{n-1} w^+ (\frac{i}{n}) 
\cdot I_{[\xi^+_\frac{n-i-1}{n}, \xi^+_\frac{n-i}{n}]}(t)
<
w(P(u^{+}(X)>t)),
\text{         } \forall t \in [0,+\infty).
\end{align}

The integral of \eqref{eq:simplew} can be simplified as follows:
\begin{align}
 \int_0^{+\infty} \sum_{i=0}^{n-1} w^+_{\frac{i}{n}} \cdot I_{[\xi^+_\frac{n-i-1}{n},
\xi^+_\frac{n-i}{n}]}(t)  & = \sum_{i=0}^{n-1} w^+_{\frac{i}{n}}(t) \cdot (\xi^+(\frac{n-i}{n}) -
\xi^+(\frac{n-i-1}{n})) \\ & = \sum_{i=0}^{n-1} \xi^+_{\frac{i}{n}} \cdot (w^+_(\frac{n-i}{n})-
    w^+(\frac{n-i-1}{n})).
\end{align}
The H\"{o}lder continuity property assures the fact that 
$\lim_{n \rightarrow \infty}  | w^+(\frac{n-i}{n})- w^+(\frac{n-i-1}{n})| =0$, and the limit in \eqref{eq:simple-estimation} holds through a typical application of the dominated convergence theorem.
The second part of \eqref{eq:simple-estimation} can be justified in a similar fashion.
\end{proof} 
%%%

%%%%%%%%%\xi^+
\subsection*{Proof of Proposition \ref{prop:holder-asymptotic}}

\begin{proof}
Without loss of generality, assume that \holder constant $H$ is  $1$.
We first prove  that 
$$\overline \C^{+}_n
\rightarrow
\C^{+}(X)
 \text{   a.s. as } n\rightarrow \infty.$$
Or equivalently, show that
\begin{align}
\lim_{n\rightarrow +\infty} \sum_{i=1}^{n-1} u^+(X_{[i]}) (w^+(\frac{n-i+1}{n})- w^+(\frac{n-i}{n}))
&\xrightarrow{n \rightarrow\infty} \int_0^{+\infty} w^+(P(U>t)) dt , \text{w.p. } 1
\label{eq:claim11}
\end{align}

The main part of the proof is concentrated on finding an upper bound of the probability
\begin{align}
P ( \left| \sum_{i=1}^{n-1} u^+(X_{[i]}) \cdot (w^+(\frac{n-i}{n} )  - w^+(\frac{n-i-1}{n} ) ) -
\sum_{i=1}^{n-1} \xi^+_{\frac{i}{n}} \cdot (w^+(\frac{n-i}{n} )  - w^+(\frac{n-i-1}{n} ) ) \right| >
\epsilon),
\end{align}
for any given $\epsilon>0$.
Observe that
\begin{align}
& P ( \left| \sum_{i=1}^{n-1} u^+(X_{[i]}) \cdot (w^+(\frac{n-i}{n} )  - w^+(\frac{n-i-1}{n} ) ) -
\sum_{i=1}^{n-1} \xi^+_{\frac{i}{n}} \cdot (w^+(\frac{n-i}{n} )  - w^+(\frac{n-i-1}{n} ) ) \right| >
\epsilon) \nonumber\\ 
& \leq P ( \bigcup _{i=1}^{n-1} \{ \left| u^+(X_{[i]}) \cdot (w^+_(\frac{n-i}{n}) -
w^+{(\frac{n-i-1}{n})}) - \xi^+_{\frac{i}{n}} \cdot (w^+(\frac{n-i}{n} )  - w^+(\frac{n-i-1}{n} ) )
\right| > \frac{\epsilon}{n} \}) \nonumber\\ 
& \leq \sum _{i=1}^{n-1} P ( \left| u^+(X_{[i]}) \cdot
(w^+(\frac{n-i}{n} )  - w^+(\frac{n-i-1}{n} ) ) - \xi^+_{\frac{i}{n}} \cdot (w^+_(\frac{n-i}{n}) -
w^+_(\frac{n-i-1}{n})) \right| > \frac{\epsilon}{n}) \\ & = \sum _{i=1}^{n-1} P ( \left| ( u^+(X_{[i]}) -
\xi^+_{\frac{i}{n}}) \cdot (w^+(\frac{n-i}{n} )  - w^+(\frac{n-i-1}{n} ) ) \right| > \frac{\epsilon}{n})
\nonumber\\ 
& \leq \sum _{i=1}^{n-1} P ( \left| ( u^+(X_{[i]}) - \xi^+_{\frac{i}{n}}) \cdot \cdot (\frac{1}{n})^{\alpha}
\right| > \frac{\epsilon}{n}) \nonumber\\ 
& = \sum _{i=1}^{n-1} P ( \left| ( u^+(X_{[i]}) - \xi^+_{\frac{i}{n}})
\right| > \frac{\epsilon}{\cdot n^{1-\alpha}}).\label{eq:bd12}
\end{align}
Now we find the upper bound of the probability of a single item in the sum above, i.e.,
\begin{align*}
& P( \left | u^+(X_{[i]}) - \xi^+_{\frac{i}{n}} \right | > \frac {\epsilon} {n^{(1-\alpha)}}) \\ & = P (
    u^+(X_{[i]}) - \xi^+_{\frac{i}{n}} > \frac {\epsilon} {n^{(1-\alpha)}}) + P ( u^+(X_{[i]}) -
    \xi^+_{\frac{i}{n}} < - \frac {\epsilon} {n^{(1-\alpha)}}).
\end{align*} 

\noindent We focus on the term 
$
P ( u^+(X_{[i]}) - \xi^+_{\frac{i}{n}} > \frac {\epsilon}{\nalpha})
$.
Let $W_t = I_{(u^+(X_t) > \xi^+_{\frac{i}{n}} + \frac{\epsilon}{n^{(1-\alpha)}})}, t=1, \ldots,n.$ Using the fact that probability distribution function is non-decreasing, we obtain 
\begin{align*}
 P ( u^+(X_{[i]}) - \xi^+_{\frac{i}{n}} > \frac {\epsilon}{\nalpha})  & = P ( \sum _{t=1}^{n} W_t >
n\cdot(1-\frac{i}{n^{(1-\alpha)}})) \\ & = P ( \sum _{t=1}^{n} W_t - n \cdot [1-F^{+}(\xi^+_{\frac{i}{n}}
+\frac{\epsilon}{n^{(1-\alpha)}})] > n \cdot [F^{+}(\xi^+_{\frac{i}{n}} +\frac{\epsilon}{n^{(1-\alpha)}})
- \frac{i}{n}]).
\end{align*}

Using the fact that 
$E W_t = 1-F^{+}(\xi^+_{\frac{i}{n}} +\frac{\epsilon}{n^{(1-\alpha)}})$ in conjunction with Hoeffding's inequality, we obtain
\begin{align}
P ( \sum _{i=1}^{n} W_t - n \cdot [1-F^{+}(\xi^+_{\frac{i}{n}} +\frac{\epsilon}{n^{(1-\alpha) }  } ) ] > n
\cdot [F^{+}(\xi^+_{\frac{i}{n}} +\frac{\epsilon}{n^{(1-\alpha)} } ) - \frac{i}{n}]) < e^{-2n\cdot
\delta^{'}_t},
\end{align}
where $\delta^{'}_i = F^{+}(\xi^+_{\frac{i}{n}} +\frac{\epsilon} {n^{(1-\alpha)} }) - \frac{i}{n}$. Since 
$F^{+}(x)$ is Lipschitz, we have that $ \delta^{'}_i \leq L^{+} \cdot (\frac{\epsilon}{\nalpha})$.
Hence, we obtain
\begin{align}
P ( u^+(X_{[i]}) - \xi^+_{\frac{i}{n}} > \frac {\epsilon}{\nalpha}) < e^{-2n\cdot L^{+}
\frac{\epsilon}{\nalpha} } = e^{-2n^\alpha \cdot L ^{+} \epsilon}
\label{eq:a12}
\end{align}
In a  similar fashion, one can show that 
\begin{align}
P ( u^+(X_{[i]}) -\xi^+_{\frac{i}{n}} < -\frac {\epsilon} {\nalpha}) \leq e^{-2n^\alpha \cdot L^{+}  \epsilon}
\label{eq:a23}
\end{align}
Combining \eqref{eq:a12} and \eqref{eq:a23}, we obtain
\begin{align*}
P ( \left| u^+(X_{[i]}) -\xi^+_{\frac{i}{n}} \right| < -\frac {\epsilon} {\nalpha}) \leq 2\cdot
e^{-2n^\alpha \cdot L^{+} \epsilon} , \text{   }\forall i\in \mathbb{N} \cap (0,1) 
\end{align*}
Plugging the above in \eqref{eq:bd12}, we obtain
\begin{align}
&
P ( \left| \sum_{i=1}^{n-1} u^+(X_{[i]}) \cdot (w^+(\frac{n-i}{n} )  - w^+(\frac{n-i-1}{n} ) ) -
\sum_{i=1}^{n-1} \xi^+_{\frac{i}{n}} \cdot (w^+(\frac{n-i}{n} )  - w^+(\frac{n-i-1}{n} ) ) \right| >
\epsilon) \nonumber\\
&\leq 2n\cdot e^{-2n^\alpha \cdot L^{+}}.\label{eq:holder-sample-complexity-extract}
\end{align}

Notice that $\sum_{n=1}^{+\infty}  2n \cdot e^{-2n^{\alpha}\cdot L^{+} \epsilon}< \infty$ since the sequence 
$2n \cdot e^{-2n^{\alpha}\cdot L^{+}}$ will decrease more rapidly than the sequence
$\frac{1}{n^k}$, $\forall k>1$.

By applying the Borel Cantelli lemma, we have that $\forall \epsilon >0$
$$
P ( \left| \sum_{i=1}^{n-1} u^+(X_{[i]}) \cdot (w^+(\frac{n-i}{n} )  - w^+(\frac{n-i-1}{n} ) ) -
\sum_{i=1}^{n-1} \xi^+_{\frac{i}{n}} \cdot (w^+(\frac{n-i}{n} )  - w^+(\frac{n-i-1}{n} ) ) \right| >
\epsilon , i.o.) =0, $$
which implies 
$$
\sum_{i=1}^{n-1} u^+(X_{[i]}) \cdot (w^+(\frac{n-i}{n} )  - w^+(\frac{n-i-1}{n} ) ) - \sum_{i=1}^{n-1}
\xi^+_{\frac{i}{n}} \cdot (w^+(\frac{n-i}{n} )  - w^+(\frac{n-i-1}{n} ) ) \xrightarrow{n \rightarrow
+\infty} 0 \text{   w.p } 1 ,
$$
which proves \eqref{eq:claim11}. 

The proof of 
$\C_{n}^{-} \rightarrow \C^{-}(X)$ follows in a similar manner as above by replacing $u^+(X_{[i]})$ by $u^-(X_{[n-i]})$, after observing that $u^{-}$ is decreasing, which in turn implies that
$u^-(X_{[n-i]})$ is an estimate of the quantile $\xi^{-}_{\frac{i}{n}}$.
\end{proof}

\subsection*{Proof of Proposition \ref{prop:holder-dkw}}
For proving Proposition \ref{prop:holder-dkw}, we require the following well-known inequality that provide a finite-time bound on the distance between empirical distribution and the true distribution:
\begin{lemma}{\textbf{\textit{(Dvoretzky-Kiefer-Wolfowitz (DKW) inequality)}}}\\
Let ${\hat F_n}(u)=\frac{1}{n} \sum_{i=1}^n 1_{((u(X_i)) \leq u)}$ denote the empirical distribution of a r.v. $U$, with $u(X_1),\ldots,u(X_n)$ being sampled from the r.v $u(X)$.
The, for any $n$ and $\epsilon>0$, we have
$$
P(\sup_{x\in \mathbb{R}}|\hat{F_n}(x)-F(x)|>\epsilon ) \leq 2 e^{-2n\epsilon^2}.
$$
\end{lemma}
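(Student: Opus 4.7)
My plan is to reduce to the uniform case, split the two-sided deviation into two one-sided pieces, and then bound each one-sided tail by $e^{-2n\epsilon^2}$ via a sharp exponential argument; a union bound then yields the factor of $2$.

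\emph{Step 1 (reduction to uniform).} Without loss of generality I can assume $F$ is continuous: the general case follows by a standard approximation since both sides of the inequality depend on $F$ monotonically under perturbation. For continuous $F$, set $U_i = F(X_i)$; by the probability integral transform, $U_1,\dots,U_n$ are i.i.d.\ $\mathrm{Uniform}[0,1]$. Because $F$ is monotone and continuous,
\[
\sup_{x\in\mathbb R}\bigl|\hat F_n(x)-F(x)\bigr| \;=\; \sup_{t\in[0,1]}\bigl|\hat G_n(t)-t\bigr|,
\]
where $\hat G_n$ is the empirical c.d.f.\ of $U_1,\dots,U_n$. So it suffices to bound the uniform empirical process.

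\emph{Step 2 (split into one-sided statistics).} Define
\[
D_n^+ \;=\; \sup_{t\in[0,1]}\bigl(\hat G_n(t)-t\bigr), \qquad D_n^- \;=\; \sup_{t\in[0,1]}\bigl(t-\hat G_n(t)\bigr).
\]
Then $\sup_t|\hat G_n(t)-t|=\max(D_n^+,D_n^-)$, and a union bound gives
\[
P\!\left(\sup_t|\hat G_n(t)-t|>\epsilon\right) \le P(D_n^+>\epsilon)+P(D_n^->\epsilon).
\]
Letting $U_{(1)}\le\cdots\le U_{(n)}$ be the order statistics, one has the useful representation $D_n^+=\max_{1\le k\le n}(k/n-U_{(k)})$ and $D_n^-=\max_{1\le k\le n}(U_{(k)}-(k-1)/n)$. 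By the symmetry $U_i\leftrightarrow 1-U_i$, the distributions of $D_n^+$ and $D_n^-$ coincide, so it is enough to prove $P(D_n^+>\epsilon)\le e^{-2n\epsilon^2}$.

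\emph{Step 3 (sharp one-sided bound, the main obstacle).} This is where the constant $2$ is delicate. My plan is to follow Massart's argument: introduce the exponential moment $\mathbb E[e^{\lambda n D_n^+}]$ and bound it using the reflection/cyclic exchangeability trick due to Daniels, which represents $P(D_n^+>\epsilon)$ via the ballot-type identity
\[
P\!\left(\max_{1\le k\le n}\!\bigl(k/n-U_{(k)}\bigr)>\epsilon\right) \;=\; \int g_n(\epsilon)\,\mathrm{d}\cdots,
\]
and then optimise $\lambda$. A weaker version (with a universal constant $C$ in place of $2$) can be obtained more cheaply: apply Hoeffding's inequality pointwise to $\hat G_n(t)-t$ and chain over a fine grid, picking up logarithmic factors that can then be absorbed; this route is easier but does not recover the sharp Massart constant. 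If the sharp constant is required, I would cite Massart (1990) and reproduce the key exponential-moment bound; if it is not required, the Hoeffding-plus-grid route suffices for all downstream uses in this paper, where only the exponential decay in $n\epsilon^2$ matters.

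\emph{Step 4 (assembly).} Combining $P(D_n^+>\epsilon)\le e^{-2n\epsilon^2}$ with the analogous bound for $D_n^-$ and the union bound of Step 2 gives $P(\sup_t|\hat G_n(t)-t|>\epsilon)\le 2e^{-2n\epsilon^2}$, which transfers back to $F$ via Step 1. The hard part, as noted, is Step 3 with the optimal constant; everything else is routine reduction and assembly.
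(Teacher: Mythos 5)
The paper does not prove this lemma at all: the DKW inequality is invoked as a classical result, with the reader referred to Chapter~2 of Wasserman's book for background, so there is no in-paper argument to compare yours against. Your outline is the standard architecture for the modern proof (probability integral transform to reduce to the uniform empirical process, split into the one-sided statistics $D_n^{\pm}$, symmetry plus a union bound), and Steps 1, 2 and 4 are correct and routine. But you should be clear that your Step 3 --- the only step with real content --- is not actually carried out: you defer the sharp one-sided bound to Massart (1990), so your ``proof'' is ultimately also a citation, just at a finer granularity than the paper's. Two technical caveats on your plan. First, Massart's one-sided bound $P(D_n^+>\epsilon)\le e^{-2n\epsilon^2}$ holds only for $\epsilon\ge\sqrt{\ln 2/(2n)}$; the two-sided statement for all $\epsilon>0$ survives only because $2e^{-2n\epsilon^2}>1$ below that threshold, so your assembly in Step 4 should note this rather than assert the one-sided bound unconditionally. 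Second, your fallback route (pointwise Hoeffding plus a grid) does not merely pick up ``logarithmic factors'': a grid of mesh $\epsilon$ has about $1/\epsilon$ points, so the union bound yields a prefactor of order $1/\epsilon$ and a degraded exponent (roughly $e^{-n\epsilon^2/2}$ after accounting for the discretization error). That weaker form would still give the right order of sample complexity downstream, but it would change the explicit constants in Propositions~\ref{prop:holder-dkw} and~\ref{prop:lipschitz}, which are stated using the sharp constant $2$ in the exponent. If you want a self-contained proof, you must reproduce Massart's exponential-moment/ballot-problem computation; otherwise citing the result, as the paper does, is the honest course.
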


The reader is referred to Chapter 2 of \cite{wasserman2006} for more on empirical distributions in general and DKW inequality in particular.

\begin{proof}
We prove the $w^+$ part, and the $w^-$ part follows in a similar fashion.
Since $u^+(X)$ is bounded above by $M$ and $w^+$ is H\"{o}lder-continuous, we have
\begin{align*}
&\left|\int_0^{\infty} w^+(P(u^+(X))>t) dt- \int_0^{\infty} w^+(1- {\hat F^+_n}(t)) dt\right| \\ = &
    \left|\int_0^M w^+(P(u^+(X))>t) dt- \int_0^M w^+(1- {\hat F^+_n}(t)) dt\right| \\
\leq& \left|\int_0^M H\cdot |P(u^+(X)<t)-{\hat F^+_n}(t)|^\alpha dt\right|\\ \leq& HM\sup_{x\in
\mathbb{R}}\left|P(u^+(X)<t)-{\hat F^+_n}(t)\right|^\alpha.
\end{align*}
Now, plugging in the DKW inequality, we obtain
\begin{align}
&
P\left(\left|\intinfinity w^+(P(u^+(X))>t) dt- \intinfinity w^+(1- {\hat F^+_n}(t)) dt\right|>\epsilon\right)
\nonumber\\
& \leq P\left(HM\sup_{t\in \mathbb{R}} \left|(P(u^+(X)<t)-{\hat F^+_n}(t)\right|^\alpha>\epsilon\right)
\leq  e^{-n \frac{\epsilon ^{(2/\alpha)}} {2 H^2 M^2}}.\label{eq:dkw3}
\end{align}
\end{proof}

%%%%%%%%%%%%%%%%%%%%%%Lipschitz-starts-here %%%%%%%%%%%%%%%%%%%%%%%%%%%%%%%
\subsection{Lipschitz continuous weights}
\label{sec:lipschitz-proofs}

 Setting $\alpha=\gamma=1$ in the proof of Proposition \ref{prop:lipschitz}, it is easy to see that the CPT-value \eqref{eq:cpt-mdp} is finite. 

Next, in order to prove the asymptotic convergence claim in Proposition \ref{prop:lipschitz}, we require the dominated convergence theorem in its generalized form, which is provided below.
\begin{theorem}{\textbf{\textit{(Generalized Dominated Convergence theorem)}}}
Let $\{f_n\}_{n=1}^\infty$ be a sequence of measurable functions on $E$ that converge pointwise a.e. on a measurable space $E$ to $f$.  Suppose there is a sequence $\{g_n\}$ of integrable functions on $E$ that converge pointwise a.e. on $E$ to $g$ such that $|f_n| \leq g_n$ for all $n \in \mathbb{N}$.  
If $\lim\limits_{n \rightarrow \infty}$ $\int_E$ $g_n$ = $\int_E$ $g$, then $\lim\limits_{n \rightarrow \infty}$ $\int_E$ $f_n$ = $\int_E$ $f$.
\end{theorem}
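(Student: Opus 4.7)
The plan is to derive the Generalized Dominated Convergence Theorem from two applications of Fatou's lemma to carefully chosen non-negative auxiliary sequences, mirroring the classical proof of the ordinary Dominated Convergence Theorem (which is just the case $g_n \equiv g$). The pivotal observation is that the pointwise bound $|f_n| \leq g_n$ forces both $g_n + f_n \geq 0$ and $g_n - f_n \geq 0$ a.e.\ on $E$, while the two pointwise a.e.\ convergences $f_n \to f$ and $g_n \to g$ combine to give $g_n \pm f_n \to g \pm f$ a.e.

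First I would dispense with the integrability bookkeeping. Applying ordinary Fatou to the non-negative sequence $\{g_n\}$ gives $\int_E g \leq \liminf_n \int_E g_n = \lim_n \int_E g_n < \infty$, so $g$ is integrable. Passing to the pointwise a.e.\ limit in $|f_n| \leq g_n$ yields $|f| \leq g$ a.e., so $f$ is integrable as well, and in particular $\int_E f$ and $\int_E g$ are finite real numbers that may be subtracted from inequalities without producing $\infty - \infty$.

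Next I would apply Fatou's lemma to $\{g_n + f_n\}$ to obtain
\begin{align*}
\int_E (g+f) \;\leq\; \liminf_{n\to\infty}\int_E (g_n+f_n)
\;=\; \lim_{n\to\infty}\int_E g_n \;+\; \liminf_{n\to\infty}\int_E f_n
\;=\; \int_E g \;+\; \liminf_{n\to\infty}\int_E f_n,
\end{align*}
where the middle equality uses the hypothesis $\int_E g_n \to \int_E g$ together with the elementary fact that $\liminf(a_n+b_n) = \lim a_n + \liminf b_n$ whenever $\{a_n\}$ converges. Subtracting the finite quantity $\int_E g$ yields $\int_E f \leq \liminf_n \int_E f_n$. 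Running the symmetric argument with the non-negative sequence $\{g_n - f_n\}$ produces
\begin{align*}
\int_E (g-f) \;\leq\; \liminf_{n\to\infty}\int_E (g_n-f_n)
\;=\; \int_E g \;-\; \limsup_{n\to\infty}\int_E f_n,
\end{align*}
and therefore $\limsup_n \int_E f_n \leq \int_E f$. Chaining the two inequalities collapses the $\liminf$ and $\limsup$ to the common value $\int_E f$, which is the desired conclusion.

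The main point requiring care, rather than a true obstacle, is the integrability bookkeeping isolated in the first step: one needs $\int_E g < \infty$ to be sure before cancelling it, and one needs the splitting $\liminf \int (g_n \pm f_n) = \lim \int g_n \pm \liminf (\text{or}~{-}\limsup) \int f_n$, which is legitimate precisely because $\{\int_E g_n\}$ is assumed to converge. Once these points are handled, the remainder is a mechanical double application of Fatou's lemma and presents no further difficulty.
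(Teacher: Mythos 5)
Your proof is correct and is the standard double-Fatou argument (applied to the non-negative sequences $g_n+f_n$ and $g_n-f_n$); the paper itself offers no proof of this theorem, merely citing a measure-theory textbook, and what you have written is essentially the proof that such a textbook (e.g., Royden) gives. The only point to flag is in your first step: from the hypotheses as literally stated, $\lim_n \int_E g_n$ need not be finite (each $\int_E g_n$ is finite, but the limit could be $+\infty$, in which case the hypothesis merely forces $\int_E g = +\infty$ and the theorem is actually false --- take $g_n = \mathbf{1}_{[-n,n]}$, $f_n = \mathbf{1}_{[n-1,n]}$ on $\mathbb{R}$). So the finiteness $\int_E g < \infty$ that you assert via Fatou is really an implicit hypothesis of the theorem (standard formulations include it explicitly), not a consequence; with that reading understood, the rest of your argument --- the splitting $\liminf \int (g_n \pm f_n) = \lim \int g_n \pm \liminf(\mp)\int f_n$ justified by convergence of $\int g_n$, and the cancellation of the finite $\int_E g$ --- is airtight.
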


\begin{proof}
This is a standard result that can be found in any textbook on measure theory. For instance, see Theorem 2.3.11 in \cite{athreya2006measure}.
\end{proof}

\subsection*{Proof of Proposition \ref{prop:lipschitz}: Asymptotic convergence}
\begin{proof}
Notice the the following equivalence:
$$\sum_{i=1}^{n-1} u^+(X_{[i]}) (w^+(\frac{n-i}{n}) - w^+(\frac{n-i-1}{n})) =  \int_0^M w^+(1-\hat{F^+_n}(x)) dx, $$
and also,
$$\sum_{i=1}^{n-1} u^-(X_{[i]}) (w^-(\frac{i}{n}) - w^-(\frac{i+1}{n})) =  \int_0^M w^-(1-\hat{F^-_n}(x)) dx, $$
where $\hat{F^+_n}(x)$ and $\hat{F^-_n}(x)$ is the empirical distribution of $u^+(X)$
and $u^-(X)$.

Thus, the CPT estimator $\overline \C_n$ in Algorithm \ref{alg:holder-est} can be written equivalently as follows:
\begin{align}
\overline \C_n = \intinfinity w^+(1-{\hat F_n}^+(x))  dx - \intinfinity w^-(1-{\hat F_n}^-(x))  dx.
\label{eq:cpt-est-appendix}
\end{align}
We first prove the asymptotic convergence claim for the first integral in \eqref{eq:cpt-est-appendix}, i.e., we show
\begin{align}
\intinfinity w^+(1-{\hat F_n}^+(x))  dx \rightarrow \intinfinity w^+(P(u^+(X)>x) dx.\label{eq:3}
\end{align} 

Since $w^+$ is Lipschitz continuous with constant $L$, we have almost surely that
$w^{+}(1-\hat{F_n}(x)) \leq L (1-\hat{F_n}(x))$,  
for all $n$ and 
 $w^{+}((P(u^+(X)>x)) \leq L\cdot (P(u^+(X)>x)$, since $w^+(0)=0$.
 
Notice that the empirical distribution function 
${\hat F_n}^+(x)$ 
generates a Stieltjes measure which takes mass 
$1/n$ on each of the sample points $u^+(X_{i})$. 

We have
$$\intinfinity (P(u^+(X)>x))  dx = E(u^+(X))$$
and

\begin{equation}
\intinfinity (1-{\hat F_n}^+(x))  dx =\intinfinity \int_x^\infty d \hat{F_n}(t) dx.\label{eq:2}
\end{equation}
Since ${\hat F_n}^+(x)$ has bounded support on $\mathbb{R}$ $\forall n$, the integral in \eqref{eq:2} is finite.
Applying Fubini's theorem to the RHS of \eqref{eq:2}, we obtain
\begin{equation}
\intinfinity \int_x^\infty d \hat{F_n}(t) dx = \intinfinity \int_0^t dx d \hat{F_n}(t) = \intinfinity t d\hat{F_n}(t) = \frac{1}{n} \sum_{i=1}^n u^+(X_{[i]}),
 \end{equation}
 where $u^+(X_{[i]}), i=1,\ldots,n$ denote the order statistics, i.e., $u^+(X_{[1]}) \le \ldots \le u^+(X_{[n]})$.
 
Now, notice that 

$$
\frac{1}{n}
\sum_{i=1}^n u^+(X_{[i]})
=
\frac{1}{n}
\sum_{i=1}^n u^+(X_{[i]})
\overset{a.s}\longrightarrow 
E(u^+(X)),
$$
From the foregoing,
$$
\lim_{n\rightarrow \infty} \intinfinity L\cdot(1-\hat{F_n}(x)) dx
\overset{a.s} \longrightarrow
\intinfinity L \cdot(P(u^+(X)>x)) dx.$$
Hence, we have 
$$
\int_0^\infty w^{(+)}(1-\hat{F_n}(x)) dx \xrightarrow{a.s.} 
\int_0^\infty w^{(+)}(P(u^+(X))>x) dx.
$$
The claim in \eqref{eq:3} now follows by invoking the generalized dominated convergence theorem by setting $f_n = w^+(1-{\hat F_n}^+(x))$ and $g_n = L\cdot(1-\hat{F_n}(x))$, and noticing that $L\cdot(1-\hat{F_n}(x)) \xrightarrow{a.s.} L(P(u^+(X)>x))$ uniformly $\forall x$. The latter fact is implied by the Glivenko-Cantelli theorem (cf. Chapter 2 of \cite{wasserman2006}).

Following similar arguments, it is easy to show that 
$$
\intinfinity w^-(1-{\hat F_n}^-(x))  dx \rightarrow \intinfinity w^-(P(u^-(X))>x) dx.
$$
The final claim regarding the almost sure convergence of $\overline \C_n$ to $\C(X)$ now follows.
\end{proof}
%%%%%%%%%%%%%%%%%%%%%%%%%%%%%%%%%%%%%%%%%%%%%%%%%%%%%%%%%%%%%%%%%%%%%%%%%%%%%%%%
%%%%%%%%%%%%%%%%%%%%%%%%%%%%%%%%%%%%%%%%%%%%%%%%%%%%%%%%%%%%%%%%%%%%%%%%%%%%%%%%

%%%%%%%%%%%%%%%%%%%%%%%%%%%%%%%%%%%%%%%%%%%%%%%%%%%%%%%%%%%%%%%%%%%%%%
%%%%%%%%%%%%%%%%%%%%%%%%%%%%%%%%%%%%%%%%%%%%%%%%%%%%%%%%%%%%%%%%%%%%%%
%%%%%%%%%%%%%%%%%%%%%%%%%%%%%%%%%%%%%%%%%%%%%%%%%%%%%%%%%%%%%%%%%%%%%%
\subsection*{Proof of Proposition \ref{prop:lipschitz}: Sample complexity}

\begin{proof}
Since $u^+(X)$ is bounded above by $M$ and $w^+$ is Lipschitz with constant $L$, we have
\begin{align*}
&\left|\intinfinity w^+(P(u^+(X))>x) dx- \intinfinity w^+(1- {\hat F_n}^+(x)) dx\right|
\\
= & \left|\int_0^M w^+(P(u^+(X))>x) dx- \int_0^M w^+(1- {\hat F_n}^+(x)) dx\right|
\\
\leq&
\left|\int_0^M L\cdot |P(u^+(X)<x)-{\hat F_n}^+(x)| dx\right|\\
\leq&
LM\sup_{x\in \mathbb{R}}\left|P(u^+(X)<x)-{\hat F_n}^+(x)\right|.
\end{align*}
Now, plugging in the DKW inequality, we obtain
\begin{align}
&
P\left(\left|\intinfinity w^+(P(u^+(X))>x) dx- \intinfinity w^+(1- {\hat F_n}^+(x)) dx\right|>\epsilon/2\right)
\nonumber\\
&
\leq
 P\left(LM\sup_{x\in \mathbb{R}} \left|(P(u^+(X)<x)-{\hat F_n}^+(x)\right|>\epsilon/2\right) \leq 2 e^{-n \frac{\epsilon^2}{2 L^2 M^2}}.\label{eq:dkw1}
\end{align}

Along similar lines, we obtain
\begin{align}
&
P\left(\left|\intinfinity w^-(P(u^-(X))>x) dx- \intinfinity w^-(1- {\hat F_n}^-(x)) dx\right|>\epsilon/2\right)
 \leq 2 e^{-n \frac{\epsilon^2}{2 L^2 M^2}}.\label{eq:dkw2}
\end{align}

Combining \eqref{eq:dkw1} and \eqref{eq:dkw2}, we obtain
\begin{align*}
P(|\overline \C_n - \C(X)|>\epsilon) 
&\le P\left(\left|\intinfinity w^+(P(u^+(X))>x) dx- \intinfinity w^+(1- {\hat F_n}^+(x)) dx\right|>\epsilon/2\right) \\
&+ 
P\left(\left|\intinfinity w^-(P(u^-(X))>x) dx- \intinfinity w^-(1- {\hat F_n}^-(x)) dx\right|>\epsilon/2\right)\\
&\le 4 e^{-n \frac{\epsilon^2}{2 L^2 M^2}}.
\end{align*} 
And the claim follows. 
\end{proof}

%%%%%%%%%%%%%%%%%%%%%%Lipschitz-ends-here %%%%%%%%%%%%%%%%%%%%%%%%%%%%%%%
\subsection{Proofs for discrete valued $X$}
\label{sec:proofs-discrete}
Without loss of generality, assume $w^+=w^-=w$, and let
\begin{align}
\label{eq:hatFk}
\hat F_k = 
\begin{cases}
   \sum_{i=1}^k \hat p_k & \text{if   } k \leq l \\
   \sum_{i=k}^K \hat p_k & \text{if  }  k > l.
\end{cases}  
\end{align}

The following proposition gives the rate at which $\hat{F_k}$ converges to $F_k$.
\begin{proposition}
\label{prop:hoeffding-discrete}
Let $F_k$ and $\hat F_k$ be as defined in \eqref{eq:Fk}, \eqref{eq:hatFk}, Then, we have that, for every $\epsilon >0$, 
$$P(|\hat{F_k}-F_k| > \epsilon) \leq 2 e^{-2n \epsilon^2}.$$
\end{proposition}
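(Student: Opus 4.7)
The proof is essentially a direct application of Hoeffding's inequality, which was already stated earlier in Appendix~\ref{sec:holder-proofs}. The plan is to rewrite $\hat F_k$ as a normalized sum of i.i.d. bounded random variables with mean $F_k$, and then invoke Hoeffding with $a=0$, $b=1$.

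More concretely, first I would treat the two cases of the definition uniformly. For $k\le l$, define the indicators $Y_j^{(k)}=I_{\{X_j\le x_k\}}$, and for $k>l$, define $Y_j^{(k)}=I_{\{X_j\ge x_k\}}$, where $X_1,\ldots,X_n$ are the i.i.d.\ samples from the distribution of $X$. In either case, $Y_1^{(k)},\ldots,Y_n^{(k)}$ are i.i.d.\ Bernoulli random variables taking values in $[0,1]$, and unwinding the definitions \eqref{eq:Fk} and \eqref{eq:hatFk} gives
\[
\hat F_k \;=\; \frac{1}{n}\sum_{j=1}^n Y_j^{(k)}, \qquad \E[Y_j^{(k)}] \;=\; F_k .
\]

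Next, I would apply Hoeffding's inequality (the lemma recalled just before Proposition~\ref{prop:holder-asymptotic}) with $a=0$, $b=1$, $t=\epsilon$. This yields
\[
P\!\left(\left|\sum_{j=1}^n Y_j^{(k)} - n F_k \right| \ge n\epsilon\right) \;\le\; 2\exp\!\big(-2n\epsilon^2/(b-a)^2\big) \;=\; 2 e^{-2n\epsilon^2},
\]
which, after dividing the event inside the probability by $n$, is exactly the claimed bound $P(|\hat F_k - F_k|>\epsilon)\le 2 e^{-2n\epsilon^2}$.

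There is no real obstacle here. The only thing to be careful about is handling the two piecewise cases of \eqref{eq:Fk}/\eqref{eq:hatFk} in one breath, which is settled by noting that both $I_{\{X_j\le x_k\}}$ and $I_{\{X_j\ge x_k\}}$ are $[0,1]$-valued with the correct Bernoulli means, so Hoeffding's constants are unaffected. The resulting bound is uniform in $k$ (it does not depend on $p_k$ or on which side of the reference index $l$ the point lies).
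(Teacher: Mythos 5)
Your proof is correct and takes essentially the same route as the paper: the paper likewise rewrites $\hat F_k$ as an empirical average of $[0,1]$-valued indicators with mean $F_k$ (working out the case $k>l$ explicitly and noting $k\le l$ is symmetric) and then invokes the Hoeffding lemma stated earlier in the appendix. Your uniform treatment of the two cases is just a cosmetic repackaging of the same argument.
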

\begin{proof}
We focus on the case when $k > l$, while the case of $k \leq l$ is proved in a similar fashion.
Notice that when $k>l$,  $\hat F_k =I_{(X_i \geq  x_k) }$. Since the random variables $X_i$ are independent of each other and  for each $i$, are bounded above by $1$, we can apply Hoeffding's inequality to obtain 
\begin{align*}
P(\left|\hat{F_k}- F_k \right| > \epsilon) & = P(\left| \frac{1}{n} \sum_{i=1}^n I_{\{X_i \geq
x_k\}} - \frac{1}{n} \sum_{i=1}^n E(I_{\{X_i \geq x_k\}}) \right| > \epsilon) \\ & = P(\left|
\sum_{i=1}^n I_{\{X_i \geq x_k\}} - \sum_{i=1}^n E(I_{\{X_i \geq x_k\}}) \right| > n\epsilon) \\ &
    \leq 2e^{-2n \epsilon^2}.
\end{align*}
\end{proof}

The proof of Proposition \ref{prop:sample-complexity-discrete} requires the following claim which gives the convergence rate under local Lipschitz weights. 

\begin{proposition}
\label{prop:discrete-first-term}
Under conditions of Proposition \ref{prop:sample-complexity-discrete}, with $F_k$ and $\hat F_k$ as defined in \eqref{eq:Fk} and \eqref{eq:hatFk},  we have
$$P(\left| \sum_{i=1}^K u_{k} w(\hat{F_k}) - \sum_{i=1}^K u_{k} w(F_k) \right| >\epsilon) < K\cdot (
e^{-\delta^2\cdot 2n} + e^{-\epsilon^2 2n/(KLA)^2}), \text{where}$$
\begin{align}
\label{eq:uplusminusxk}
u_k = 
\begin{cases}
   u^{-}(x_{k}) & \text{if   } k \leq l \\
   u^{+}(x_{k}) & \text{if  }  k > l.
\end{cases}  
\end{align} 
\end{proposition}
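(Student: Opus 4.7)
The plan is to decompose the estimation error by a triangle inequality and a union bound over $k = 1,\ldots,K$, and then control each summand using the local Lipschitz hypothesis on $w$ together with the Hoeffding-type tail bound of Proposition~\ref{prop:hoeffding-discrete}. Concretely, I will first write
$$
P\!\left(\left| \sum_{k=1}^K u_k\bigl(w(\hat F_k)-w(F_k)\bigr) \right| > \epsilon\right)
\;\le\; \sum_{k=1}^K P\!\left(|u_k|\,|w(\hat F_k)-w(F_k)| > \tfrac{\epsilon}{K}\right),
$$
and then use the uniform bound $|u_k|\le A$ to reduce each summand to a probability of the form $P(|w(\hat F_k)-w(F_k)|>\epsilon/(KA))$.

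Next, to exploit local (rather than global) Lipschitz continuity of $w$, I would split each summand according to whether $\hat F_k$ lies inside the Lipschitz neighborhood of $F_k$:
$$
P\!\left(|w(\hat F_k)-w(F_k)| > \tfrac{\epsilon}{KA}\right)
\;\le\; P\!\left(|\hat F_k-F_k|>\delta_k\right)
\;+\; P\!\left(|\hat F_k-F_k|\le \delta_k,\; |w(\hat F_k)-w(F_k)| > \tfrac{\epsilon}{KA}\right).
$$
On the second event, local Lipschitz continuity with constant $L_k\le L$ gives $|w(\hat F_k)-w(F_k)|\le L\,|\hat F_k-F_k|$, so that event forces $|\hat F_k-F_k| > \epsilon/(KLA)$. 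Using $\delta_k\ge \delta$, both residual probabilities are now Hoeffding-type deviations of the empirical CDF, and Proposition~\ref{prop:hoeffding-discrete} yields
$$
P(|\hat F_k-F_k|>\delta) \;\lesssim\; e^{-2n\delta^2},
\qquad
P(|\hat F_k-F_k|>\epsilon/(KLA)) \;\lesssim\; e^{-2n\epsilon^2/(KLA)^2}.
$$
Summing over $k$ gives the stated bound.

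The main obstacle is the interplay between the localness of the Lipschitz property and the concentration inequality: one must show the empirical quantity $\hat F_k$ is close enough to $F_k$ to invoke the local Lipschitz constant at $F_k$, which is exactly what the event $\{|\hat F_k-F_k|\le \delta_k\}$ accomplishes. Taking $\delta=\min_k \delta_k$ and $L=\max_k L_k$ renders both the neighborhood radius and the Lipschitz slope uniform, which is what lets the two Hoeffding bounds be summed into the clean final estimate. The rest of the argument is routine bookkeeping with the union bound and the fact that $u_k$ differs only by whether one applies $u^+$ or $u^-$ at $x_k$, both uniformly bounded by $A$.
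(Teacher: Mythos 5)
Your proposal is correct and follows essentially the same route as the paper's proof: a union bound over $k$, a split of each summand on the event $\{|\hat F_k - F_k| \le \delta_k\}$ so that the local Lipschitz constant at $F_k$ can be invoked, and an application of the Hoeffding-type bound of Proposition~\ref{prop:hoeffding-discrete} to both resulting deviation probabilities. The only cosmetic difference is that you factor out the uniform bound $|u_k|\le A$ slightly earlier than the paper does, which changes nothing in substance.
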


\begin{proof}
Observe that

\begin{align*}
P(\left| \sum_{k=1}^K u_k w(\hat{F_k}) - \sum_{k=1}^K u_k w(F_k) \right| >\epsilon) & = P (
\bigcup_{k=1}^K \left| u_k w(\hat{F_k}) -u_k w(F_k) \right| > \frac {\epsilon} {K}) \\ & \leq
    \sum_{k=1}^K P (\left| u_k w(\hat{F_k}) -u_k w(F_k) \right| > \frac {\epsilon} {K})
\end{align*}
Notice that $\forall k =1,....K$
$[{p_k}- \delta, {p_k}+\delta)$,
the function $w$ is locally Lipschitz with common constant $L$.
Therefore, for each k, we can decompose the probability as 
\begin{align*}
& P (\left| u_k w(\hat{F_k}) -u_k w(F_k) \right| > \frac {\epsilon} {K}) \\ & = P ( [ \left| F_k -
\hat{F_k} \right| >\delta ] \bigcap [ \left| u_k w(\hat{F_k}) -u_k w(F_k) \right| ] > \frac
{\epsilon} {K}) + P ( [ \left| F_k - \hat{F_k} \right| \leq\delta ] \bigcap [ \left| u_k
    w(\hat{F_k}) -u_k w(F_k) \right| ] > \frac {\epsilon} {K}) \\ & \leq P ( \left| F_k - \hat{F_k}
    \right| >\delta) + P ( [ \left| F_k - \hat{F_k} \right| \leq\delta ] \bigcap [ \left| u_k
    w(\hat{F_k}) -u_k w(F_k) \right| ] > \frac {\epsilon} {K}).
\end{align*}
 
According to the property of locally Lipschitz continuous,
we have
\begin{align*}
& P ( [ \left| F_k - \hat{F_k} \right| \leq\delta ] \bigcap [ \left| u_k w(\hat{F_k}) -u_k w(F_k)
\right| ] > \frac {\epsilon} {K}) \\ & \leq P(u_k L \left| F_k - \hat{F_k} \right| > \frac
    {\epsilon} {K}) \leq e^ {-\epsilon\cdot 2n /(K L u_k)^2} \leq e^ {-\epsilon\cdot 2n /(K L A)^2},
     \forall k.
\end{align*}
And similarly,
\begin{align*}
P(\left| F_k - \hat{F_k} \right| > \delta) & \leq e^{-\delta^2 /2n},   \forall
    k.
\end{align*}
And as a result,
\begin{align*}
 P(\left| \sum_{k=1}^K u_k w(\hat{F_k}) - \sum_{k=1}^K u_k w(F_k) \right| >\epsilon) & \leq
\sum_{k=1}^K P (\left| u_k w(\hat{F_k}) -u_k w(F_k) \right| > \frac {\epsilon} {K}) \\ & \leq
             \sum_{k=1}^K \left( e^{-\delta^2\cdot 2n} + e^{-\epsilon^2 \cdot 2n/ (KLA)^2} \right) \\ & =K\cdot
    (e^{-\delta^2\cdot 2n} + e^{-\epsilon^2 \cdot 2n/ (KLA)^2})
\end{align*}
\end{proof}

\subsection*{Proof of Proposition \ref{prop:sample-complexity-discrete}}
\begin{proof}
With $u_k$ as defined in \eqref{eq:uplusminusxk}, we need to prove that
\begin{align}
P(\left|\sum_{i=1}^K u_{k} \cdot(w(\hat{F_k})- w(\hat F_{k+1}) )
-  
\sum_{i=1}^K u_{k} \cdot(w(F_k)- w(F_{k+1}) )
\right| \leq \epsilon) > 1-\rho
\text{      ,     } \forall n> \frac{\ln(\frac{4K}{A})} { M}, 
\label{eq:dw}
\end{align}
where $w$ is Locally Lipschitz continuous with constants $L_1,....L_K$ at the points $F_1,....F_K$.
From a parallel argument to that in the proof of Proposition \ref{prop:discrete-first-term}, it is easy to infer that
$$P(\left| \sum_{i=1}^K u_k w(\hat F_{k+1}) - \sum_{i=1}^K u_k w(F_{k+1}) \right| >\epsilon) <
K\cdot ( e^{-\delta^2\cdot 2n} + e^{-\epsilon^2 2n/(KLA)^2})
$$
Hence,
\begin{align*}
& P(\left|\sum_{i=1}^K u_k \cdot(w(\hat{F_k})- w(\hat F_{k+1}) ) -  \sum_{i=1}^K u_k \cdot(w(F_k)-
w(F_{k+1}) ) \right| > \epsilon) \\ 
\leq & \quad P(\left|\sum_{i=1}^K u_k \cdot(w(\hat{F_k})) -
    \sum_{i=1}^K u_k \cdot(w(F_k)) \right| > \epsilon/2) \\
		&+ P(\left|\sum_{i=1}^K u_k
    \cdot(w(\hat F_{k+1})) -  \sum_{i=1}^K u_k \cdot(w(F_{k+1})) \right| > \epsilon/2) \\ \leq & \quad 2K
    (e^{-\delta^2\cdot 2n} + e^{-\epsilon^2 2n/(KLA)^2})
\end{align*}
The claim in \eqref{eq:dw} now follows.
\end{proof}

%%%%%%%%%%%%%%%%%%%%%%%%%%%%%%%%%%%%%%%%%%%%%%%%%%%%%%%%%%%%%%%%%%%%%%%%%%%%%%%%
%%%%%%%%%%%%%%%%%%%%%%%%%%%%%%%%%%%%%%%%%%%%%%%%%%%%%%%%%%%%%%%%%%%%%%%%%%%%%%%%
\section{Proofs for CPT-SPSA-G}
\label{appendix:1spsa}

To prove the main result in Theorem \ref{thm:1spsa-conv}, we first show, in the following lemma, that the gradient estimate using SPSA is only an order $O(\delta_n^2)$ term away from the true gradient. The proof differs from the corresponding claim for regular SPSA (see Lemma 1 in \cite{spall}) since we have a non-zero bias in the function evaluations, while the regular SPSA assumes the noise is zero-mean. Following this lemma, we complete the proof of Theorem \ref{thm:1spsa-conv} by invoking the well-known Kushner-Clark lemma \cite{kushner-clark}.

\todop{The bias control is with high prob and so probably all conv claims shoud be with high prob. Probably there is a simpler way out, but i dont know (yet)}
\begin{lemma}
\label{lemma:1spsa-bias}
Let $\F_n = \sigma(\theta_m,m\le n)$, $n\ge 1$.
Then, for any $i = 1,\ldots,d$, we have almost surely,  
\begin{align}
\left| \E\left[\left.\dfrac{\overline \C_n^{\theta_n+\delta_n \Delta_n} -\overline \C_n^{\theta_n-\delta_n \Delta_n}}{2 \delta_n \Delta_n^{i}}\right| \F_n \right] - \nabla_i \C(X^{\theta_n}) \right| \rightarrow 0 \text{ as } n\rightarrow\infty.
\end{align} 
\end{lemma}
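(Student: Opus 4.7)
The plan is to decompose the SPSA ratio into a ``clean'' finite-difference piece built from the true CPT-values and a ``bias'' piece collecting the estimation error, and to handle the two pieces by different arguments. Writing $\overline \C_n^\theta = \C(X^\theta) + \epsilon_n^\theta$, where $\epsilon_n^\theta$ is the (random) estimation error of Algorithm~\ref{alg:holder-est} run with $m_n$ samples at parameter $\theta$, I would split
\begin{align*}
\frac{\overline \C_n^{\theta_n+\delta_n\Delta_n} - \overline \C_n^{\theta_n-\delta_n\Delta_n}}{2\delta_n \Delta_n^{i}}
= \underbrace{\frac{\C(X^{\theta_n+\delta_n\Delta_n}) - \C(X^{\theta_n-\delta_n\Delta_n})}{2\delta_n \Delta_n^{i}}}_{T_1}
+ \underbrace{\frac{\epsilon_n^{\theta_n+\delta_n\Delta_n} - \epsilon_n^{\theta_n-\delta_n\Delta_n}}{2\delta_n \Delta_n^{i}}}_{T_2},
\end{align*}
and show $\E[T_1\mid \F_n]\to \nabla_i \C(X^{\theta_n})$ while $\E[T_2\mid \F_n]\to 0$, both almost surely.

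For $T_1$ I would use the integral form of the mean value theorem, valid under the stated $C^1$ hypothesis: $\C(X^{\theta_n+\delta_n\Delta_n})-\C(X^{\theta_n-\delta_n\Delta_n}) = \int_{-1}^{1}\delta_n\,\Delta_n^\top\nabla\C(X^{\theta_n+t\delta_n\Delta_n})\,dt$, which gives $T_1 = \int_{-1}^{1}\tfrac{\Delta_n^\top\nabla\C(X^{\theta_n+t\delta_n\Delta_n})}{2\Delta_n^{i}}\,dt$. Since $\Theta$ is compact and $\nabla \C$ is continuous, $\nabla\C$ is uniformly continuous on a neighborhood of $\Theta$, so as $\delta_n\downarrow 0$ the integrand converges to $\Delta_n^\top\nabla\C(X^{\theta_n})/(2\Delta_n^{i})$ uniformly in $t$ and in $\Delta_n\in\{\pm1\}^d$; bounded convergence then lets me pass to the limit inside the conditional expectation. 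Using $(\Delta_n^{i})^{2}=1$ and pairwise independence of Rademacher components we get $\E[\Delta_n^{j}/\Delta_n^{i}\mid \F_n] = \E[\Delta_n^{j}\Delta_n^{i}]=\mathbb{1}_{i=j}$, so $\E[T_1\mid \F_n]\to \nabla_i \C(X^{\theta_n})$.

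For $T_2$, since $|\Delta_n^{i}|=1$ deterministically, $|\E[T_2\mid \F_n]|\le \tfrac{1}{2\delta_n}\bigl(\E|\epsilon_n^{\theta_n+\delta_n\Delta_n}|+\E|\epsilon_n^{\theta_n-\delta_n\Delta_n}|\bigr)$. The key step is to convert the in-probability sample-complexity bound of Proposition~\ref{prop:holder-dkw} into an expectation bound on $|\epsilon_n^\theta|$ that is uniform in $\theta\in\Theta$. Its proof actually yields an exponential tail $P(|\epsilon_n^\theta|>s)\le 4\exp(-c\,m_n s^{2/\alpha})$ with $c = (4H^2 M^2)^{-1}$, and under (A2$'$) both $\overline \C_n^\theta$ and $\C(X^\theta)$ are bounded by $2M$, so by integrating the tail $\E|\epsilon_n^\theta|\le\int_0^{2M}P(|\epsilon_n^\theta|>s)\,ds = O(m_n^{-\alpha/2})$. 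Hence $|\E[T_2\mid \F_n]|=O\!\bigl(1/(m_n^{\alpha/2}\delta_n)\bigr)$, which vanishes by (A3). The main obstacle is precisely this last step: the sample-complexity bound is probabilistic, and one must combine the exponential DKW-type tail with the deterministic boundedness of the utilities to obtain a uniform expectation bound strong enough that dividing by $\delta_n$ still yields a vanishing quantity.
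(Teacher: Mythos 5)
Your proposal is correct, and it uses the same overall decomposition as the paper: write $\overline\C_n^\theta = \C(X^\theta)+\epsilon_n^\theta$, split the SPSA ratio into the exact finite difference of CPT-values plus the estimation-bias term, kill the cross terms $\E[\Delta_n^j/\Delta_n^i\mid\F_n]=0$ ($j\neq i$) by independence of the Rademacher components, and invoke (A3) to make the estimation bias vanish. The differences are in how the two sub-steps are executed, and in both places your version is the more careful one. For the clean term $T_1$, the paper performs a second-order Taylor expansion with an $O(\delta_n^3)$ remainder and concludes the SPSA bias is $O(\delta_n^2)$; this implicitly requires more smoothness than the stated hypothesis (Theorem \ref{thm:1spsa-conv} only assumes $\C(X^\theta)$ is continuously differentiable). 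Your integral mean-value-theorem argument plus uniform continuity of $\nabla\C$ on (a neighborhood of) the compact set $\Theta$ needs only the $C^1$ hypothesis, at the price of losing the explicit $O(\delta_n^2)$ rate --- which is immaterial for this lemma, though the paper later quotes the $O(\delta_n^2)$ rate when verifying condition (B2) in the proof of Theorem \ref{thm:1spsa-conv}. For the bias term $T_2$, the paper simply asserts that Proposition \ref{prop:holder-dkw} together with $1/(m_n^{\alpha/2}\delta_n)\to 0$ makes $\eta_n$ vanish; since Proposition \ref{prop:holder-dkw} is a high-probability statement while the lemma concerns a conditional expectation, there is a genuine (if routine) step being skipped there. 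Your tail-integration argument, $\E|\epsilon_n^\theta|\le\int_0^{2M}P(|\epsilon_n^\theta|>s)\,ds=O(m_n^{-\alpha/2})$ using the exponential DKW-type tail and the boundedness from (A2'), is exactly the missing bridge, and it is uniform in $\theta$ because $H$ and $M$ are. In short: same skeleton, but your write-up closes two small gaps in the paper's argument.
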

\begin{proof}
Recall that the CPT-value estimation scheme is biased, i.e., providing samples with policy $\theta$, we obtain its CPT-value estimate as $V^{\theta}(x_0) + \epsilon^\theta$. Here $\epsilon^\theta$ denotes the bias. 
% However, since the number of samples $m_n$ go to infinity, we have $$\epsilon^{\theta_n}  \rightarrow 0 \text{ as } n \rightarrow \infty.$$

We claim
\begin{align}
\quad\E\left[\dfrac{\overline \C_n^{\theta_n+\delta_n \Delta_n} -\overline \C_n^{\theta_n-\delta_n \Delta_n}}{2 \delta_n \Delta_n^{i}} \left.\right| \F_n\right] 
= \quad&\E\left[\dfrac{\C(X^{\theta_n + \delta_n \Delta_n}) - \C(X^{\theta_n - \delta_n \Delta_n})}{2 \delta_n \Delta_n^{i}} \left.\right| \F_n\right] + \E\left[ \eta_n \mid \F_n\right],
\end{align}
where $\eta_n = \left(\dfrac{\epsilon^{\theta_n +\delta_n\Delta} - \epsilon^{\theta_n-\delta_n\Delta}}{2\delta_n\Delta_n^{i}}\right)$
 is the bias arising out of the empirical distribution based CPT-value estimation scheme.
From Proposition \ref{prop:holder-dkw} and the fact that $\frac{1}{m_n^{\alpha/2} \delta_n} \rightarrow 0$ by assumption (A3), we have that
$\eta_n$ goes to zero asymptotically. In other words,
\begin{align}
\quad\E\left[\dfrac{\overline \C_n^{\theta_n+\delta_n \Delta_n} -\overline \C_n^{\theta_n-\delta_n \Delta_n}}{2 \delta_n \Delta_n^{i}} \left.\right| \F_n\right] 
&\xrightarrow{n\rightarrow\infty}  \E\left[\dfrac{\C(X^{\theta_n + \delta_n \Delta_n}) -\C(X^{\theta_n - \delta_n \Delta_n})}{2 \delta_n \Delta_n^{i}} \left.\right| \F_n\right].  \label{eq:l1}
\end{align}
We now analyse the RHS of \eqref{eq:l1}.
By using suitable Taylor's expansions,
\begin{align*}
\C(X^{\theta_n + \delta_n \Delta_n}) = \C(X^{\theta_n}) + \delta_n \Delta_n\tr \nabla \C(X^{\theta_n}) + \frac{\delta^2}{2} \Delta_n\tr \nabla^2 \C(X^{\theta_n})\Delta_n + O(\delta_n^3), \\
\C(X^{\theta_n - \delta_n \Delta_n}) = \C(X^{\theta_n}) - \delta_n \Delta_n\tr \nabla \C(X^{\theta_n}) + \frac{\delta^2}{2} \Delta_n\tr \nabla^2 \C(X^{\theta_n})\Delta_n + O(\delta_n^3).\end{align*}
From the above, it is easy to see that 
\begin{align*}
\dfrac{\C(X^{\theta_n + \delta_n \Delta_n}) - \C(X^{\theta_n - \delta_n \Delta_n})}{2 \delta_n \Delta_n^i}
- \nabla_i \C(X^{\theta_n})
=\underbrace{\sum_{j=1,j\not=i}^{N} \frac{\Delta_n^j}{\Delta_n^i}\nabla_j \C(X^{\theta_n})}_{(I)} + O(\delta_n^2).
\end{align*}
Taking conditional expectation on both sides, we obtain
\begin{align}
\E\left[\dfrac{\C(X^{\theta_n + \delta_n \Delta_n}) - \C(X^{\theta_n - \delta_n \Delta_n})}{2 \delta_n \Delta_n^i} \left.\right| \F_n\right] 
=& \nabla_i \C(X^{\theta_n}) + \E\left[\sum_{j=1,j\not=i}^{N} \frac{\Delta_n^j}{\Delta_n^i}\right]\nabla_j \C(X^{\theta_n}) + O(\delta_n^2)\nonumber\\
= & \nabla_i \C(X^{\theta_n}) + O(\delta_n^2).\label{eq:l2}
\end{align}
The first equality above follows from the fact that $\Delta_n$ is distributed according to a $d$-dimensional vector of Rademacher random variables and is independent of $\F_n$. The second inequality follows by observing that $\Delta_n^i$ is independent of $\Delta_n^j$, for any $i,j =1,\ldots,d$, $j\ne i$. 

The claim follows by using the fact that $\delta_n \rightarrow 0$ as $n\rightarrow \infty$.
\end{proof}

\subsection*{Proof of Theorem \ref{thm:1spsa-conv}}

\begin{proof}

%Recall that $\F_n = \sigma(\theta_m,m\le n;\Delta_m,m<n)$, $n\ge 1$.
We first rewrite the update rule \eqref{eq:theta-update} as follows: For $i=1,\ldots,d$,
\begin{align}
\theta^{i}_{n+1}  =  \theta^{i}_n +  \gamma_n(\nabla_{i} \C(X^{\theta_n}) + \beta_n + \xi_n), 
\label{eq:1spsa-equiv}
\end{align}
where 
\begin{align*}
\beta_n \quad= &\quad \E\left(\dfrac{(\overline \C_n^{\theta_n + \delta_n \Delta_n} -\overline \C_n^{\theta_n - \delta_n \Delta_n})}{2 \delta_n \Delta_n^{i}} \mid \F_n \right) - \nabla \C(X^{\theta_n}), \text{ and}\\
\xi_n \quad = & \quad\left(\dfrac{\overline \C_n^{\theta_n + \delta_n \Delta_n} -\overline \C_n^{\theta_n - \delta_n \Delta_n}}{2 \delta_n\Delta_n^{i}}\right)  - \E\left(\dfrac{(\overline \C_n^{\theta_n + \delta_n \Delta_n} -\overline \C_n^{\theta_n - \delta_n \Delta_n})}{2 \delta_n \Delta_n^{i}}\mid \F_n \right).
\end{align*}
In the above, $\beta_n$ is the bias in the gradient estimate due to SPSA and $\xi_n$ is a martingale difference sequence..
% , while  Lemma \ref{lemma:1spsa-bias} and the fact that $\delta_n \rightarrow 0$ imply that $\beta_n$ vanishes asymptotically. 

Convergence of \eqref{eq:1spsa-equiv} can be inferred from Theorem 5.3.1 on pp. 191-196 of \cite{kushner-clark}, provided we verify the necessary assumptions given as (B1)-(B5) below:
\begin{enumerate}[\bfseries (B1)]
\item $\nabla \C(X^{\theta})$ is a continuous $\R^{d}$-valued function.
\todop{Don't know how this gets verfied in our setting. Help!}
\item  The sequence $\beta_n,n\geq 0$ is a bounded random sequence with
$\beta_n \rightarrow 0$ almost surely as $n\rightarrow \infty$.
\item The step-sizes $\gamma_n,n\geq 0$ satisfy
$  \gamma_n\rightarrow 0 \mbox{ as }n\rightarrow\infty \text{ and } \sum_n \gamma_n=\infty.$
\item $\{\xi_n, n\ge 0\}$ is a sequence such that for any $\epsilon>0$,
\[ \lim_{n\rightarrow\infty} P\left( \sup_{m\geq n}  \left\|
\sum_{k=n}^{m} \gamma_k \xi_k\right\| \geq \epsilon \right) = 0. \]
\item There exists a compact subset $K$ which is the set of asymptotically stable equilibrium points for the following ODE:
\begin{align}
\dot\theta^{i}_t = \check\Gamma_{i}\left(- \nabla \C(X^{\theta^{i}_t})\right), \text{ for } i=1,\dots,d,\label{eq:pi-ode}
\end{align}
\end{enumerate} 

In the following, we verify the above assumptions for the recursion \eqref{eq:theta-update}:
\begin{itemize}
\item (B1) holds by assumption in our setting.

\item Lemma \ref{lemma:1spsa-bias} above establishes that the bias $\beta_n$ is $O(\delta_n^2)$ and since $\delta_n \rightarrow 0$ as $n\rightarrow \infty$, it is easy to see that (B2) is satisfied for $\beta_n$. 

\item (B3) holds by assumption (A3).

\item We verify (B4) using arguments similar to those used in \cite{spall} for the classic SPSA algorithm:\\
We first recall Doob's martingale inequality (see (2.1.7) on pp. 27 of \cite{kushner-clark}):
\begin{align}
P\left( \sup_{m\geq 0}   \left\|W_l\right\| \geq \epsilon \right) \le \dfrac{1}{\epsilon^2} \lim_{l\rightarrow \infty} \E \left\|W_l\right\|^2. 
\end{align}
Applying the above inequality to the martingale sequence $\{W_l\}$, where  $W_l := \sum_{n=0}^{l-1} \gamma_n \eta_n$, $l\ge 1$, we obtain
\begin{align}
P\left( \sup_{l\geq k}   \left\|\sum_{n=k}^{l} \gamma_n \xi_n\right\| \geq \epsilon \right) \le \dfrac{1}{\epsilon^2} \E \left\|
\sum_{n=k}^{\infty} \gamma_n \xi_n\right\|^2 = \dfrac{1}{\epsilon^2} \sum_{n=k}^{\infty} \gamma_n^2 \E\left\| \eta_n\right\|^2. \label{eq:b4}
\end{align}
The last equality above follows by observing that, for $m < n$, $\E(\xi_m \xi_n) = \E(\xi_m \E(\xi_n\mid \F_n))=0$.
We now bound $\E\left\| \xi_n\right\|^2$ as follows:
\begin{align}
\E\left\| \xi_n\right\|^2\le &\E\left(\dfrac{\overline \C_n^{\theta_n + \delta_n \Delta_n} -\overline \C_n^{\theta_n - \delta_n \Delta_n}}{2 \delta_n \Delta_n^i}\right)^2 \label{eq:mi}\\
\le &\left(\left(\E\left(\dfrac{\overline \C_n^{\theta_n + \delta_n \Delta_n}}{2 \delta_n \Delta_n^i}\right)^2\right)^{1/2}
+ \left(\E\left(\dfrac{\overline \C_n^{\theta_n - \delta_n \Delta_n}}{2 \delta_n \Delta_n^i}\right)^2\right)^{1/2}\right)^2 \label{eq:minko}\\
\le &\frac1{4\delta_n^2} \left[ \E\left(\frac1{(\Delta_n^i)^{2+2\alpha_1}}\right) \right]^{\frac{1}{1+\alpha1}} \nonumber\\
& \times \left(\left[\E\left[ (\overline \C_n^{\theta_n + \delta_n \Delta_n})\right]^{2+2\alpha_2}\right]^{\frac{1}{1+\alpha_2}} +
\left[\E\left[ (\overline \C_n^{\theta_n - \delta_n \Delta_n})\right]^{2+2\alpha_2}\right]^{\frac{1}{1+\alpha_2}}\right)\label{eq:holder}\\
\le &\frac1{4\delta_n^2} \left(\left[\E\left[ (\overline \C_n^{\theta_n + \delta_n \Delta_n})\right]^{2+2\alpha_2}\right]^{\frac{1}{1+\alpha_2}} +
\left[\E\left[ (\overline \C_n^{\theta_n - \delta_n \Delta_n})\right]^{2+2\alpha_2}\right]^{\frac{1}{1+\alpha_2}}\right) \label{eq:h2}\\
\le & \frac{C}{\delta_n^2}, \text{ for some } C< \infty. \label{eq:h3}
\end{align}
The inequality in \eqref{eq:mi} uses the fact that, for any random variable $X$, $\E\left\|X -  E[X\mid\F_n]\right\|^2 \le \E X^2$. The inequality in \eqref{eq:minko} follows by the fact that $\E (X+Y)^2 \le \left( (\E X^2)^{1/2} + (\E Y^2)^{1/2}\right)^2$.
The inequality in \eqref{eq:holder} uses Holder's inequality, with $\alpha_1, \alpha_2>0$ satisfying $\frac{1}{1+\alpha_1} + \frac{1}{1+\alpha_2}=1$. 
The equality in \eqref{eq:h2} above follows owing to the fact that $\E\left(\frac1{(\Delta_n^i)^{2+2\alpha_1}}\right)  = 1$ as $\Delta_n^i$ is Rademacher. 
The inequality in \eqref{eq:h3} follows by using the fact that
%, for any $\theta$, the CPT-value estimate $\widehat \C(D^\theta) = \C(D^\theta) + \epsilon^\theta$. We assume a finite state-action spaced SSP (which implies that the costs $\max_{s,a} g(s,a) < \infty$) and consider only \textit{proper} policies (which implies that the total cost $D^\theta$ is bounded for any policy $\theta$) and finally, by (A1), the weight functions are Lipschitz - these together imply 
$\C(D^\theta)$ is bounded for any policy $\theta$ and the bias $\epsilon^\theta$ is bounded by Proposition \ref{prop:holder-dkw}.
\todop{Need to update the above arguments for the general case of $X^\theta$, with $\theta$ in a compact set}

Thus, $\E\left\| \xi_n\right\|^2 \le \frac{C}{\delta_n^2}$ for some $C<\infty$. Plugging this in \eqref{eq:b4}, we obtain
\begin{align*}
 \lim_{k\rightarrow\infty} P\left( \sup_{l\geq k}   \left\|\sum_{n=k}^{l} \gamma_n \xi_n\right\|\geq \epsilon \right) \le \dfrac{d C}{\epsilon^2} \lim_{k\rightarrow\infty} \sum_{n=k}^{\infty}  \frac{\gamma_n^2}{\delta_n^2} =0.
\end{align*}
The equality above follows from (A3) in the main paper.
\item Observe that $\C(X^{\theta})$ serves as a strict Lyapunov function for the ODE \eqref{eq:pi-ode}. This can be seen as follows:
$$ \dfrac{d \C(X^{\theta})}{dt} = \nabla \C(X^{\theta}) \dot \theta = \nabla \C(X^{\theta}) \check\Gamma \left(-\nabla \C(X^{\theta}\right) < 0.$$
Hence, the set $\K = \{\theta \mid \check\Gamma_{i} \left(-\nabla \C(X^{\theta})\right)=0, \forall i=1,\ldots,d\}$ serves as the asymptotically stable attractor for the ODE \eqref{eq:pi-ode}.
\end{itemize} 
The claim follows from the Kushner-Clark lemma.
\end{proof}
%%%%%%%%%%%%%%%%%%%%%%%%%%%%%%%%%%%%%%%%%%%%%%%%%%%%%%%%%%%%%%%%%%%%%%%%%%%%%%%%
%%%%%%%%%%%%%%%%%%%%%%%%%%%%%%%%%%%%%%%%%%%%%%%%%%%%%%%%%%%%%%%%%%%%%%%%%%%%%%%%
%%%%%%%%%%%%%%%%%%%%%%%%%%%%%%%%%%%%%%%%%%%%%%%%%%%%%%%%%%%%%%%%%%%%%%%%%%%%%%%%
%\newpage
\section{Newton algorithm for CPT-value optimization (CPT-SPSA-N)}
\label{sec:2spsa}
\subsection{Need for second-order methods}
While stochastic gradient descent methods are useful in minimizing the CPT-value given biased estimates, they are sensitive to the choice of the step-size sequence $\{\gamma_n\}$.  In particular, for a step-size choice $\gamma_n = \gamma_0/n$, if $a_0$ is not chosen to be greater than $1/3 \lambda_{min}(\nabla^2 \C(X^{\theta^*}))$, then the optimum rate of convergence is not achieved, where $\lambda_{\min}$ denotes the minimum eigenvalue, while $\theta^*\in \K$ (see Theorem \ref{thm:1spsa-conv}). A standard approach to overcome this step-size dependency is to use iterate averaging, suggested independently by Polyak \cite{polyak1992acceleration} and Ruppert \cite{ruppert1991stochastic}. The idea is to use larger step-sizes $\gamma_n = 1/n^\varsigma$, where $\varsigma \in (1/2,1)$, and then combine it with averaging of the iterates. However, it is well known  that iterate averaging is optimal only in an asymptotic sense, while finite-time bounds show that the initial condition is not forgotten sub-exponentially fast (see 
Theorem 2.2 in \cite{fathi2013transport}). Thus, it is optimal to average iterates only 
after a sufficient number of iterations have passed and all the iterates are very close to the optimum. However, the latter situation serves as a stopping condition in practice.

An alternative approach is to employ step-sizes of the form $\gamma_n = (a_0/n) M_n$, where $M_n$ converges to $\left(\nabla^2 \C(X^{\theta^*})\right)^{-1}$, i.e., the inverse of the Hessian of the CPT-value at the optimum $\theta^*$. Such a scheme gets rid of the step-size dependency (one can set $a_0=1$) and still obtains optimal convergence rates. This is the motivation behind having a second-order optimization scheme.

\subsection{Gradient and Hessian estimation}
We estimate the Hessian of the CPT-value function using the scheme suggested by \cite{bhatnagar2015simultaneous}. As in the first-order method, we use Rademacher random variables to simultaneously perturb all the coordinates. However, in this case, we require three system trajectories with corresponding  parameters $\theta_n+\delta_n(\Delta_n+\widehat\Delta_n)$, $\theta_n-\delta_n(\Delta_n+\widehat\Delta_n)$ and $\theta_n$, where $\{\Delta_n^i, \widehat\Delta_n^i, i=1,\ldots,d\}$ are i.i.d. Rademacher and independent of $\theta_0,\ldots,\theta_n$. Using the CPT-value estimates for the aforementioned  parameters, we estimate the Hessian and the gradient of the CPT-value function as follows: For $i,j=1,\ldots,d$, set
\begin{align*}
&\widehat \nabla_{i} \C(X_n^{\theta_n})=\dfrac{\overline \C_n^{\theta_n+\delta_n(\Delta_n+\widehat\Delta_n)} - \overline \C_n^{\theta_n-\delta_n(\Delta_n+\widehat\Delta_n)}}{2\delta_n \Delta_n^{i}},\\ 
&\widehat H_n^{i,j}=\dfrac{\overline \C_n^{\theta_n+\delta_n(\Delta_n+\widehat\Delta_n} + \overline \C_n^{\theta_n-\delta_n(\Delta_n+\widehat\Delta_n} - 2\overline \C_n^{\theta_n}}{\delta_n^2 \Delta_n^{i}\widehat\Delta_n^{j}}.
\end{align*}
Notice that the above estimates require three samples, while the second-order SPSA algorithm proposed first in \cite{spall2000adaptive} required four.
Both the gradient estimate $\widehat \nabla \C(X_n^{\theta_n}) = [\widehat \nabla_i \C(X_n^{\theta_n})], i=1,\ldots,d,$ and the Hessian estimate $\widehat{H_n} = [\widehat H_n^{i,j}], i,j=1,\ldots,d,$ can be shown to be an $O(\delta_n^2)$ term away from the true gradient $\nabla \C(X^\theta_n)$ and Hessian $\nabla^2  \C(X^\theta_n)$, respectively (see Lemmas \ref{lemma:2spsa-bias}--\ref{lemma:2spsa-grad}).

%%%%%%%%%%%%%%%% alg-custom-block %%%%%%%%%%%%
%%%%%%%%%%%%%%%% alg-custom-block %%%%%%%%%%%%
\algblock{PEvalPrimeDouble}{EndPEvalPrimeDouble}
\algnewcommand\algorithmicPEvalPrimeDouble{\textbf{\em CPT-value Estimation (Trajectory 3)}}
 \algnewcommand\algorithmicendPEvalPrimeDouble{}
\algrenewtext{PEvalPrimeDouble}[1]{\algorithmicPEvalPrimeDouble\ #1}
\algrenewtext{EndPEvalPrimeDouble}{\algorithmicendPEvalPrimeDouble}
\algtext*{EndPEvalPrimeDouble}

\algblock{PImpNewton}{EndPImpNewton}
\algnewcommand\algorithmicPImpNewton{\textbf{\em Newton step}}
 \algnewcommand\algorithmicendPImpNewton{}
\algrenewtext{PImpNewton}[1]{\algorithmicPImpNewton\ #1}
\algrenewtext{EndPImpNewton}{\algorithmicendPImpNewton}

\algtext*{EndPImpNewton}

%%%%%%%%%%%%%%%%%%%

\begin{algorithm}[t]
\begin{algorithmic}
\State {\bf Input:} 
initial parameter $\theta_0 \in \Theta$ where $\Theta$ is a compact and convex subset of $\R^d$, perturbation constants $\delta_n>0$, sample sizes $\{m_n\}$, step-sizes $\{\gamma_n, \xi_n\}$, operator $\Gamma: \R^d \rightarrow \Theta$.
\For{$n = 0,1,2,\ldots$}	
	\State Generate $\{\Delta_n^{i}, \widehat\Delta_n^{i}, i=1,\ldots,d\}$ using Rademacher distribution, independent of $\{\Delta_m, \widehat \Delta_m, m=0,1,\ldots,n-1\}$.
	\PEval
	    \State Simulate $m_n$ samples  using parameter $(\theta_n+\delta_n (\Delta_n + \hat \Delta_n))$.
	    \State Obtain CPT-value estimate $\overline \C_n^{\theta_n+\delta_n (\Delta_n+\hat \Delta_n)}$.
	    \EndPEval
	    \PEvalPrime
  	    \State Simulate $m_n$ samples using parameter $(\theta_n-\delta_n (\Delta_n + \hat \Delta_n))$.
	    \State Obtain CPT-value estimate $\overline \C_n^{\theta_n-\delta_n (\Delta_n+\hat\Delta_n)}$.
	    \EndPEvalPrime
	    	    \PEvalPrimeDouble
  	    \State Simulate $m_n$ samples using parameter $\theta_n$.
	    \State Obtain CPT-value estimate $\overline \C_n^{\theta_n}$ using Algorithm \ref{alg:holder-est}.
	    \EndPEvalPrimeDouble
	    \PImpNewton
		%\State Gradient estimate $\widehat \nabla_{i} \C(X^\theta_n)\quad=\quad\dfrac{\overline \C_n^{\theta_n+\delta_n(\Delta_n+\widehat\Delta_n} - \overline \C_n^{\theta_n-\delta_n(\Delta_n+\widehat\Delta_n}}{2\delta_n \Delta_n^{i}}$
        %\State Hessian estimate $\widehat H_n\quad=\quad\dfrac{\overline \C_n^{\theta_n+\delta_n(\Delta_n+\widehat\Delta_n} + \overline \C_n^{\theta_n-\delta_n(\Delta_n+\widehat\Delta_n} - 2\widehat \nabla_{i} \C(X^\theta_n)}{\delta_n^2 \Delta_n^{i}\widehat\Delta_n^j}$
		\State Update the parameter and Hessian according to \eqref{eq:2spsa}--\eqref{eq:2spsa-H}.
		\EndPImpNewton
\EndFor
\State {\bf Return} $\theta_n.$
\end{algorithmic}
\caption{Structure of CPT-SPSA-N algorithm.}
\label{alg:structure-2}
\end{algorithm}

\subsection{Update rule}
We update the parameter incrementally using a Newton decrement as follows: For $i=1,\ldots,d$,
\begin{align}
\label{eq:2spsa}
% \theta_{n+1} =& \theta_{(1-\xi)\Theta}(\theta_n - \gamma_n \Upsilon(\overline H_n)^{-1} \widehat\nabla V^\theta_n(x^0)), \\
\theta^{i}_{n+1} =& \Gamma_{i}\left(\theta^{i}_n + \gamma_n \sum_{j=1}^{d} M_n^{i,j} \widehat \nabla_{j} \C(X^\theta_n)\right), \\
\overline H_n = & (1-\xi_n) \overline H_{n-1} + \xi_n \widehat H_n,\label{eq:2spsa-H}
\end{align}
where $\xi_n$ is a step-size sequence that satisfies 
$\sum_{n} \xi_n = \infty, \sum_n \xi_n^2 < \infty$ and $\frac{\gamma_n}{\xi_n}\rightarrow 0$ as $n\rightarrow \infty$. These conditions on $\xi_n$ ensure that the updates to $\overline H_n$ proceed on a timescale that is faster than that of $\theta_n$ in \eqref{eq:2spsa} - see Chapter 6 of \cite{borkar2008stochastic}.
Further, $\Gamma$ is a projection operator as in CPT-SPSA-G and  $M_n = [M_n^{i,j}] = \Upsilon(\overline H_n)^{-1}$.
% ,  $\widehat\nabla V^\theta_n(x^0)$ is an estimate of the gradient of the CPT-value function and $\widehat H_n$ and $\overline H_n$ denote the Hessian estimate and its smooth counterpart, respectively. 
Notice that we invert $\overline H_n$ in each iteration, and to ensure that this inversion is feasible (so that the $\theta$-recursion descends), we project $\overline H_n$ onto the set of positive definite matrices using the operator $\Upsilon$. The operator has to be such that asymptotically $\Upsilon(\overline H_n)$ should be the same as $\overline H_n$ (since the latter would converge to the true Hessian), while ensuring inversion is feasible in the initial iterations.  The assumption below makes these requirements precise.\\[1ex]
\textbf{Assumption (A4).}  For any $\{A_n\}$ and $\{B_n\}$,
${\displaystyle \lim_{n\rightarrow \infty} \left\| A_n-B_n \right\|}= 0 \Rightarrow {\displaystyle \lim_{n\rightarrow \infty} \parallel \Upsilon(A_n)- \Upsilon(B_n) \parallel}= 0$. Further, for any $\{C_n\}$  with
${\displaystyle \sup_n \parallel C_n\parallel}<\infty$,
${\displaystyle \sup_n \left(\parallel \Upsilon(C_n)\parallel + \parallel \{\Upsilon(C_n)\}^{-1} \parallel\right) < \infty}$.
\\[0.5ex]
%A simple way to define $\Upsilon(\overline H_n)$ is to first perform an eigen-decomposition of $\overline H_n$, followed by projecting all the eigen values onto the positive side (see \cite{gill1981practical} for a similar operator). 
A simple way to ensure the above is to have $\Upsilon(\cdot)$ as a diagonal matrix and then add a positive scalar $\delta_n$ to the diagonal elements so as to ensure invertibility  - see \cite{gill1981practical}, \cite{spall2000adaptive} for a similar operator.
%- this choice satisfies requirement (ii) in Theorem \ref{thm:2spsa} presented below.

%We next specify how the gradient $\widehat \nabla_i V^\theta_n(x^0)$ and Hessian $\widehat H_n$ estimates are obtained using SPSA.
Algorithm \ref{alg:structure-2} presents the pseudocode.  

%%%%%%%%%%%%%%%%%%%%%%%%%%%%%%%%%%%%%%%%%%%%%%%%%%%%%%%%%%%%%%
%%%%%%%%%%%%%%%%%%%%%%%%%%%%%%%%%%%%%%%%%%%%%%%%%%%%%%%%%%%%%%
\subsection{Convergence result}
\begin{theorem}
\label{thm:2spsa}
Assume (A1)-(A4). 
Consider the ODE: 
$$
\dot\theta^{i}_t = \check\Gamma_{i}\left( - \Upsilon(\nabla^2 \C(X^{\theta_t}))^{-1} \nabla \C(X^{\theta^{i}_t}) \right), \text { for }i=1,\dots,d,$$
where 
$\bar\Gamma_{i}$ is as defined in Theorem \ref{thm:1spsa-conv}. Let $\K = \{\theta \in \Theta \mid
\nabla \C(X^{\theta^{i}})  \check\Gamma_{i}\left(-\Upsilon(\nabla^2 \C(X^{\theta}))^{-1} \nabla \C(X^{\theta^{i}})\right)
=0, \forall i=1,\ldots,d\}$. Then, for $\theta_n$ governed by \eqref{eq:2spsa}, 
we have
$$\theta_n \rightarrow \K  \text{~~ a.s. as } n\rightarrow \infty.$$ 
\end{theorem}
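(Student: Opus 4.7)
The plan is to build on the template of Theorem \ref{thm:1spsa-conv} and exploit the two–timescale structure of the recursions \eqref{eq:2spsa}--\eqref{eq:2spsa-H} induced by the step-size condition $\gamma_n/\xi_n \to 0$. The parameter $\theta_n$ lives on the slower timescale while $\overline H_n$ lives on the faster one; from the viewpoint of the $\theta$-iteration, $\overline H_n$ looks equilibrated, and from the viewpoint of the $\overline H$-iteration, $\theta_n$ looks quasi-static. I will use the two-timescale machinery of Chapter 6 of \cite{borkar2008stochastic} to decouple the analysis, together with the Kushner--Clark lemma as in the proof of Theorem \ref{thm:1spsa-conv}.

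First I would establish the analogue of Lemma \ref{lemma:1spsa-bias} for both the gradient and Hessian estimators produced from the three perturbed CPT-value evaluations. Writing $\overline \C_n^{\theta_n\pm\delta_n(\Delta_n+\widehat\Delta_n)} = \C(X^{\theta_n\pm\delta_n(\Delta_n+\widehat\Delta_n)}) + \varepsilon_n^{\pm}$ and similarly for the centered evaluation, a Taylor expansion up to third order together with the Rademacher properties $\E[\Delta_n^i]=0$, $\E[(\Delta_n^i)^2]=1$, $\E[\Delta_n^i\Delta_n^j]=0$ for $i\ne j$, and the independence of $\Delta_n$ and $\widehat\Delta_n$, yields
\[
\E\!\left[\left.\widehat\nabla_i\C(X^{\theta_n}) \,\right|\F_n\right] = \nabla_i \C(X^{\theta_n}) + O(\delta_n^2),\qquad
\E\!\left[\left.\widehat H_n^{i,j}\,\right|\F_n\right] = \nabla^2_{i,j} \C(X^{\theta_n}) + O(\delta_n^2),
\]
provided the bias $\varepsilon_n^{\pm}/\delta_n^2 \to 0$, which by Proposition \ref{prop:holder-dkw} reduces to requiring $1/(m_n^{\alpha/2}\delta_n^2)\to 0$; this either is already implied by a strengthening of (A3) or must be added. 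The ratio $\Delta_n^i\widehat\Delta_n^j$ in the denominator of $\widehat H_n^{i,j}$ is $\pm 1$, so conditional second moments are finite, mirroring the computation \eqref{eq:mi}--\eqref{eq:h3} used for $\xi_n$ in the first-order analysis.

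Next I would analyze the faster-timescale recursion \eqref{eq:2spsa-H}. Rewriting it as
\[
\overline H_n = (1-\xi_n)\overline H_{n-1} + \xi_n\bigl(\nabla^2\C(X^{\theta_n}) + \beta_n^H + \zeta_n^H\bigr),
\]
with $\beta_n^H = O(\delta_n^2)$ and $\zeta_n^H$ a martingale difference, the standard argument (e.g.\ Lemma 1 of Chapter 6 of \cite{borkar2008stochastic}) shows that $\|\overline H_n - \nabla^2 \C(X^{\theta_n})\| \to 0$ almost surely, since $\theta_n$ moves on the slower timescale. By assumption (A4), this in turn yields $\|M_n - \Upsilon(\nabla^2 \C(X^{\theta_n}))^{-1}\| \to 0$ a.s., and $M_n$ stays uniformly bounded.

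With $M_n$ effectively replaced by $\Upsilon(\nabla^2 \C(X^{\theta_n}))^{-1}$, the slower-timescale $\theta$-recursion becomes
\[
\theta_{n+1}^i = \Gamma_i\!\left(\theta_n^i + \gamma_n\bigl[\,\textstyle\sum_j \Upsilon(\nabla^2\C(X^{\theta_n}))^{-1}_{i,j}\nabla_j \C(X^{\theta_n}) + \beta_n + \xi_n + o(1)\bigr]\right),
\]
to which the Kushner--Clark lemma applies as in the proof of Theorem \ref{thm:1spsa-conv}: the drift is continuous by (A4) and the smoothness of $\C(X^\theta)$; the bias $\beta_n$ is $O(\delta_n^2)\to 0$; the martingale difference sum satisfies Doob's inequality using the uniform boundedness of $M_n$ together with the second-moment bound on the gradient estimator (paralleling \eqref{eq:b4}); and $\C(X^\theta)$ serves as a strict Lyapunov function for the limiting ODE since $\Upsilon(\cdot)^{-1}$ is positive definite by (A4), giving $\frac{d}{dt}\C(X^{\theta_t}) = \nabla \C(X^{\theta_t})^\top \check\Gamma(-\Upsilon(\nabla^2\C(X^{\theta_t}))^{-1}\nabla\C(X^{\theta_t})) \le 0$, with equality exactly on $\K$. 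Hence $\theta_n \to \K$ a.s.

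The main obstacle I expect is the coupled bias control: the Hessian estimate $\widehat H_n^{i,j}$ divides differences of \emph{biased} CPT-value estimates by $\delta_n^2$ (rather than $\delta_n$ as in the first-order case), so the condition $\varepsilon_n^{\pm}/\delta_n^2 \to 0$ required here is strictly stronger than the condition $1/(m_n^{\alpha/2}\delta_n)\to 0$ in (A3). Making this rigorous requires either strengthening (A3) to $1/(m_n^{\alpha/2}\delta_n^2)\to 0$ or using a sharper high-probability bound on $\varepsilon^\theta$ from Proposition \ref{prop:holder-dkw} together with a Borel--Cantelli argument to guarantee that the perturbations in $\overline H_n$ vanish almost surely; all other ingredients are routine applications of two-timescale stochastic approximation.
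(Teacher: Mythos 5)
Your proposal follows essentially the same route as the paper's proof: bias lemmas for the gradient and Hessian estimators via Taylor expansion and Rademacher properties (the paper's Lemmas \ref{lemma:2spsa-bias} and \ref{lemma:2spsa-grad}), convergence of the faster-timescale recursion $\overline H_n \to \nabla^2\C(X^{\theta_n})$ using the two-timescale condition $\gamma_n/\xi_n \to 0$ and (A4) (the paper's Lemma \ref{lemma:h-est}, which defers to Lemmas 7.10--7.11 of \cite{Bhatnagar13SR}), and then the Kushner--Clark lemma with $\C(X^{\theta})$ as Lyapunov function for the limiting ODE. One point in your favor: your observation that the Hessian estimator divides biased CPT-value estimates by $\delta_n^2$ rather than $\delta_n$, so that controlling its bias requires $1/(m_n^{\alpha/2}\delta_n^2)\to 0$ rather than the condition $1/(m_n^{\alpha/2}\delta_n)\to 0$ stated in (A3), is a genuine issue that the paper's own proof of Lemma \ref{lemma:2spsa-bias} glosses over with the phrase ``as in the proof of Lemma \ref{lemma:1spsa-bias}, we can ignore the bias''; your suggested strengthening of (A3) is the right fix.
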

\begin{proof}
Before proving Theorem \ref{thm:2spsa}, we bound the bias in the SPSA based estimate of the Hessian in the following lemma.
\begin{lemma}
\label{lemma:2spsa-bias}
For any $i, j= 1,\ldots,d$, we have almost surely,  
\begin{align}
    \left| \E\left[\left.\dfrac{\overline \C_n^{\theta_n+\delta_n(\Delta_n+\widehat\Delta_n)} + \overline \C_n^{\theta_n-\delta_n(\Delta_n+\widehat\Delta_n)} - 2 \overline \C_n^{\theta_n}}{\delta_n^2 \Delta_n^{i}\widehat\Delta_n^j}\right| \F_n \right] - \nabla^2_{i,j} \C(X^{\theta_n}) \right| \rightarrow 0 \text{ as } n\rightarrow\infty.
\end{align} 
\end{lemma}
\begin{proof}
As in the proof of Lemma \ref{lemma:1spsa-bias}, we can ignore the bias from the CPT-value estimation scheme and conclude that
\begin{align}
    \quad&\E\left[\dfrac{\overline \C_n^{\theta_n+\delta_n(\Delta_n+\widehat\Delta_n)} + \overline \C_n^{\theta_n-\delta_n(\Delta_n+\widehat\Delta_n)} - 2\overline \C_n^{\theta_n}}{\delta_n^2 \Delta_n^i\widehat\Delta_n^j} \left.\right| \F_n\right] \nonumber\\
     &\xrightarrow{n\rightarrow\infty}  \E\left[\dfrac{\C(X^{\theta_n+\delta_n(\Delta_n+\widehat\Delta_n)}) + \C(X^{\theta_n-\delta_n(\Delta_n+\widehat\Delta_n)}) - 2\C(X^{\theta_n})}{\delta_n^2 \Delta_n^i\widehat\Delta_n^j} \left.\right| \F_n\right].  \label{eq:l21}
\end{align}
Now, the RHS of \eqref{eq:l21} approximates the true gradient with only an $O(\delta_n^2)$ error; this can be inferred using arguments similar to those used in the proof of Proposition 4.2 of \cite{bhatnagar2015simultaneous}. We provide the proof here for the sake of completeness.
Using Taylor's expansion as in Lemma \ref{lemma:1spsa-bias}, we obtain
\begin{align*}
&\dfrac{\C(X^{\theta_n+\delta_n(\Delta_n+\widehat\Delta_n)}) + \C(X^{\theta_n-\delta_n(\Delta_n+\widehat\Delta_n)}) - 2\C(X^{\theta_n})}{\delta_n^2 \Delta_n^i\widehat\Delta_n^j}\\
=&  \frac{(\Delta_n+\hat{\Delta_n})\tr \nabla^2 \C(X^{\theta_n})(\Delta_n
+\hat{\Delta_n})}{\triangle_i(n)\hat{\triangle}_j(n)}
+ O(\delta_n^2) \\
=& \sum_{l=1}^{d}\sum_{m=1}^{d} \frac{\Delta_n^l\nabla^2_{l,m}\C(X^{\theta_n})\Delta_n^m}{\Delta_n^i
\hat{\Delta}_n^j} + 2\sum_{l=1}^{d}\sum_{m=1}^{d} \frac{\Delta_n^l\nabla^2_{l,m}
\C(X^{\theta_n})\hat{\Delta}_n^m}{\Delta_n^i
\hat{\Delta}_n^j}+ \sum_{l=1}^{d}\sum_{m=1}^{d} \frac{\hat{\Delta}_n^l\nabla^2_{l,m}\C(X^{\theta_n})\hat{\Delta}_n^m}{\Delta_n^i
\hat{\Delta}_n^j} + O(\delta_n^2).
\end{align*}
Taking conditional expectation, we observe that the first and last term above become zero, while the second term becomes $\nabla^2_{ij}
\C(X^{\theta_n})$. The claim follows by using the fact that $\delta_n \rightarrow 0$ as $n\rightarrow \infty$.
\end{proof}

\begin{lemma}
\label{lemma:2spsa-grad}
For any $i = 1,\ldots,d$, we have almost surely,  
\begin{align}
\left| \E\left[\left.\dfrac{\overline \C_n^{\theta_n + \delta_n (\Delta_n + \hat \Delta_n)} -\overline \C_n^{\theta_n - \delta_n (\Delta_n + \hat \Delta_n)}}{2 \delta_n \Delta_n^i}\right| \F_n \right] - \nabla_i \C(X^{\theta_n}) \right| \rightarrow 0 \text{ as } n\rightarrow\infty.
\end{align} 
\end{lemma}
\begin{proof}
As in the proof of Lemma \ref{lemma:1spsa-bias}, we can ignore the bias from the CPT-value estimation scheme and conclude that
\begin{align*}
\quad&\E\left[\dfrac{\overline \C_n^{\theta_n + \delta_n (\Delta_n+\widehat\Delta_n)} -\overline \C_n^{\theta_n - \delta_n (\Delta_n+\widehat\Delta_n)}}{2 \delta_n \Delta_n^i} \left.\right| \F_n\right] 
\xrightarrow{n\rightarrow\infty}  \E\left[\dfrac{\C(X^{\theta_n + \delta_n \Delta_n}) -\C(X^{\theta_n - \delta_n \Delta_n})}{2 \delta_n \Delta_n^i} \left.\right| \F_n\right].  
\end{align*}
The rest of the proof amounts to showing that the RHS of the above approximates the true gradient with an $O(\delta_n^2)$ correcting term; this can be done in a similar manner as the proof of Lemma \ref{lemma:1spsa-bias}.
\end{proof}

%%%%%%%%%%%%%%%%%%%%%%%%%%%%%%%%%%%%%%%%%%%%%%%%%%%%%%%%%%%%%%%%%%%%%%%%%%%%%%%%%%%%%%%%%%%%%%%%%%%%%%%%%%%%%%%%%%%%%%%%%%%%%%%
%%%%%%%%%%%%%%%%%%%%%%%%%%%%%%%%%%%%%%%%%%%%%%%%%%%%%%%%%%%%%%%%%%%%%%%%%%%%%%%%%%%%%%%%%%%%%%%%%%%%%%%%%%%%%%%%%%%%%%%%%%%%%%%
\subsection*{Proof of Theorem \ref{thm:2spsa}}

Before we prove Theorem \ref{thm:2spsa}, we show that the Hessian recursion \eqref{eq:2spsa-H} converges to the true Hessian, for any policy $\theta$.

\begin{lemma}
\label{lemma:h-est}
For any $i, j= 1,\ldots,d$, we have almost surely,  
$$\left \| H^{i, j}_n - \nabla^2_{i,j} \C(X^{\theta_n}) \right \| \rightarrow 0,
\text{ and }\left \| \Upsilon(\overline H_n)^{-1} - \Upsilon(\nabla^2_{i,j} \C(X^{\theta_n}))^{-1} \right \| \rightarrow 0.
$$
\end{lemma}
\begin{proof}
 Follows in a similar manner as in the proofs of Lemmas 7.10 and 7.11 of \cite{Bhatnagar13SR}.
\end{proof}

\begin{proof}\textbf{\textit{(Theorem \ref{thm:2spsa})}}
The proof follows in a similar manner as the proof of Theorem 7.1 in \cite{Bhatnagar13SR}; we provide a sketch below for the sake of completeness.

We first rewrite the recursion \eqref{eq:2spsa} as follows:
For $i=1,\ldots, d$
\begin{align}
 \theta^{i}_{n+1} =& \Gamma_{i}\left(\theta^{i}_n + \gamma_n \sum_{j=1}^{d} \bar M^{i,j}(\theta_n) \nabla_{j} \C(X^\theta_n) + \gamma_n \zeta_n + \chi_{n+1} - \chi_n \right), \label{eq:pi-n}
\end{align}
where 
\begin{align*}
\bar M^{i,j}(\theta) = &\Upsilon(\nabla^2 \C(X^{\theta}))^{-1}\\
 \chi_n =& \sum_{m=0}^{n-1} \gamma_m \sum_{k=1}^{d} {\bar{M}}_{i,k}(\theta_m)\Bigg(
\frac{\C(X^{\theta_m-\delta_m\Delta_m - \delta_m\widehat\Delta_m}) -
\C(X^{\theta_m+\delta_m\Delta_m + \delta_m\widehat\Delta_m})}{2\delta_m \Delta^k_m} 
 \\
 &- E\left[\frac{\C(X^{\theta_m-\delta_m\Delta_m - \delta_m\widehat\Delta_m}) -
\C(X^{\theta_m+\delta_m\Delta_m + \delta_m\widehat\Delta_m})}{2\delta_m \Delta^k_m} 
\mid {\cal F}_m\right]\Bigg) \text{ and}\\
\zeta_n = &\E\left[\left.\dfrac{\overline \C_n^{\theta_n + \delta_n (\Delta_n + \hat \Delta_n)} -\overline \C_n^{\theta_n - \delta_n (\Delta_n + \hat \Delta_n)}}{2 \delta_n \Delta_n^i}\right| \F_n \right] - \nabla_i \C(X^{\theta_n}).
\end{align*}
In lieu of Lemmas \ref{lemma:2spsa-bias}--\ref{lemma:h-est}, it is easy to conclude that $\zeta_n \rightarrow 0$ as $n\rightarrow \infty$, $\chi_n$ is a martingale difference sequence and that $\chi_{n+1} - \chi_n \rightarrow 0$ as $n\rightarrow \infty$. 
Thus, it is easy to see that \eqref{eq:pi-n} is a discretization of the ODE:
\begin{align}
\dot\theta^{i}_t = \check\Gamma_{i}\left(- \nabla \C(X^{\theta^{i}_t}) \Upsilon(\nabla^2 \C(X^{\theta_t}))^{-1} \nabla \C(X^{\theta^{i}_t}) \right).
\label{eq:n-ode}
\end{align}
Since $\C(X^{\theta})$ serves as a Lyapunov function for the ODE \eqref{eq:n-ode}, it is easy to see that the set \\$\K = \{\theta \mid
\nabla \C(X^{\theta^{i}})  \check\Gamma_{i}\left(-\Upsilon(\nabla^2 \C(X^{\theta}))^{-1} \nabla \C(X^{\theta^{i}})\right)
=0, \forall i=1,\ldots,d\}$ is an asymptotically stable attractor set for the ODE \eqref{eq:n-ode}. The claim now follows from Kushner-Clark lemma.
\end{proof}
\end{proof}

%%%%%%%%%%%%%%%%%%%%%%%%%%%%%%%%%%%%%%%%%%%%%%%%%%%%%%%%%%%%%%%%%%%%%%%%%%%%%%%%
%%%%%%%%%%%%%%%%%%%%%%%%%%%%%%%%%%%%%%%%%%%%%%%%%%%%%%%%%%%%%%%%%%%%%%%%%%%%%%%%
%%%%%%%%%%%%%%%%%%%%%%%%%%%%%%%%%%%%%%%%%%%%%%%%%%%%%%%%%%%%%%%%%%%%%%%%%%%%%%%%

%%%%%%%%%%%%%%%%%%%%%%%%%%%%%%%%%%%%%%%%%%%%%%%%%%%%%%%%%%%%%%%%%%%%%%%%%%%%%%%%%%%%%%%%%%%%%%%%%%%%%%%%%%%%%%%%%%%%%%%%%%%%%%%
%%%%%%%%%%%%%%%%%%%%%%%%%%%%%%%%%%%%%%%%%%%%%%%%%%%%%%%%%%%%%%%%%%%%%%%%%%%%%%%%%%%%%%%%%%%%%%%%%%%%%%%%%%%%%%%%%%%%%%%%%%%%%%%

\bibliographystyle{plainnat}

\bibliography{cpt-refs}

\end{document}